\newcommand\reallywidetilde[1]{\ThisStyle{%
  \setbox0=\hbox{$\SavedStyle#1$}%
  \stackengine{-.1\LMpt}{$\SavedStyle#1$}{%
    \stretchto{\scaleto{\SavedStyle\mkern.2mu\AC}{.5150\wd0}}{.6\ht0}%
  }{O}{c}{F}{T}{S}%
}}
\newcommand{\KL}[1]{\textcolor{black}{#1}}
\newcommand{\lr}[1]{\left(#1\right)}
\newcommand{\nn}[1]{\left|#1\right|}
\newcommand{\rb}[2]{\left\{#1,#2\right\}}
\newcommand{\ib}[2]{\left[#1,#2\right]}
\newcommand{\Exterior}{\mathchoice{{\textstyle\bigwedge}}%
    {{\bigwedge}}%
    {{\textstyle\wedge}}%
    {{\scriptstyle\wedge}}}
\let\originalleft\left
\let\originalright\right
\renewcommand{\left}{\mathopen{}\mathclose\bgroup\originalleft}
\renewcommand{\right}{\aftergroup\egroup\originalright}
\theoremstyle{plain}
\newtheorem{theorem}{Theorem}[section]
\newtheorem{proposition}[theorem]{Proposition}
\newtheorem{lemma}[theorem]{Lemma}
\theoremstyle{definition}
\newtheorem{definition}[theorem]{Definition}
\theoremstyle{remark}
\newtheorem{remark}[theorem]{Remark}
\title{Efficiently Parameterized Neural Metriplectic Systems}
\author{%
  Anthony Gruber\thanks{corresponding author} \\
  Center for Computing Research\\
  Sandia National Laboratories\\
  Albuquerque, NM. USA \\
  \texttt{adgrube@sandia.gov}\\
  \And
  Kookjin Lee \\
  School of Computing and Augmented Intelligence \\
  Arizona State University \\
  Tempe, AZ. USA\\
  \texttt{kookjin.lee@asu.edu}\\
  \And
  Haksoo Lim \\
  Big Data Analytics Laboratory \\
  Yonsei University \\
  Seoul, Korea \\
  \texttt{limhaksoo96@naver.com}
  \And
  Noseong Park \\
  Big Data Analytics Laboratory \\
  Korea Advanced Institute of Science and Technology \\
  Daejeon, Korea \\
  \texttt{noseong.kaist.cs@gmail.com}
  \And
  Nathaniel Trask \\
  School of Engineering and Applied Science\\
  University of Pennsylvania \\
  Philadelphia, PA. USA \\
  \texttt{ntrask@seas.upenn.edu}}
\begin{document}

\maketitle

\begin{abstract}
Metriplectic systems are learned from data in a way that scales quadratically in both the size of the state and the rank of the metriplectic data.  Besides being provably energy conserving and entropy stable, the proposed approach comes with approximation results demonstrating its ability to accurately learn metriplectic dynamics from data as well as an error estimate indicating its potential for generalization to unseen timescales when approximation error is low.  Examples are provided which illustrate performance in the presence of both full state information as well as when entropic variables are unknown, confirming that the proposed approach exhibits superior accuracy and scalability without compromising on model expressivity.  

\end{abstract}

\section{Introduction}
\label{introduction}

The theory of metriplectic, also called GENERIC, systems \citet{morrison1986paradigm,grmela1997dynamics} provides a principled formalism for encoding dissipative dynamics in terms of complete thermodynamical systems that conserve energy and produce entropy.  By formally expressing the reversible and irreversible parts of state evolution with separate algebraic brackets, the metriplectic formalism provides a general mechanism for maintaining essential conservation laws while simultaneously respecting  dissipative effects.  Thermodynamic completeness implies that any dissipation is caught within a metriplectic system through the generation of entropy, allowing for a holistic treatment which has already found use in modeling fluids \citet{morrison1984some, morrison2009thoughts}, plasmas \citet{kaufman1982algebraic,materassi2012metriplectic}, and kinetic theories \citet{kaufman1984dissipative,holm2008kinetic}.


From a machine learning point of view, the formal separation of conservative and dissipative effects makes metriplectic systems highly appealing for the development of phenomenological models.  Given data which is physics-constrained or exhibits some believed properties, a metriplectic system can be learned to exhibit the same properties with clearly identifiable conservative and dissipative parts.  This allows for a more nuanced understanding of the governing dynamics via an evolution equation which reduces to an idealized Hamiltonian system as the dissipation is taken to zero.  Moreover, elements in the kernel of the learned conservative part are immediately understood as Casimir invariants, which are special conservation laws inherent to the phase space of solutions, and are often useful for understanding and exerting control on low-dimensional structure in the system.  On the other hand, the same benefit of metriplectic structure as a ``direct sum'' of reversible and irreversible parts makes it challenging to parameterize in an efficient way, since delicate degeneracy conditions must be enforced in the system for all time. In fact, there are no methods at present for learning general metriplectic systems which scale optimally with both dimension and the rank of metriplectic data\textemdash an issue which this work directly addresses.



Precisely, metriplectic dynamics on the finite or infinite dimensional phase space $\mathcal{P}$ are generated by a free energy function(al) $F:\mathcal{P}\to\mathbb{R}$, $F = E+S$ defined in terms of a pair $E,S:\mathcal{P}\to\mathbb{R}$ representing energy and entropy, respectively, along with two algebraic brackets $\rb{\cdot}{\cdot},\ib{\cdot}{\cdot}:C^{\infty}\lr{\mathcal{P}}\times C^{\infty}\lr{\mathcal{P}}\to C^{\infty}\lr{\mathcal{P}}$ which are bilinear derivations on $C^\infty\lr{\mathcal{P}}$ with prescribed symmetries and degeneracies $\rb{S}{\cdot}=\ib{E}{\cdot}=0$.  Here $\rb{\cdot}{\cdot}$ is an antisymmetric Poisson bracket, which is a Lie algebra realization on functions, and $\ib{\cdot}{\cdot}$ is a degenerate metric bracket which is symmetric and positive semi-definite.  When $\mathcal{P}\subset\mathbb{R}^n$ for some $n>0$, these brackets can be identified with symmetric matrix fields $~L:\mathcal{P}\to\mathrm{Skew}_n\lr{\mathbb{R}}, ~M:\mathcal{P}\to\mathrm{Sym}_n\lr{\mathbb{R}}$ satisfying $\rb{F}{G} = \nabla F\cdot~L\nabla G$ and $\ib{F}{G} = \nabla F\cdot~M\nabla G$
for all functions $F,G\in C^{\infty}\lr{\mathcal{P}}$ and all states $~x\in\mathcal{P}$.  Using the degeneracy conditions along with $\nabla~x = ~I$ and abusing notation slightly 
then leads the standard equations for metriplectic dynamics,
\begin{align*}
    \dot{~x} &= \rb{~x}{F}+\ib{~x}{F} = \rb{~x}{E} + \ib{~x}{S} = ~L\nabla E + ~M\nabla S,
\end{align*}
which are provably energy conserving and entropy producing.  To see why this is the case, recall that $~L^\intercal=-~L$. It follows that the infinitesimal change in energy satisfies
\begin{align*}
    \dot{E} &= \dot{~x}\cdot\nabla E = ~L\nabla E\cdot\nabla E + ~M\nabla S\cdot\nabla E = -~L\nabla E\cdot\nabla E + \nabla S\cdot~M\nabla E = 0,
\end{align*}
and therefore energy is conserved along the trajectory of $~x$.  Similarly, the fact that $~M^\intercal=~M$ is positive semi-definite implies that
\begin{align*}
    \dot{S} &= \dot{~x}\cdot\nabla S = L\nabla E\cdot\nabla S + ~M\nabla S\cdot\nabla S = -\nabla E\cdot~L\nabla S + ~M\nabla S\cdot\nabla S = \nn{\nabla S}_{~M}^2\geq 0,
\end{align*}
so that entropy is nondecreasing along $~x$ as well.  Geometrically, this means that the motion of a trajectory $~x$ is everywhere tangent to the level sets of energy and transverse to those of entropy, reflecting the fact that metriplectic dynamics are a combination of noncanonical Hamiltonian ($~M=~0$) and generalized gradient ($~L=~0$) motions.  Note that these considerations also imply the Lyapunov stability of metriplectic trajectories, as can be seen by taking $E$ as a Lyapunov function.  Importantly, this also implies that metriplectic trajectories which start in the (often compact) set $K = \{~x\,|\, E(~x)\leq E(~x_0)\}$ remain there for all time.

In phenomenological modeling, the entropy $S$ is typically chosen from Casimirs of the Poisson bracket generated by $~L$, i.e. those quantities $C\in C^{\infty}\lr{\mathcal{P}}$ such that $~L\nabla C=~0$.  On the other hand, the method which will be presented here, termed neural metriplectic systems (NMS), allows for all of the metriplectic data $~L,~M,E,S$ to be approximated simultaneously, removing the need for Casimir invariants to be known or assumed ahead of time.  The only restriction inherent to NMS is that the metriplectic system being approximated is nondegenerate (c.f. Definition~\ref{def:nondegen}), a mild condition meaning that the gradients of energy and entropy cannot vanish at any point $~x\in \mathcal{P}$ in the phase space.  It will be shown that NMS alleviates known issues with previous methods for metriplectic learning, leading to easier training, superior parametric efficiency, and better generalization performance.

\paragraph{Contributions.} The proposed NMS method for learning metriplectic models offers the following advantages over previous state-of-the-art: {\bf (1)} It approximates arbitrary nondegenerate metriplectic dynamics with optimal quadratic scaling in both the problem dimension $n$ and the rank $r$ of the irreversible dynamics. {\bf (2)} It produces realistic, thermodynamically consistent entropic dynamics from unobserved entropy data. {\bf (3)} It admits universal approximation and error accumulation results given in Proposition~\ref{prop:approx} and Theorem~\ref{thm:approx}. {\bf (4)} It yields exact energy conservation and entropy stability by construction, allowing for effective generalization to unseen timescales.

\section{Previous and Related Work}\label{sec:relatedwork}
Previous attempts to learn metriplectic systems from data separate into ``hard'' and ``soft'' constrained methods.  Hard constrained methods enforce metriplectic structure by construction, so that the defining properties of metriplecticity cannot be violated.  Conversely, methods with soft constraints relax some aspects of metriplectic structure in order to produce a wider model class which is easier to parameterize.  While hard constraints are the only way to truly guarantee appropriate generalization in the learned surrogate, the hope of soft constrained methods is that the resulting model is ``close enough'' to metriplectic that it will exhibit some of the favorable characteristics of metriplectic systems, such as energy and entropy stability.  Some properties of the methods compared in this work are summarized in Table~\ref{tab:comparison}.

\paragraph{Soft constrained methods.} Attempts to learn metriplectic systems using soft constraints rely on relaxing the degeneracy conditions $~L\nabla S=~M\nabla E=~0$.  This is the approach taken in \citet{hernandez2021structure}, termed SPNN, which learns an almost-metriplectic model parameterized with generic neural networks through a simple $L^2$ penalty term in the training loss, $\mathcal{L}_{\mathrm{pen}} = \nn{~L\nabla E}^2 + \nn{~M\nabla S}^2.$
This widens the space of allowable network parameterizations for the approximate operators $~L,~M$.  While the resulting model violates the first and second laws of thermodynamics, the authors show that reasonable trajectories are still obtained, at least when applied within the range of timescales used for training.  A similar approach is taken in \citet{hernandez2021deep}, which targets larger problems and develops an almost-metriplectic model reduction strategy based on the same core idea.
\vspace{-3mm}

\paragraph{Hard constrained methods.}  Perhaps the first example of learning metriplectic systems from data was given in \citet{gonzales2019thermodynamically} in the context of system identification.  Here,  training data is assumed to come from a finite element simulation, so that the discrete gradients of energy and entropy can be approximated as $~\nabla E(~x)=~A~x, \nabla S(~x)=~B~x$.  Assuming a fixed form for $~L$ produces a constrained learning problem for the constant matrices $~M,~A,~B$ which is solved to yield a provably metriplectic surrogate model.  Similarly, the work \citet{ruiz2021data} learns $~M,E$ given $~L,S$ by considering a fixed block-wise decoupled form which trivializes the degeneracy conditions, i.e. $~L = [\star\,\,~0; ~0\,\,~0]$ and $~M = [~0\,\,~0; ~0\,\,\star]$.
This line of thought is continued in \citet{xu2022learning} and \citet{xu2023equivalence}, both of which learn metriplectic systems with neural network parameterizations by assuming this decoupled block structure \KL{along with specialized forms for the metriplectic data}.  A somewhat broader class of metriplectic systems are considered in \citet{gruber2023reversible} using tools from exterior calculus, with the goal of learning metriplectic dynamics on graph data.  This leads to a structure-preserving network surrogate which scales linearly in the size of the graph domain, but also cannot express arbitrary metriplectic dynamics due to the specific modeling choices for $~L,~M$.

A particularly inspirational method for learning general metriplectic systems was given in \citet{lee2021machine} and termed GNODE, building on parameterizations of metriplectic operators developed in \citet{ottinger2014irreversible}.  GNODE parameterizes learnable reversible and irreversible bracket generating matrices $~L,~M$ in terms of state-independent tensors $~\xi\in\lr{\mathbb{R}^n}^{\otimes 3}$ and $~\zeta\in\lr{\mathbb{R}^n}^{\otimes 4}$: for $1\leq\alpha,\beta,\gamma,\mu,\nu\leq n$, the authors choose $L_{\alpha\beta}(~x) = \sum_{\gamma}\xi_{\alpha\beta\gamma}\partial^\gamma S$ and $M_{\alpha\beta}(~x) = \sum_{\mu,\nu} \zeta_{\alpha\mu,\beta\nu} \partial^\mu E \partial^\nu E,$
where $\partial^\alpha F = \partial F/\partial x_\alpha$, $~\xi$ is totally antisymmetric, and $~\zeta$ is symmetric between the pairs $(\alpha,\mu)$ and $(\beta,\nu)$ but antisymmetric within each of these pairs.
The key idea here is to exchange the problem of enforcing degeneracy conditions $~L\nabla E=~M\nabla S=~0$ in matrix fields $~L,~M$ with the problem of enforcing symmetry conditions in tensor fields $~\xi,~\zeta$, which is comparatively easier but comes at the expense of underdetermining the problem.  In GNODE, enforcement of these symmetries is handled by a generic learnable 3-tensor $\tilde{~\xi}\in\lr{\mathbb{R}^n}^{\otimes 3}$ along with learnable matrices $~D \in \mathrm{Sym}_r\lr{\mathbb{R}}$ and $~\Lambda^s\in\mathrm{Skew}_n\lr{\mathbb{R}}$ for $1\leq s\leq r\leq n$, leading to the final parameterizations $\xi_{\alpha\beta\gamma} = \frac{1}{3!}\left( \tilde{\xi}_{\alpha\beta\gamma} - \tilde{\xi}_{\alpha\gamma\beta} + \tilde{\xi}_{\beta\gamma\alpha} - \tilde{\xi}_{\beta\alpha\gamma} + \tilde{\xi}_{\gamma\alpha\beta} - \tilde{\xi}_{\gamma\beta\alpha} \right)$ and $\zeta_{\alpha\mu,\beta\nu} = \sum_{s,t} \Lambda_{\alpha\mu}^s D_{st} \Lambda_{\beta\nu}^t.$
Along with learnable energy and entropy functions $E,S$ parameterized by multi-layer perceptrons (MLPs), the data $~L,~M$ learned by GNODE 
guarantees metriplectic structure in the surrogate model and leads to successful learning of metriplectic systems in some simple cases of interest.  However, note that this is a highly redundant parameterization requiring ${n\choose 3}+r{n\choose 2}+{{r+1}\choose 2}+2$ learnable scalar functions, which exhibits $\mathcal{O}(n^3+rn^2)$ scaling in the problem size because of the necessity to compute and store ${n\choose 3}$ entries of $~\xi$ and $r{n\choose 2}$ entries of $~\Lambda$.  Additionally, the assumption of state-independence in the bracket generating tensors $~\xi,~\zeta$ is somewhat restrictive, limiting the class of problems to which GNODE can be applied.

A related approach to learning metriplectic dynamics with hard constraints was seen in \citet{zhang2022gfinns}, which proposed a series of GFINN architectures depending on how much of the information $~L,~M,E,S$ is assumed to be known.  In the case that $~L,~M$ are known, the GFINN energy and entropy are parameterized with scalar-valued functions $f\circ ~P_{\mathrm{ker}~A}$ where $f:\mathbb{R}^n\to\mathbb{R}$ ($E$ or $S$) is learnable and $~P_{\mathrm{ker}~A}:\mathbb{R}^n\to\mathbb{R}^n$ is orthogonal projection onto the kernel of the (known) operator $~A$ ($~L$ or $~M$).  It follows that the gradient $\nabla\lr{f\circ~P_{\mathrm{ker}~A}} = ~P_{\mathrm{ker}~A}\nabla f\lr{P_{\mathrm{ker}~A}}$ lies in the kernel of $~A$, so that the degeneracy conditions are guaranteed at the expense of constraining the model class of potential energies/entropies.  Alternatively, in the case that all of $~L,~M,E,S$ are unknown, GFINNs use learnable scalar functions $f$ for $E,S$ parameterized by MLPs as well as two matrix fields $~Q^E,~Q^S\in\mathbb{R}^{r\times n}$ with rows given by $~q^f_s = \lr{~S^f_s\nabla f}^\intercal$ for learnable skew-symmetric matrices $~S^f_s$, $1\leq s\leq r$, $f=E,S$.  Along with two triangular $(r\times r)$ matrix fields $~T_{~L}, ~T_{~M}$, this yields the parameterizations
$~L(~x) = ~Q^S(~x)^\intercal\lr{~T_{~L}(~x)^\intercal-~T_{~L}(~x)}~Q^S(~x)$ and $~M(~x) = ~Q^E(~x)^\intercal\lr{~T_{~M}(~x)^\intercal~T_{~M}(~x)}~Q^E(~x)$,
which necessarily satisfy the symmetries and degeneracy conditions required for metriplectic structure.  GFINNs are shown to both increase expressivity over the GNODE method as well as decrease redundancy, since the need for an explicit order-3 tensor field is removed and the reversible and irreversible brackets are allowed to depend explicitly on the state $~x$.  However, GFINNs still exhibit cubic scaling through the requirement of $rn(n-1)+r^2+2 = \mathcal{O}\lr{rn^2}$ learnable functions, which is well above the theoretical minimum required to express a general metriplectic system and leads to difficulties in training the resulting models.

\paragraph{Model reduction.}  Finally, it is worth mentioning the closely related line of work involving model reduction for metriplectic systems, which began with \citet{ottinger2015preservation}.  As remarked there, preserving metriplecticity in reduced-order models (ROMs) exhibits many challenges due to its delicate requirements on the kernels of the involved operators. There are also hard and soft constrained approaches: the already mentioned \citet{hernandez2021deep} aims to learn a close-to-metriplectic data-driven ROM by enforcing degeneracies by penalty, while \citet{gruber2023energetically} directly enforces metriplectic structure in projection-based ROMs using exterior algebraic factorizations.  The parameterizations of metriplectic data presented here are related to those presented in \citet{gruber2023energetically}, although NMS does not require access to nonzero components of the gradients $\nabla E, \nabla S$. \KL{It is also worth mentioning the influential monograph \cite{dorst2007geometric} which contains many examples of exterior algebraic methods for computer scientific applications.}


\section{Formulation and Analysis}
\label{sec:formulation}

The proposed formulation of the metriplectic bracket-generating operators $~L,~M$ used by NMS is based on the idea of exploiting structure in the tensor fields $~\xi,~\zeta$ to reduce the necessary number of degrees of freedom.  In particular, it will be shown that the degeneracy conditions $~L\nabla S =~M\nabla E=~0$ imply more than just symmetry constraints on these fields, and that taking these additional constraints into account allows for a more compact representation of metriplectic data.  Following this, results are presented which show that the proposed formulation is universally approximating on nondegenerate systems (c.f. Definition~\ref{def:nondegen}) and admits a generalization error bound in time.

\subsection{Exterior algebra}
Developing these metriplectic expressions will require some basic facts from exterior algebra, of which more details can be found in, e.g., \citet[Chapter 19]{tu2017differential}.  The basic objects in the exterior algebra $\Exterior V$ over the vector space $V$ are multivectors, which are formal linear combinations of totally antisymmetric tensors on $V$.  More precisely, if $I(V)$ denotes the two-sided ideal of the free tensor algebra $T(V)$ generated by elements of the form $~v\otimes~v$ ($~v\in V$), then the exterior algebra is the quotient space $\Exterior V \simeq T(V)/I(V)$ equipped with the antisymmetric wedge product operation $\wedge$.  This graded algebra is equipped with natural projection operators $P^k: \Exterior V \to \Exterior^k V$ which map between the full exterior algebra and the $k^{\mathrm{th}}$ exterior power of $V$, denoted $\Exterior^k V$, whose elements are homogeneous $k$-vectors.  More generally, given an $n$-manifold $M$ with tangent bundle $TM$, the exterior algebra $\Exterior\lr{TM}$ is the algebra of multivector fields whose fiber over $x\in M$ is given by $\Exterior T_xM$.

For the present purposes, it will be useful to develop a correspondence between bivectors $\mathsf{B}\in\Exterior^2\lr{\mathbb{R}^n}$ and skew-symmetric matrices $~B\in\mathrm{Skew}_n\lr{\mathbb{R}}$, which follows directly from Riesz representation in terms of the usual Euclidean dot product.  More precisely, supposing that $~e_1,...,~e_n$ are the standard basis vectors for $\mathbb{R}^n$, any bivector $\mathsf{B}\in\Exterior^2T\mathbb{R}^n$ can be represented as $\mathsf{B}=\sum_{i<j}B^{ij}~e_i\wedge~e_j$ where $B^{ij}\in\mathbb{R}$ denote the components of $\mathsf{B}$.  Define a grade-lowering action of bivectors on vectors through right contraction (see e.g. Section 3.4 of \citet{dorst2007geometric}), expressed for any vector $~v$ and basis bivector $~e_i\wedge~e_j$ as
$\lr{~e_i\wedge~e_j}\cdot~v = \lr{~e_j\cdot~v}~e_i - \lr{~e_i\cdot~v}~e_j.$  It follows that the action of $\mathsf{B}$ is equivalent to
\begin{align*}
    \mathsf{B}\cdot~v &= \sum_{i<j}B^{ij}\lr{\lr{~e_j\cdot~v}~e_i - \lr{~e_i\cdot~v}~e_j} = \sum_{i<j} B^{ij}v_j~e_i - \sum_{j<i} B^{ji}v_j~e_i = \sum_{i,j} B^{ij}v_j~e_i = ~B~v,
\end{align*}
where $~B^\intercal=-~B\in\mathbb{R}^{n\times n}$ is a skew-symmetric matrix representing $\mathsf{B}$, and we have re-indexed under the second sum and applied that $B^{ij}=-B^{ji}$ for all $i,j$.  Since the kernel of this action is the zero bivector, it is straightforward to check that this string of equalities defines an isomorphism $\mathcal{M}:\Exterior^2\mathbb{R}^n\to\mathrm{Skew}_n\lr{\mathbb{R}}$ from the $2^{\mathrm{nd}}$ exterior power of $\mathbb{R}^n$ to the space of skew-symmetric $(n\times n)$-matrices over $\mathbb{R}$: in what follows, we will write $~B\simeq\mathsf{B}$ rather than $~B=\mathcal{M}(\mathsf{B})$ for notational convenience.  Note that a correspondence in the more general case of bivector/matrix fields follows in the usual way via the fiber-wise extension of $\mathcal{M}$.

\subsection{Learnable metriplectic operators}

It is now possible to explain the proposed NMS formulation. First, note the following key definition which prevents the consideration of  unphysical examples. 
\begin{definition}\label{def:nondegen}
    A metriplectic system on $K\subset\mathbb{R}^n$ generated by the data $~L,~M,E,S$ will be called \textit{nondegenerate} provided $\nabla E,\nabla S\neq ~0$ for all $~x\in K$.
\end{definition}

With this, the NMS parameterizations for metriplectic operators are predicated on an algebraic result proven in Appendix~\ref{app:proofs}.
\begin{lemma}\label{lem:params}
    Let $K\subset\mathbb{R}^n$.  For all $~x\in K$, the operator $~L:K\to\mathbb{R}^{n\times n}$ satisfies $~L^\intercal = -~L$ and $~L\nabla S = ~0$ for some $S:K\to\mathbb{R}$, $\nabla S \neq ~0$, provided there exists a non-unique bivector field $\mathsf{A}:U\to\Exterior^2\mathbb{R}^n$ and equivalent matrix field $~A\simeq\mathsf{A}$ such that
    \begin{align*}
        ~L &\simeq \lr{\mathsf{A} \wedge \frac{\nabla S}{\nn{\nabla S}^2}}\cdot\nabla S = \mathsf{A} - \frac{1}{\nn{\nabla S}^2}~A\nabla S\wedge \nabla S.
    \end{align*}
    Similarly, for all $~x\in K$ a positive semi-definite operator $~M:K\to\mathbb{R}^{n\times n}$ satisfies $~M^\intercal = ~M$ and $~M\nabla E = ~0$ for some $E:K\to\mathbb{R}$, $\nabla E \neq ~0$, provided there exists a non-unique matrix-valued $~B:K\to\mathbb{R}^{n\times r}$ and symmetric matrix-valued $~D:K\to\mathbb{R}^{r\times r}$ such that $r\leq n$ and
    \begin{align*}
        ~M &= \sum_{s,t}D_{st}\lr{~b^s\wedge \frac{\nabla E}{\nn{\nabla E}^2}}\cdot\nabla E \,\otimes\,\lr{~b^t\wedge \frac{\nabla E}{\nn{\nabla E}^2}}\cdot\nabla E \\
        &= \sum_{s,t}D_{st}\lr{~b^s - \frac{~b^s\cdot\nabla E}{\nn{\nabla E}^2}\nabla E}\lr{~b^t - \frac{~b^t\cdot\nabla E}{\nn{\nabla E}^2}\nabla E}^\intercal,
    \end{align*}
    where $~b^s$ denotes the $s^{\mathrm{th}}$ column of $~B$. Moreover, using $~P^\perp_f = \lr{~I-\frac{\nabla f\,\nabla f^\intercal}{\nn{\nabla f}^2}}$ to denote the orthogonal projector onto $\mathrm{Span}\lr{\nabla f}^\perp$, these parameterizations of $~L,~M$ are equivalent to the matricized expressions $~L=~P^\perp_S~A~P^\perp_S$ and $~M=~P^\perp_E~B~D~B^\intercal~P^\perp_E.$
\end{lemma}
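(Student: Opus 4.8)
The plan is to reduce the entire statement to two elementary facts about the exterior algebra of Section~\ref{sec:formulation}: the graded Leibniz (antiderivation) rule obeyed by the right contraction, $\lr{\mathsf{X}\wedge\mathsf{Y}}\cdot~v = \mathsf{X}\wedge\lr{\mathsf{Y}\cdot~v} + (-1)^{|\mathsf{Y}|}\lr{\mathsf{X}\cdot~v}\wedge\mathsf{Y}$, and the matricization dictionary, of which I will use the extra identity $\mathcal{M}\lr{~u\wedge~v} = ~u~v^\intercal - ~v~u^\intercal$ for vectors $~u,~v$ (immediate from $\lr{~u\wedge~v}\cdot~x = \lr{~v\cdot~x}~u - \lr{~u\cdot~x}~v$ by bilinearity). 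It is convenient to abbreviate $~w_f = \nabla f/\nn{\nabla f}^2$, so that $~w_f\cdot\nabla f = 1$ under the nondegeneracy assumption $\nabla f\neq~0$ (Definition~\ref{def:nondegen}).

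I would first treat $~L$. Contracting the trivector $\mathsf{A}\wedge~w_S$ on the right by $\nabla S$ and applying the Leibniz rule (with $~w_S$ a vector) gives
\begin{align*}
    \lr{\mathsf{A}\wedge~w_S}\cdot\nabla S &= \mathsf{A}\,\lr{~w_S\cdot\nabla S} - \lr{\mathsf{A}\cdot\nabla S}\wedge~w_S = \mathsf{A} - \lr{~A\nabla S}\wedge~w_S,
\end{align*}
where $\mathsf{A}\cdot\nabla S = ~A\nabla S$ by the bivector/matrix isomorphism $\mathcal{M}$; this is exactly the second displayed form of $~L$. Matricizing with the dictionary above together with $~A^\intercal=-~A$ gives $~L = ~A - \nn{\nabla S}^{-2}\lr{~A\nabla S\,\nabla S^\intercal + \nabla S\,\nabla S^\intercal~A}$. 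Expanding $~P^\perp_S~A~P^\perp_S$ yields these same three terms together with a further term proportional to the scalar $\nabla S^\intercal~A\nabla S$, which vanishes by skew-symmetry of $~A$; hence $~L = ~P^\perp_S~A~P^\perp_S$. The asserted properties then follow at once: $~L^\intercal = ~P^\perp_S~A^\intercal~P^\perp_S = -~L$ because $~P^\perp_S$ is symmetric, and $~L\nabla S = ~0$ because $~P^\perp_S\nabla S = ~0$.

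The operator $~M$ is handled by the same argument applied column by column. Setting $~m^s = \lr{~b^s\wedge~w_E}\cdot\nabla E$, the Leibniz rule gives $~m^s = ~b^s - \lr{~b^s\cdot\nabla E}~w_E = ~P^\perp_E~b^s$, whence
\begin{align*}
    ~M = \sum_{s,t}D_{st}\,~m^s\otimes~m^t = ~P^\perp_E\lr{\sum_{s,t}D_{st}\,~b^s\lr{~b^t}^\intercal}~P^\perp_E = ~P^\perp_E~B~D~B^\intercal~P^\perp_E.
\end{align*}
Symmetry of $~M$ follows from $~D^\intercal=~D$, the degeneracy $~M\nabla E=~0$ from $~P^\perp_E\nabla E=~0$, and positive semi-definiteness from $~x^\intercal~M~x = \lr{~B^\intercal~P^\perp_E~x}^\intercal~D\lr{~B^\intercal~P^\perp_E~x}$ once $~D$ is positive semi-definite, which is the case relevant to metriplectic structure. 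Non-uniqueness is transparent from these formulas: $\mathsf{A}$ may be replaced by $\mathsf{A}+~u\wedge\nabla S$ for any vector field $~u$, and $~B$ by any $~B'$ with $~P^\perp_E~B'=~P^\perp_E~B$, without altering $~L$ or $~M$; conversely, a given skew $~L$ with $~L\nabla S=~0$ is reproduced by $\mathsf{A}\simeq~L$ (for which $~A\nabla S=~0$ automatically), and a given symmetric positive semi-definite $~M$ with $~M\nabla E=~0$ by reading $~B,~D$ off an eigendecomposition $~M=\sum_s\lambda_s~v_s~v_s^\intercal$, whose eigenvectors with $\lambda_s>0$ lie in $\mathrm{Span}\lr{\nabla E}^\perp$.

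I expect the only real obstacle to be the first step for $~L$: one must keep the sign in the graded Leibniz rule consistent with the particular right-contraction convention recorded in Section~\ref{sec:formulation}, and then apply $\mathcal{M}$ with care for grades and transposes. Once this bookkeeping is settled, the remainder is routine linear algebra.
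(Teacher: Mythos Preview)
Your proposal is correct and arrives at the same destination as the paper, but the logical order and some of the tools differ in ways worth noting.  The paper works from the bivector form $\mathsf{A} - \nn{\nabla S}^{-2}~A\nabla S\wedge\nabla S$ directly: it first verifies $~L^\intercal=-~L$ via the reversion anti-involution on $\Exterior^2\mathbb{R}^n$, then checks $~L\nabla S=~0$ by computing $(\mathsf{A} - \nn{\nabla S}^{-2}~A\nabla S\wedge\nabla S)\cdot\nabla S$ with the basis-bivector contraction rule, and only afterward matricizes to obtain $~P^\perp_S~A~P^\perp_S$.  You instead invoke the graded Leibniz rule to derive the bivector form from the trivector expression $\lr{\mathsf{A}\wedge~w_S}\cdot\nabla S$ (a step the paper simply asserts in the lemma statement and never proves), then matricize immediately and read off skew-symmetry and degeneracy from the projected form $~P^\perp_S~A~P^\perp_S$.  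Your route is arguably more economical, since once the matricized expression is in hand the structural properties are one-liners; the paper's route keeps more of the argument inside the exterior algebra, which is consistent with its emphasis on the exterior-algebraic expressions as the computationally preferred ones.  Your closing remarks on non-uniqueness and on recovering $~A,~B,~D$ from a given $~L,~M$ go beyond what the lemma requires---the converse is deferred to Theorem~\ref{thm:main} in the paper---but they are correct and do no harm.
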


\begin{remark}
    Observe that the projections appearing in these expressions are the minimum necessary for guaranteeing the symmetries and degeneracy conditions necessary for a general metriplectic structure.  In particular, conjugation by $~P^\perp_f$ respects symmetry and ensures that both the input and output to the conjugated matrix field lie in $\mathrm{Span}(\nabla f)^\perp$.  Note that this implies that the class of nondegenerate metriplectic systems cannot be parameterized in a way that scales sub-quadratically in the state dimension $n$.
\end{remark}

Lemma~\ref{lem:params} demonstrates specific parameterizations for $~L,~M$ that hold for any nondegenerate metriplectic data and are core to the NMS method for learning metriplectic dynamics.  While generally underdetermined, these expressions are in a sense maximally specific given no additional information, since there is nothing available in the general metriplectic formalism to determine the matrix fields $~A,~B~D~B^\intercal$ on $\mathrm{Span}\lr{\nabla S}, \mathrm{Span}\lr{\nabla E}$, respectively.  The following Theorem, also proven in Appendix~\ref{app:proofs}, provides a rigorous correspondence between metriplectic systems and these particular parameterizations.

\begin{theorem}\label{thm:main}
    The data $~L,~M,E,S$ form a nondegenerate metriplectic system in the state variable $~x\in K$ if and only if there exist a skew-symmetric $~A:K\to\mathrm{Skew}_n\lr{\mathbb{R}}$, symmetric postive semidefinite $~D:K\to\mathrm{Sym}_r\lr{\mathbb{R}}$, and generic $~B:K\to\mathbb{R}^{n\times r}$ such that
    \begin{align*}
        \dot{~x} &= ~L\nabla E + ~M\nabla S = ~P^\perp_{S}~A~P^\perp_{S}\nabla E + ~P^\perp_E~B~D~B^\intercal~P^\perp_E\nabla S.
    \end{align*}
\end{theorem}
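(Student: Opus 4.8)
The plan is to prove the two implications of the equivalence separately, working from the characterization of (nondegenerate) metriplectic data recalled in Section~\ref{introduction}: the data $~L,~M,E,S$ form a nondegenerate metriplectic system on $K$ exactly when $~L(~x)\in\mathrm{Skew}_n\lr{\mathbb{R}}$, $~M(~x)\in\mathrm{Sym}_n\lr{\mathbb{R}}$ is positive semidefinite, $~L\nabla S = ~M\nabla E = ~0$, and $\nabla E,\nabla S\neq~0$ for all $~x\in K$ (i.e. the system is nondegenerate in the sense of Definition~\ref{def:nondegen}) --- precisely the conditions making the $\dot E$ and $\dot S$ computations of Section~\ref{introduction} yield energy conservation and entropy production. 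Everything then reduces to linear algebra combined with Lemma~\ref{lem:params}.

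The reverse implication is almost immediate from Lemma~\ref{lem:params}. Given skew-symmetric $~A$, symmetric positive semidefinite $~D = ~C~C^\intercal$, and generic $~B$, set $~L := ~P^\perp_S~A~P^\perp_S$ and $~M := ~P^\perp_E~B~D~B^\intercal~P^\perp_E$; these are well defined because writing $\dot{~x}$ in the stated form already presupposes $\nabla E,\nabla S\neq~0$ on $K$. By Lemma~\ref{lem:params}, $~L$ is skew-symmetric with $~L\nabla S = ~0$, while $~M = \lr{~P^\perp_E~B~C}\lr{~P^\perp_E~B~C}^\intercal$ is symmetric positive semidefinite with $~M\nabla E = ~0$. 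Hence $~L,~M,E,S$ is a nondegenerate metriplectic system whose evolution equation is, by construction, $\dot{~x} = ~L\nabla E + ~M\nabla S$.

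For the forward implication, suppose $~L,~M,E,S$ is nondegenerate metriplectic. First I would record the absorption identity: if $~C\nabla f = ~0$ and $~C^\intercal = \pm~C$, then $~P^\perp_f~C~P^\perp_f = ~C$, since $~C~P^\perp_f = ~C - \lr{~C\nabla f}\nabla f^\intercal/\nn{\nabla f}^2 = ~C$ and then $~P^\perp_f~C = ~C - \nabla f\lr{\nabla f^\intercal~C}/\nn{\nabla f}^2 = ~C$ because $\nabla f^\intercal~C = \pm\lr{~C\nabla f}^\intercal = ~0$. Applying this with $~C = ~L$, $f = S$ shows that $~A := ~L$ is skew-symmetric with $~L = ~P^\perp_S~A~P^\perp_S$. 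For $~M$, the spectral theorem supplies an orthonormal eigenbasis of $~M$; collecting the $r := \mathrm{rank}(~M) \leq n$ eigenvectors with nonzero (hence positive) eigenvalues into $~B\in\mathbb{R}^{n\times r}$ and the corresponding eigenvalues into a diagonal $~D\in\mathrm{Sym}_r\lr{\mathbb{R}}$ gives $~B~D~B^\intercal = ~M$ with $~D$ positive semidefinite (alternatively one may simply take $r = n$, $~B = ~I$, $~D = ~M$). Taking $~C = ~M$, $f = E$ in the absorption identity then yields $~M = ~P^\perp_E~M~P^\perp_E = ~P^\perp_E~B~D~B^\intercal~P^\perp_E$, and summing the two contributions produces the asserted form of $\dot{~x}$.

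I do not expect a substantial obstacle beyond bookkeeping: the theorem is essentially Lemma~\ref{lem:params} packaged with the elementary absorption identity and the spectral factorization of $~M$, and the constructed data $~A,~B,~D$ are highly non-unique, consistent with the ``non-unique'' language of Lemma~\ref{lem:params}. The one point I would be careful about is the reading of ``metriplectic system'': I would state explicitly that it is meant here at the level of the bracket-generating matrix fields of Section~\ref{introduction}, so that the content of the statement is exactly the equivalence of the requisite symmetry and degeneracy structure with the doubly projected parameterization; should one additionally wish to impose the Jacobi identity on the Poisson bracket induced by $~L$, it passes unchanged through the forward implication (where $~A = ~L$) and can be carried as an extra constraint on $~A$ in the reverse one. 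Finally, I would note that nondegeneracy enters twice --- it makes $~P^\perp_S$ and $~P^\perp_E$ well defined, and it rules out introducing any spurious degeneracy --- echoing the remark following Lemma~\ref{lem:params}.
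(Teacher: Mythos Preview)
Your proposal is correct and follows essentially the same approach as the paper: the reverse implication is deferred to Lemma~\ref{lem:params}, and the forward implication proceeds by taking $~A=~L$ together with the spectral factorization $~M=~U~\Lambda~U^\intercal$ (your $~B,~D$), using the same projection-absorption observation $~P^\perp_f~C~P^\perp_f=~C$ when $~C\nabla f=~0$. Your write-up is in fact slightly more explicit than the paper's, spelling out the absorption identity and the Cholesky-type factorization of $~D$ for positive semidefiniteness of $~M$.
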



\begin{remark}
    As mentioned, the proposed parameterizations for $~L,~M$ are not one-to-one but properly contain the set of valid nondegenerate metriplectic systems in $E,S$. \KL{This is due to both the orthogonal projections employed, which induce non-uniqueness in the $~L,~M$ matrix fields, as well as the lack of a guaranteed Jacobi identity for $~L$.  Recall that the Jacobi identity is necessary for the matrix field $~L$ to generate a true Poisson manifold structure, and hence for $\dot{~x}$ to be the velocity field of a truly metriplectic system.} For $1\leq i,j,k\leq n$, the Jacobi identity is given in components as $\sum_\ell L_{i\ell} \partial^\ell L_{jk} + L_{j\ell}\partial^\ell L_{ki} + L_{k\ell}\partial^\ell L_{ij} = 0.$
    However, this requirement is not often enforced in algorithms for learning general metriplectic (or even symplectic) systems, since it is considered subordinate to energy conservation and it is well known that both qualities cannot hold simultaneously in general \citet{zhong1988lie}. \KL{Future work will compare the alternative choice of parameterizations which enforce the Jacobi identity explicitly, maintaining Poisson structure while sacrificing strong-form energy conservation.}
\end{remark}

\subsection{Specific parameterizations}  
Now that Theorem~\ref{thm:main} has provided a model class which is rich enough to express any desired metriplectic system, it remains to discuss what NMS actually learns.  First, note that it is unlikely to be the case that $E,S$ are known \textit{a priori}, so it is beneficial to allow these functions to be learnable alongside the governing operators $~L,~M$.  For simplicity, energy and entropy $E,S$ are parameterized as scalar-valued MLPs with $\tanh$ activation, although any desired architecture could be chosen for this task.  The skew-symmetric matrix field $~A = ~A_{\mathrm{tri}}-~A_{\mathrm{tri}}^\intercal$ used to build $~L$ is parameterized through its strictly lower-triangular part $~A_{\mathrm{tri}}$ using a vector-valued MLP with output dimension ${n\choose 2}$, which guarantees that the mapping $~A_{\mathrm{tri}}\mapsto~A$ above is bijective. 
Similarly, the symmetric matrix field $~D = ~K_{\mathrm{chol}}~K_{\mathrm{chol}}^\intercal$ is parameterized through its lower-triangular Cholesky factor $~K_{\mathrm{chol}}$, which is a vector-valued MLP with output dimension ${r+1 \choose 2}$. While this choice does not yield a bijective mapping $~K_{\mathrm{chol}}\mapsto~D$ unless, e.g., $~D$ is assumed to be positive definite with diagonal entries of fixed sign, this does not hinder the method in practice.  In fact, $~D$ should not be positive definite in general, as this is equivalent to claiming that $~M$ is positive definite on vectors tangent to the level sets of $E$.  Finally, the generic matrix field $~B$ is parameterized as a vector-valued MLP with output dimension $nr$. Remarkably, the exterior algebraic expressions in Lemma~\ref{lem:params} require less redundant operations than the corresponding matricized expressions in Theorem~\ref{thm:main}, and therefore the expressions from Lemma~\ref{lem:params} are used when implementing NMS. \KL{This can be seen from the proof of Lemma~\ref{lem:params} in Appendix~\ref{app:proofs}, where matricization of these expressions requires computing a term which is identically zero due to symmetry considerations.}  Figure~\ref{fig:architecture} summarizes the proposed NMS architecture.  

\begin{remark}
    With these choices, the NMS parameterization of metriplectic systems requires only $(1/2)\lr{(n+r)^2-(n-r)}+2$ learnable scalar functions, in contrast to ${n\choose 3}+r{n\choose 2}+{r+1\choose 2}+2$ for the GNODE approach in \citet{lee2021machine} and $rn(n-1)+r^2+2$ for the GFINN approach in \citet{zhang2022gfinns}.  In particular, NMS is quadratic in both $n,r$ with no decrease in model expressivity, in contrast to the cubic scaling of previous methods. 
\end{remark}
\vspace{-4mm}

\begin{minipage}{\linewidth}
    \begin{minipage}[c]{0.375\linewidth}
    \centering
    \setlength{\tabcolsep}{2pt}
    \captionof{table}{Properties of the metriplectic architectures compared.}
    \vspace{2mm}
    \resizebox{\columnwidth}{!}{
        \begin{tabular}[c]{ c c c c }
          \toprule
          Name & Physics Bias & Restrictive & Scale \\
          \midrule
          NODE & None & No & Linear \\
          SPNN & Soft & No & Quadratic \\
          GNODE & Hard & Yes & Cubic \\
          GFINN & Hard & No & Cubic \\
          NMS & Hard & No & Quadratic \\
          \bottomrule
        \end{tabular}}
        \label{tab:comparison}
    \end{minipage}
    \hfill
    \begin{minipage}[c]{0.6\linewidth}
        \centering
        \includegraphics[width=\textwidth]{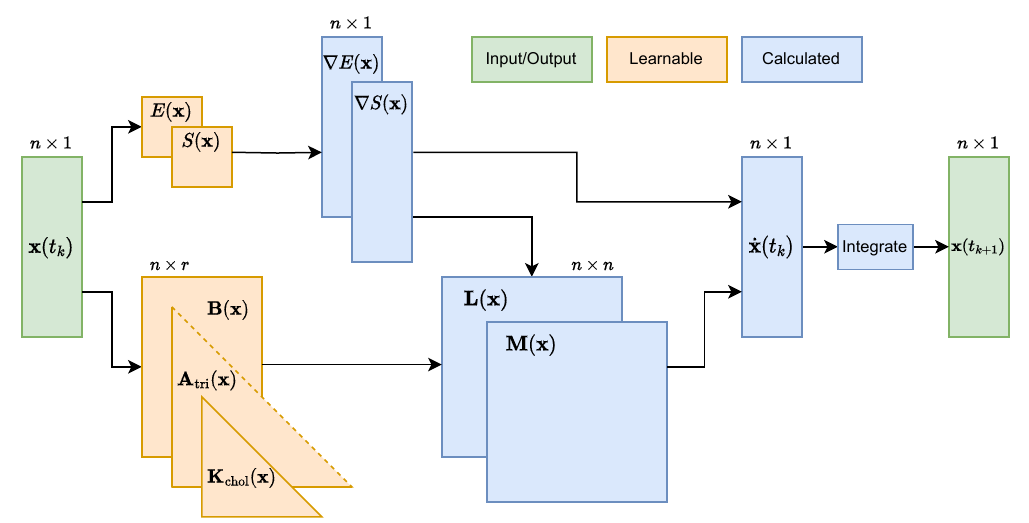}
        \captionof{figure}{A visual depiction of the NMS architecture.}
        \label{fig:architecture}
    \end{minipage}
\end{minipage}

\subsection{Approximation and error}
Besides offering a compact parameterization of metriplectic dynamics, the expressions used in NMS also exhibit desirable approximation properties which guarantee a reasonable bound on state error over time.  To state this precisely, first note the following universal approximation result proven in Appendix~\ref{app:proofs}.

\begin{proposition}\label{prop:approx}
    Let $K\subset \mathbb{R}^n$ be compact and $E,S: K\to\mathbb{R}$ be continuous such that $~L\nabla S = ~M\nabla E = ~0$ and $\nabla E, \nabla S\neq ~0$ for all $~x\in K$.  For any $\varepsilon>0$, there exist two-layer neural network functions $\tilde{E},\tilde{S}:K\to\mathbb{R},\tilde{~L}:K\to\mathrm{Skew}_n\lr{\mathbb{R}}$ and $\tilde{~M}:K\to\mathrm{Sym}_n\lr{\mathbb{R}}$ such that $\nabla\tilde{E},\nabla\tilde{S}\neq~0$ on $K$, $\tilde{~M}$ is positive semi-definite, $\tilde{~L}\nabla \tilde{S} = \tilde{~M}\nabla \tilde{E}=~0$ for all $~x\in K$, and each approximate function is $\varepsilon$-close to its given counterpart on $K$.  Moreover, if $~L,~M$ have $k\geq 0$ continuous derivatives on $K$ then so do $\tilde{~L},\tilde{~M}$.
\end{proposition}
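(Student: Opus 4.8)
The plan is to construct $\tilde{E},\tilde{S},\tilde{~L},\tilde{~M}$ directly in the form of Lemma~\ref{lem:params}, so that the skew/symmetry, positive semidefiniteness, and degeneracy conditions hold \emph{exactly} by construction, leaving only the quantitative closeness to be verified. First I would handle the potentials (which we may take in $C^1$, since their gradients appear in the hypothesis). Because $\nabla E,\nabla S$ are continuous and nonvanishing on the compact set $K$, there is $\delta>0$ with $\nn{\nabla E},\nn{\nabla S}\ge\delta$ on $K$. By a universal approximation theorem that is sensitive to first derivatives (one hidden layer; $\tanh$ is $C^\infty$, bounded, and nonpolynomial, so this applies), choose two-layer networks $\tilde{E},\tilde{S}$ with $\sup_K\nn{\tilde{E}-E}$, $\sup_K\nn{\nabla\tilde{E}-\nabla E}$, and the analogous quantities for $S$ all below a tolerance $\eta$ to be fixed at the end. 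Taking $\eta<\delta$ gives $\nn{\nabla\tilde{E}},\nn{\nabla\tilde{S}}\ge\delta-\eta>0$ on $K$, so the projectors $\tilde{~P}^\perp_E=~I-\nabla\tilde{E}\,\nabla\tilde{E}^\intercal/\nn{\nabla\tilde{E}}^2$ and $\tilde{~P}^\perp_S$ (defined analogously) are well defined; since $~v\mapsto~v\,~v^\intercal/\nn{~v}^2$ is Lipschitz on $\{\nn{~v}\ge\delta/2\}$, we get $\sup_K\nn{\tilde{~P}^\perp_E-~P^\perp_E},\sup_K\nn{\tilde{~P}^\perp_S-~P^\perp_S}\le c\eta$ for a constant $c=c(\delta)$.

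Next I would assemble the operators. The degeneracy hypotheses together with $~L^\intercal=-~L$, $~M^\intercal=~M$ give $~L=~P^\perp_S~L~P^\perp_S$ and $~M=~P^\perp_E~M~P^\perp_E$ (using $\nabla S^\intercal~L=-(~L\nabla S)^\intercal=~0$ and $\nabla E^\intercal~M=(~M\nabla E)^\intercal=~0$). Approximate the entries of $~L$ by a two-layer network with matrix-valued output and skew-symmetrize to obtain $\tilde{~A}\in\mathrm{Skew}_n$ with $\sup_K\nn{\tilde{~A}-~L}\le\eta$; take the continuous positive semidefinite square root $~G=~M^{1/2}$ (continuous since $~M$ is continuous and positive semidefinite), approximate its entries by a two-layer matrix-valued network $\tilde{~G}$ with $\sup_K\nn{\tilde{~G}-~G}\le\eta$, and set $\tilde{~C}=\tilde{~G}\tilde{~G}^\intercal$, which is automatically symmetric positive semidefinite and satisfies $\sup_K\nn{\tilde{~C}-~M}\le c'\eta$ because $~G$ is bounded on $K$. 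Finally define
\begin{align*}
    \tilde{~L}=\tilde{~P}^\perp_S\,\tilde{~A}\,\tilde{~P}^\perp_S,\qquad \tilde{~M}=\tilde{~P}^\perp_E\,\tilde{~C}\,\tilde{~P}^\perp_E.
\end{align*}
Conjugation by the symmetric projectors preserves skewness, symmetry, and positive semidefiniteness, while $\tilde{~P}^\perp_S\nabla\tilde{S}=~0$ and $\tilde{~P}^\perp_E\nabla\tilde{E}=~0$ force $\tilde{~L}\nabla\tilde{S}=\tilde{~M}\nabla\tilde{E}=~0$ exactly; this is precisely the shape produced by Lemma~\ref{lem:params} with $~D=~I$.

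It then remains to check closeness and regularity. Writing $\tilde{~L}-~L=(\tilde{~P}^\perp_S-~P^\perp_S)\tilde{~A}\tilde{~P}^\perp_S+~P^\perp_S(\tilde{~A}-~L)\tilde{~P}^\perp_S+~P^\perp_S~L(\tilde{~P}^\perp_S-~P^\perp_S)$ and using $\nn{~P^\perp_S}=\nn{\tilde{~P}^\perp_S}=1$, $\sup_K\nn{~L}=:L_\infty<\infty$, and the estimates above, the triangle inequality gives $\sup_K\nn{\tilde{~L}-~L}\le c''\eta$ for $\eta\le1$, and the same computation with $~M,\tilde{~C}$ yields $\sup_K\nn{\tilde{~M}-~M}\le c''\eta$; choosing $\eta$ small in terms of $\varepsilon,\delta,L_\infty,\sup_K\nn{~M}$ makes all four objects $\varepsilon$-close to their counterparts. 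For regularity, each of $\tilde{~L},\tilde{~M}$ is built from the networks by finitely many sums and products and a single division by the nowhere-vanishing scalar $\nn{\nabla\tilde{E}}^2$ or $\nn{\nabla\tilde{S}}^2$, hence inherits the smoothness of the activation: with $\tanh$ it is $C^\infty$ and in particular has $k$ continuous derivatives whenever $~L,~M$ do, and for a merely $C^k$ activation the same algebra keeps $\tilde{~L},\tilde{~M}\in C^k$. The main obstacle is the first step: one must approximate $E,S$ in a topology strong enough to control their gradients (ordinary $C^0$ universal approximation is insufficient) and use compactness to keep $\nabla\tilde{E},\nabla\tilde{S}$ uniformly bounded away from $~0$; everything afterward is the tolerance bookkeeping above together with the projector identities already established in Lemma~\ref{lem:params}.
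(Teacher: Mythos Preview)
Your proposal is correct and follows essentially the same strategy as the paper: parameterize $\tilde{~L},\tilde{~M}$ via the projector formulas of Lemma~\ref{lem:params}/Theorem~\ref{thm:main}, approximate $E,S$ in $C^1$ (so that $\nabla\tilde{E},\nabla\tilde{S}$ stay bounded away from zero and the projectors $~P^\perp_{\tilde E},~P^\perp_{\tilde S}$ are close to $~P^\perp_E,~P^\perp_S$), approximate a factor of the matrix fields by networks, and telescope to get $\varepsilon$-closeness. The only cosmetic differences are that the paper packages the projector estimate and the spectral-norm bound into separate lemmas and factors $~M$ through $~U~\Lambda^{1/2}$ rather than the positive square root $~M^{1/2}$; your choice of $~M^{1/2}$ is arguably cleaner since its continuity in $~x$ is immediate, whereas the eigenvector matrix need not vary continuously.
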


\begin{remark}\label{rem:bounded}
    The assumption $~x\in K$ of the state remaining in a compact set $V$ is not restrictive when at least one of $E,-S:\mathbb{R}^n\to\mathbb{R}$, say $E$, has bounded sublevel sets. In this case, letting $~x_0=~x(0)$ it follows from $\dot{E}\leq 0$ that 
    $E\lr{~x(t)} = E\lr{~x_0} + \int_0^t \dot{E}\lr{~x(\tau)}\,d\tau \leq E\lr{~x_0},$
    so that the entire trajectory $~x(t)$ lies in the (closed and bounded) compact set $K = \{ ~x\,|\,E(~x)\leq E\lr{~x_0} \}\subset \mathbb{R}^n$.
\end{remark}

Leaning on Proposition~\ref{prop:approx} and classical universal approximation results in \citet{li1996simultaneous}, it is further possible to establish the following error estimate also proven in Appendix~\ref{app:proofs} which gives an idea of the error accumulation rate that can be expected from this method.  

\begin{theorem}\label{thm:approx}
    Suppose $~L,~M,E,S$ are nondegenerate metriplectic data such that $~L,~M$ have at least one continuous derivative, $E,S$ have Lipschitz continuous gradients, and at least one of $E,-S$ have bounded sublevel sets.  For any $\varepsilon>0$, there exist nondegenerate metriplectic data $\tilde{~L},\tilde{~M},\tilde{E},\tilde{S}$ defined by two-layer neural networks such that, for all $T>0$,
    \begin{align*}
        \lr{\int_0^T \nn{~x-\tilde{~x}}^2\,dt}^{\frac{1}{2}} \leq \varepsilon\nn{\frac{b}{a}}\sqrt{e^{2aT}-2e^{aT}+T+1},
    \end{align*}
    where $a,b\in\mathbb{R}$ are constants depending on both sets of metriplectic data and  $\dot{\tilde{~x}}=\tilde{~L}\lr{\tilde{~x}}\nabla \tilde{E}\lr{\tilde{~x}} + \tilde{~M}\lr{\tilde{~x}}\nabla \tilde{S}\lr{\tilde{~x}}$.
\end{theorem}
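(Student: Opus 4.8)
The plan is to combine the universal approximation guarantee of Proposition~\ref{prop:approx} with a Gr\"onwall-type comparison between the exact and surrogate flows. The argument proceeds in three stages: (i) fix a compact convex set on which both trajectories live; (ii) upgrade the $C^0$ approximation of the metriplectic data, together with the gradients of $E,S$, to a uniform approximation of the full velocity field $~f\lr{~x} := ~L\nabla E + ~M\nabla S$; and (iii) run a differential inequality on the error $~e(t) := ~x(t) - \tilde{~x}(t)$ and integrate the resulting pointwise bound over $[0,T]$.

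For stage (i), I would use Remark~\ref{rem:bounded}: if (say) $E$ has bounded sublevel sets, then $\dot E\leq 0$ forces $~x(t)$ into the compact set $K = \{~x \mid E(~x)\leq E(~x_0)\}$ for all $t$. Choosing a large closed ball $K'\supset K$ and applying Proposition~\ref{prop:approx} on $K'$ produces two-layer networks $\tilde{~L},\tilde{~M},\tilde E,\tilde S$ forming nondegenerate metriplectic data, with $\tilde{~L},\tilde{~M}$ of class $C^1$, $\nabla\tilde E,\nabla\tilde S\neq~0$ on $K'$, and each datum $\delta$-close to its counterpart; invoking the simultaneous-approximation result of \citet{li1996simultaneous}, as Proposition~\ref{prop:approx} already does internally, I may also take $\nabla\tilde E,\nabla\tilde S$ to be $\delta$-close to $\nabla E,\nabla S$. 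Since $\tilde E$ is conserved along $\tilde{~x}$ and is uniformly $\delta$-close to $E$, for $\delta$ sufficiently small the sublevel set $\{\tilde{~x}\mid\tilde E(\tilde{~x})\leq\tilde E(~x_0)\}$ remains inside $K'$, so $\tilde{~x}(t)\in K'$ for all $t$ from the common initial condition $~x_0$; in particular all matrix fields and gradients appearing below are uniformly bounded on $K'$.

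Stage (ii) is a telescoping estimate: writing $\tilde{~f}\lr{~x} := \tilde{~L}\nabla\tilde E + \tilde{~M}\nabla\tilde S$ and inserting the intermediate terms $~L\nabla\tilde E,\ \tilde{~L}\nabla\tilde E,\ ~M\nabla\tilde S,\ \tilde{~M}\nabla\tilde S$, each resulting summand is a product of one uniformly $\delta$-small factor with one uniformly bounded factor, so $\nn{~f-\tilde{~f}}_{C^0\lr{K'}}\leq c\,\delta$ for a constant $c$ determined by the data on $K'$; setting $c\delta = b\varepsilon$ fixes $b$. Because $~L,~M\in C^1$ are Lipschitz on the compact convex $K'$ and $\nabla E,\nabla S$ are Lipschitz by hypothesis, $~f$ is Lipschitz on $K'$ with constant $a$, which I may freely enlarge so that $a\geq 1/2$. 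For stage (iii), splitting
\[
    \dot{~e} = \lr{~f(~x)-~f(\tilde{~x})} + \lr{~f(\tilde{~x})-\tilde{~f}(\tilde{~x})}
\]
and using $~e(0)=~0$ gives $\tfrac{d}{dt}\nn{~e}\leq a\nn{~e}+b\varepsilon$, hence $\nn{~e(t)}\leq \tfrac{b\varepsilon}{a}\lr{e^{at}-1}$ by the integrating-factor form of Gr\"onwall's inequality. Squaring, integrating over $[0,T]$, and using the elementary bound $\int_0^T\lr{e^{at}-1}^2\,dt\leq e^{2aT}-2e^{aT}+T+1$ (valid for $a\geq 1/2$) then yields the claimed estimate.

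I expect the crux to be stage (ii) together with the containment of $\tilde{~x}$ in $K'$: one must both ensure the surrogate flow cannot escape the region on which Proposition~\ref{prop:approx} controls the data\textemdash handled via energy conservation of the surrogate plus closeness of $\tilde E$ to $E$\textemdash and verify that the composite field $\tilde{~L}\nabla\tilde E + \tilde{~M}\nabla\tilde S$ genuinely inherits the $O(\varepsilon)$ rate, which forces the joint approximation of $E,S$ and their first derivatives and uniform control of products of matrix fields with gradients. Once the uniform velocity-field bound and the Lipschitz constant $a$ are in hand, the Gr\"onwall step and the final integration are routine.
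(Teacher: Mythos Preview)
Your proposal is correct and mirrors the paper's overall strategy: confine both trajectories to a compact set via energy conservation, invoke Proposition~\ref{prop:approx} for uniform approximation of the data and their gradients, derive the differential inequality $\partial_t\nn{~e}\leq a\nn{~e}+\varepsilon b$, apply Gr\"onwall, and integrate. The telescoping step differs slightly: you use the classical two-term split $\dot{~e}=\lr{~f(~x)-~f(\tilde{~x})}+\lr{~f(\tilde{~x})-\tilde{~f}(\tilde{~x})}$, so that $a$ is the Lipschitz constant of the \emph{true} vector field $~f$ alone, whereas the paper expands each of $\dot{~y}_E,\dot{~y}_S$ into four terms by inserting $~L(~x)\nabla E(\tilde{~x}),\ \tilde{~L}(~x)\nabla E(\tilde{~x}),\ \tilde{~L}(\tilde{~x})\nabla E(\tilde{~x})$ (and analogously for $S$), which makes $a$ depend on the Lipschitz constants of the \emph{network} operators $\tilde{~L},\tilde{~M}$ as well\textemdash this matches the theorem's stipulation that $a,b$ depend on both sets of metriplectic data, at the price of a less intrinsic constant. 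Your explicit condition $a\geq 1/2$, needed for the elementary integral bound in the final step, is a detail the paper's proof passes over silently.
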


\begin{remark}
    Theorem~\ref{thm:approx} provides a bound on state error over time under the assumption that the approximation error in the metriplectic networks can be controlled. On the other hand, notice that Theorem~\ref{thm:approx} can also be understood as a generic error bound on any trained metriplectic networks $\tilde{~L},\tilde{~M},\tilde{E},\tilde{S}$ provided universal approximation results are not invoked in the estimation leading to $\varepsilon b$.
\end{remark}

This result confirms that the error in the state $~x$ for a fixed final time $T$ tends to zero with the approximation error in the networks $\tilde{~L},\tilde{~M},\tilde{E},\tilde{S}$, as one would hope based on the approximation capabilities of neural networks.  More importantly, Theorem~\ref{thm:approx} also bounds the generalization error of any trained metriplectic network for an appropriate (and possibly large) $\varepsilon$ equal to the maximum approximation error on $K$, where the learned metriplectic trajectories are confined for all time.  With this theoretical guidance, the remaining goal of this work is to demonstrate that NMS is also practically effective at learning metriplectic systems from data and exhibits reasonable generalization to unseen timescales.

\section{Algorithm}
Similar to previous approaches in \citet{lee2021machine} and \citet{zhang2022gfinns}, the learnable parameters in NMS are calibrated using data along solution trajectories to a given dynamical system.  This brings up an important question regarding how much information about the system in question is realistically present in the training data.  While many systems have a known metriplectic form, it is not always the case that one will know metriplectic governing equations for a given set of training data.  Therefore, two approaches are considered in the experiments below corresponding to whether full or partial state information is assumed available during NMS training.  More precisely, the state $~x=(~x^o,~x^u)$ will be partitioned into ``observable'' and ``unobservable'' variables, where $~x^u$ may be empty in the case that full state information is available. In a partially observable system $~x^o$ typically contains positions and momenta while $~x^u$ contains entropy or other configuration variables which are more difficult to observe during physical experiments.  In both cases, NMS will learn a metriplectic system in $~x$ according to the procedure described in Algorithm~\ref{alg:train}.

\begin{algorithm}[htb]
   \caption{Training neural metriplectic systems}
   \label{alg:train}
\begin{algorithmic}[1]
   \STATE {\bfseries Input:} snapshot data $~X\in\mathbb{R}^{n\times n_s}$, each column $~x_s=~x\lr{t_s,~\mu_s}$, target rank $r\geq 1$, batch size $n_b\geq 1$.
   \STATE Initialize networks $~A_{\mathrm{tri}},~B,~K_{\mathrm{chol}},E,S,$ and loss $L=0$
   \FOR{step in $N_{\mathrm{steps}}$}
   \STATE Randomly draw batch $P=\{\lr{t_{s_k},~x_{s_k}}\}_{k=1}^{n_b}$
   \FOR{$\lr{t,~x}$ in $P$}
   \STATE Evaluate $~A_{\mathrm{tri}}\lr{~x},~B\lr{~x},~K_{\mathrm{chol}}\lr{~x},E\lr{~x},S\lr{~x}$
   \STATE Automatically differentiate $E,S$ to obtain $\nabla E(~x),\nabla S(~x)$
   \STATE Form $~A(~x) = ~A_{\mathrm{tri}}\lr{~x}-~A_{\mathrm{tri}}\lr{~x}^\intercal$ and $~D(~x) = ~K_{\mathrm{chol}}\lr{~x}~K_{\mathrm{chol}}\lr{~x}^\intercal$
   \STATE Build $~L(~x),~M(~x)$ according to Lemma~\ref{lem:params}
   \STATE Evaluate $\dot{~x} = ~L(~x)\nabla E(~x) + ~M(~x)\nabla S(~x)$
   \STATE Randomly draw $n_1,...,n_l$ and form $t_j = t+n_j\Delta t$ for all $j$ 
   \STATE $\tilde{~x}_1,...,\tilde{~x}_l=\mathrm{ODEsolve}\lr{\dot{~x}, t_1,...,t_l}$
   \STATE $L \mathrel{+}= l^{-1}\sum_j\mathrm{Loss}\lr{~x_j,\tilde{~x}_j}$
   \ENDFOR
   \STATE Rescale $L = |P|^{-1}L$
   \STATE Update $~A_{\mathrm{tri}},~B,~K_{\mathrm{chol}},E,S$ through gradient descent on $L$.
   \ENDFOR
\end{algorithmic}
\end{algorithm}

Note that the batch-wise training strategy in Algorithm~\ref{alg:train} requires initial conditions for $~x^u$ in the partially observed case.  There are several options for this, and two specific strategies will be considered here.  Suppose the data are drawn from the training interval $[0,T]$ with initial state $~x_0$ and final state $~x_T$.  The first strategy sets $~x_0^u=~0$, $~x_T^u=~1$ (where $~1$ is the all ones vector), and $~x_s^u=~1/s$, $0<s<T$, so that the unobserved states are initially assumed to lie on a straight line.  \KL{In the absence of other information, this ensures that the entropy will be satisfy the monotonicity constraint necessary for metriplectic structure.}  The second strategy is more sophisticated, and involves training a diffusion model to predict the distribution of $~x^u$ given $~x^o$.  Specific details of this procedure are given in Appendix~\ref{app:diffusion}. \KL{Note that other strategies are also possible, such as allowing the unobserved initial states to be trainable as in \cite{chen2020symplectic}.}



\section{Examples}\label{sec:exps}
The goal of the following experiments is to show that NMS is effective even when entropic information cannot be observed during training, yielding superior performance when compared to previous methods including GNODE, GFINN, and SPNN discussed in Section~\ref{sec:relatedwork}.  The metrics considered for this purpose will be mean absolute error (MAE) and mean squared error (MSE) defined in the usual way as the average $\ell^1$ (resp. squared $\ell^2$) error between the discrete states $~x,\tilde{~x}\in\mathbb{R}^{n\times n_s}$. For brevity, many implementation details have been omitted here and can be found in Appendix~\ref{app:experiments}.  An additional experiment showing the effectiveness of NMS in the presence of both full and partial state information can be found in Appendix~\ref{app:dno}.




\begin{remark}
    To facilitate a more equal parameter count between the compared metriplectic methods, the results of the experiments below were generated using the alternative parameterization $~D=~K~K^\intercal$ where $~K: K\to \mathbb{R}^{r\times r'}$ is full and $r'\geq r$.  Of course, this change does not affect metriplecticity since $~D$ is still positive semi-definite for each $~x\in K$.
\end{remark}

\subsection{Two gas containers}\label{sec:tgc}
The first test of NMS involves two ideal gas containers separated by movable wall which is removed at time $t_0$, allowing for the exchange of heat and volume.  In this example, the motion of the state $~x=\begin{pmatrix}q & p & S_1 & S_2\end{pmatrix}^\intercal$ is governed by the metriplectic equations:
\begin{align*}
    \dot{q} &= \frac{p}{m}, &&\dot{p}=\frac{2}{3}\lr{\frac{E_1(~x)}{q}-\frac{E_2(~x)}{2L-q}}, \\
    \dot{S}_1 &= \frac{9N^2k_B^2\alpha}{4E_1(~x)}\lr{\frac{1}{E_1(~x)}-\frac{1}{E_2(~x)}}, \quad&&\dot{S}_2 = -\frac{9N^2k_B^2\alpha}{4E_1(~x)}\lr{\frac{1}{E_1(~x)}-\frac{1}{E_2(~x)}},
\end{align*}
where $(q,p)$ are the position and momentum of the separating wall, $S_1,S_2$ are the entropies of the two subsystems, and the internal energies $E_1,E_2$ are determined from the Sackur-Tetrode equation for ideal gases,
$S_i/Nk_B = \ln\lr{\hat{c}V_iE_i^{3/2}}, 1\leq i\leq 2.$ 
Here, $m$ denotes the mass of the wall, $2L$ is the total length of the system, and $V_i$ is the volume of the $i^{\mathrm{th}}$ container.  As in \citet{lee2021machine,lee2022structure} $Nk_B=1$ and $\alpha=0.5$ fix the characteristic macroscopic unit of entropy while $\hat{c}=102.25$ ensures the argument of the logarithm defining $E_i$ is dimensionless.  This leads to the total entropy $S(~x)=S_1+S_2$ and the total energy $E(~x) = (1/2m)p^2 + E_1(~x) + E_2(~x),$ which are guaranteed to be nondecreasing and constant, respectively.

The primary goal here is to verify that NMS can accurately and stably predict gas container dynamics without the need to observe the entropic variables $S_1,S_2$.
To that end, NMS has been compared to GNODE, SPNN, and GFINN on the task of predicting the trajectories of this metriplectic system over time, with results displayed in Table~\ref{tab:tgc&tdp}.  More precisely, given an intial condition $~x_0$ and an interval $0<t_{\mathrm{train}} < t_{\mathrm{valid}} < t_{\mathrm{test}}$, each method is trained on partial state information (in the case of NMS) or full state information (in the case of the others) from the interval $[0,t_{\mathrm{train}}]$ and validated on $(t_{\mathrm{train}},t_{\mathrm{valid}}]$ before state errors in $q,p$ only are calculated on the whole interval $[0,t_{\mathrm{test}}]$.  As can be seen from Table~\ref{tab:tgc&tdp} and Figure~\ref{fig:tgc}, NMS is remarkably accurate over unseen timescales even in this unfair comparison, avoiding the unphysical behavior which often hinders soft-constrained methods like SPNN.  The energy and instantaneous entropy plots in Figure~\ref{fig:tgc} further confirm that the strong enforcement of metriplectic structure guaranteed by NMS leads to correct energetic and entropic dynamics for all time.

\begin{figure}[htb]
    \centering
    \includegraphics[width=.6\columnwidth]{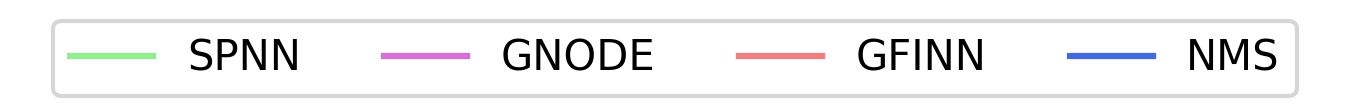}    
    \subfigure[Hamiltonian $H=\frac{p^2}{2m}$]
    {\includegraphics[width=0.65\columnwidth]{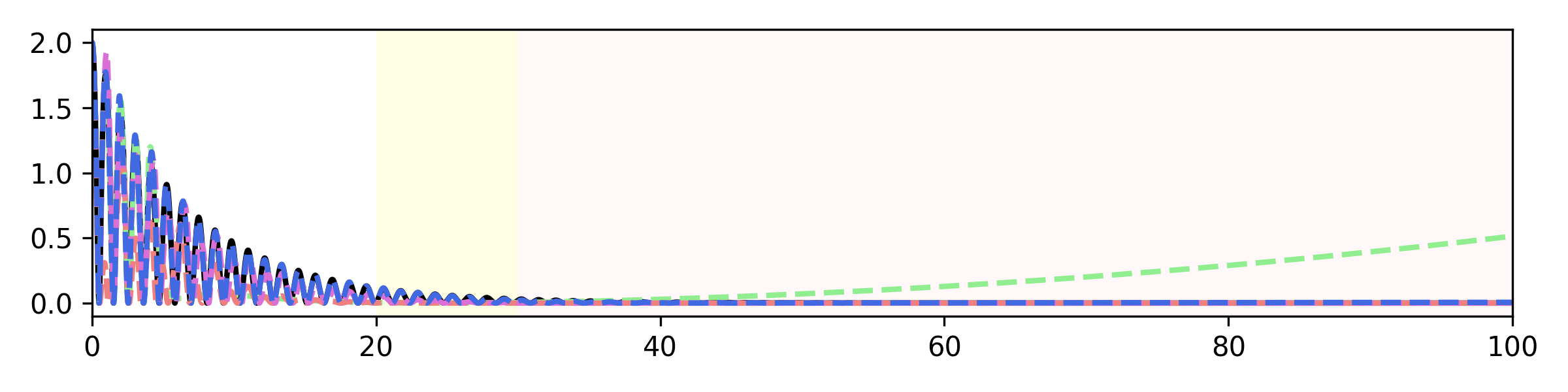}} \hfill
    \subfigure[Position $q$]
    {\includegraphics[width=.33\columnwidth]{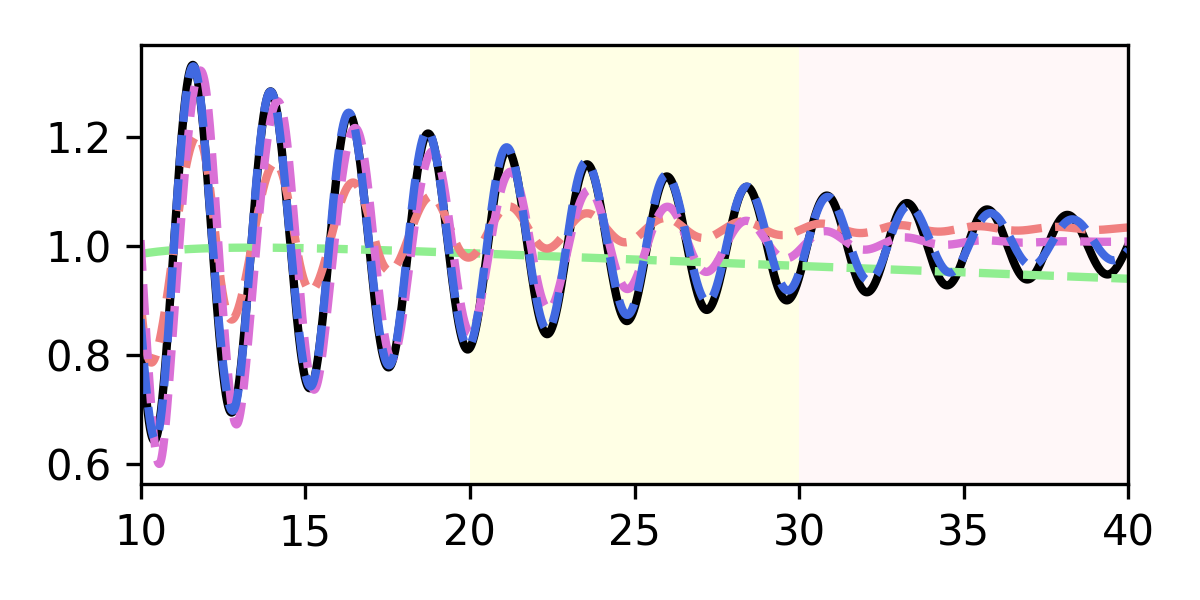}}\\
    \subfigure[Momentum $p$]
    {\includegraphics[width=.33\columnwidth]{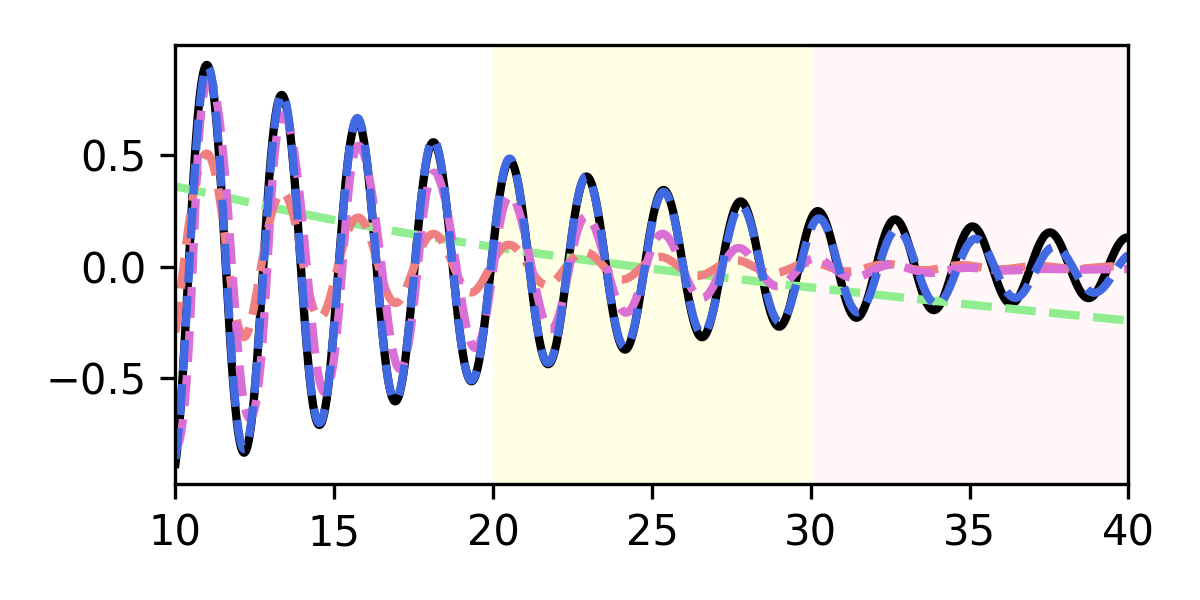}}\hfill
    \subfigure[Energy $E=H+\sum_iE_i$]
    {\includegraphics[width=.33\columnwidth]{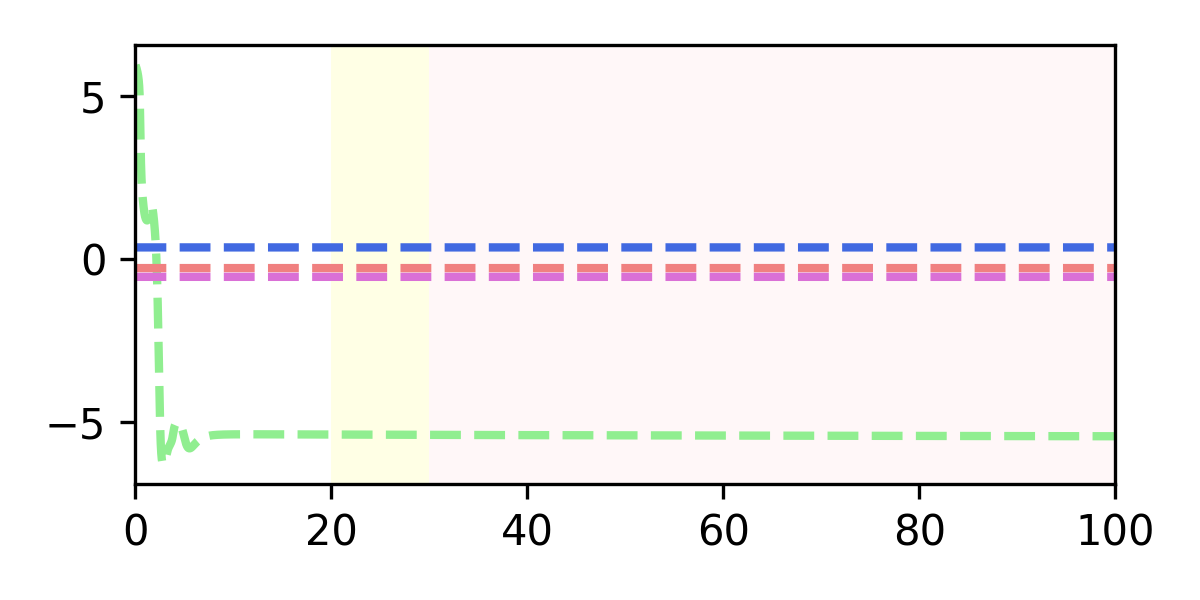}} \hfill
    \subfigure[Instantaneous Entropy $\dot{S}$]
    {\includegraphics[width=.33\columnwidth]{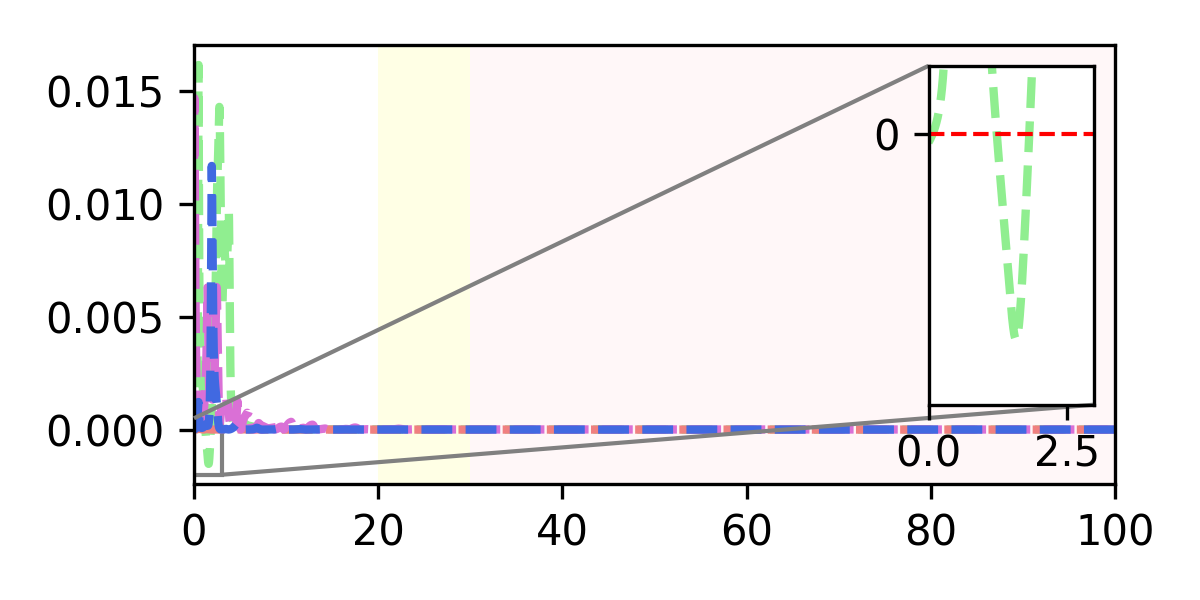}}
    \caption{The ground-truth and predicted position, momentum, instantaneous entropy, and energies for the two gas containers example in the training (white), validation (yellow), and testing (red) regimes.}
    \label{fig:tgc}
\end{figure}

\subsection{Thermoelastic double pendulum}
Next, consider the thermoelastic double pendulum from \citet{romero2009thermodynamically} with 10-dimensional state variable $~x=\begin{pmatrix}~q_1&~q_2&~p_1&~p_2&S_1&S_2\end{pmatrix}^\intercal$, which represents a highly challenging benchmark for metriplectic methods. The equations of motion in this case are given for $1\leq i \leq 2$ as
\begin{align*}
    \dot{~q}_i &=\frac{~p_i}{m_i}, \quad \dot{~p}_i = -\partial_{~q_i}\lr{E_1(~x)+E_2(~x)}, \quad \dot{S}_1 = \kappa\lr{T_1^{-1}T_{2}-1}, \quad \dot{S}_2 = \kappa\lr{T_1T_2^{-1}-1},
\end{align*}
where $\kappa>0$ is a thermal conductivity constant (set to 1), $m_i$ is the mass of the $i^{\mathrm{th}}$ spring (also set to 1) and $T_i = \partial_{S_i}E_i$ is its absolute temperature. In this case, $~q_i,~p_i\in\mathbb{R}^2$ represent the position and momentum of the $i^{\mathrm{th}}$ mass, while $S_i$ represents the entropy of the $i^{\mathrm{th}}$ pendulum.  As before, the total entropy $S(~x)=S_1+S_2$ is the sum of the entropies of the two springs, while defining the internal energies $E_i(~x) = (1/2)\lr{\ln\lambda_i}^2 + \ln\lambda_i + e^{S_i-\ln\lambda_i}-1, \lambda_1 = \nn{~q_i}, \lambda_2 = \nn{~q_2-~q_1},$
leads to the total energy $E(~x) = (1/2m_1)\nn{~p_1}^2+(1/2m_2)\nn{~p_2}^2 + E_1(~x) + E_2(~x).$

\begin{table}[tb]
\centering
\caption{Prediction errors for $~x^o$ measured in MSE and MAE on the interval $[0,t_{\mathrm{test}}]$ in the two gas containers example (left) and on the test set in the thermoelastic double pendulum example (right).}
\label{tab:tgc&tdp}
\vspace{0.5pc}
\resizebox{0.49\columnwidth}{!}{
\begin{tabular}{cccccc}
        \toprule
         & NODE & SPNN & GNODE & GFINN & NMS\\
        \midrule
        MSE  & .12 $\pm$ .04 & .13 $\pm$ .10 & .16 $\pm$ .10 & .07 $\pm$ .03 & \textbf{.01 $\pm$ .02}\\
        MAE  & .25 $\pm$ .10 & .26 $\pm$ .14 & .25 $\pm$ .13 & .13 $\pm$.03 & \textbf{.08 $\pm$ .06}\\
        \bottomrule
    \end{tabular}%
    }\hfill
\resizebox{0.49\columnwidth}{!}{
\begin{tabular}{cccccc}
        \toprule
        & NODE & SPNN & GNODE & GFINN & NMS\\
        \midrule
        MSE & .41 $\pm$ .01 & .42 $\pm$ .01 & .42 $\pm$ .01& .40 $\pm$ .03 & \textbf{.38 $\pm$ .03}\\
        MAE & .48 $\pm$ .04 & .47 $\pm$ .03 & .46 $\pm$ .04& .43 $\pm$ .07 & \textbf{.42 $\pm$ .07}\\
        \bottomrule
    \end{tabular}%
    }
\end{table}

The task in this case is prediction across initial conditions.  As in \citet{zhang2022gfinns}, 100 trajectories are drawn from the ranges in Appendix~\ref{app:experiments} and integrated over the interval $[0,40]$ with $\Delta t=0.1$, with an 80/10/10 split for training/validation/testing.  Here all compared models are trained using full state information. As seen in Table~\ref{tab:tgc&tdp}, NMS is again the most performant, although all models struggle to approximate the dynamics over the entire training interval.  It is also notable that the training time of NMS is greatly decreased relative to GNODE and GFINN due to its improved quadratic scaling; a representative study to this effect is given in Appendix~\ref{app:scaling}.


\section{Conclusion}
Neural metriplectic systems (NMS) have been considered for learning finite-dimensional metriplectic dynamics from data.  Making use of novel non-redundant parameterizations for metriplectic operators, NMS provably approximates arbitrary nondegenerate metriplectic systems with generalization error bounded in terms of the operator approximation quality. Benchmark examples have shown that NMS is both more scalable and more accurate than previous methods, including when only partial state information is observed.  Future work will consider extensions of NMS to infinite-dimensional metriplectic systems with the aim of addressing its main limitation: the difficulty of scaling NMS (among all present methods for metriplectic learning) to realistic, 3-D problems of the size that would be considered in practice.  A promising direction is to consider the use of NMS in model reduction, where sparse, large-scale systems are converted to small, dense systems through a clever choice of encoding/decoding.

\section*{Acknowledgements}
Sandia National Laboratories is a multimission laboratory managed and operated by National Technology \& Engineering Solutions of Sandia, LLC, a wholly owned subsidiary of Honeywell International Inc., for the U.S. Department of Energy’s National Nuclear Security Administration under contract DE-NA0003525. This paper describes objective technical results and analysis. Any subjective views or opinions that might be expressed in the paper do not necessarily represent the views of the U.S. Department of Energy or the United States Government. This article has been co-authored by an employee of National Technology \& Engineering Solutions of Sandia, LLC under Contract No. DE-NA0003525 with the U.S. Department of Energy (DOE). The employee owns all right, title and interest in and to the article and is solely responsible for its contents. The United States Government retains and the publisher, by accepting the article for publication, acknowledges that the United States Government retains a non-exclusive, paid-up, irrevocable, world-wide license to publish or reproduce the published form of this article or allow others to do so, for United States Government purposes. The DOE will provide public access to these results of federally sponsored research in accordance with the DOE Public Access Plan https://www.energy.gov/downloads/doe-public-access-plan.  The work of A. Gruber and N. Trask is supported by the John von Neumann fellowship at Sandia, the U.S. Department of Energy, Office of Advanced Computing Research under the ``Scalable and Efficient Algorithms - Causal Reasoning, Operators, Graphs and Spikes" (SEA-CROGS) project, and the DoE Early Career Research Program. K. Lee acknowledges support from the U.S. National Science Foundation under grant IIS 2338909. N. Park and H. Lim acknowledge support from the Korea Advanced Institute of Science and Technology (KAIST) grant funded by the Korea government (MSIT) (No. G04240001, Physics-inspired Deep Learning), as well as the Institute for Information \& Communications Technology Planning \& Evaluation (IITP) grants funded by the Korea government (MSIT) (No. 2020-0-01361, Artificial Intelligence Graduate School Program (Yonsei University)).  The authors thank Jonas Actor for careful proofreading and suggestions regarding the presentation of Appendix~\ref{app:proofs}.


{\small
\bibliographystyle{abbrvnat}
\bibliography{biblio}
}

\appendix

\section{Proof of Theoretical Results}\label{app:proofs}
This Appendix provides proof of the analytical results in Section~\ref{sec:formulation} of the body.  First, the parameterizations of $~L,~M$ in terms of exterior algebra are established.

\begin{proof}[Proof of Lemma~\ref{lem:params}]
    First, it is necessary to check that the operators $~L,~M$ parameterized this way satisfy the symmetries and degeneracy conditions claimed in the statement.  To that end, recall that $~a\wedge~b \simeq ~a~b^\intercal-~b~a^\intercal$, meaning that $\lr{~a~b^\intercal-~b~a^\intercal}^\intercal \simeq ~b\wedge~a = -~a\wedge~b$.  It follows that $~A^\intercal\simeq\tilde{\mathsf{A}} = -\mathsf{A}$ where $\tilde{\mathsf{A}}$ denotes the reversion of $\mathsf{A}$, i.e., $\tilde{\mathsf{A}} = \sum_{i<j}A^{ij}~e_j\wedge~e_i.$  Therefore, we may write 
    \begin{align*}
        ~L^\intercal &\simeq \tilde{\mathsf{A}} - \frac{1}{\nn{\nabla S}^2}\reallywidetilde{~A\nabla S\wedge\nabla S} = -\mathsf{A} + \frac{1}{\nn{\nabla S}^2}~A\nabla S\wedge\nabla S \simeq -~L,
    \end{align*}
    showing that $~L^\intercal=-~L$.  Moreover, using that 
    \[\lr{~b\wedge ~c}\cdot ~a = -~a\cdot\lr{~b\wedge~c} = \lr{~a\cdot~c}~b-\lr{~a\cdot~b}~c,\]
    it follows that 
    \begin{align*}
        ~L\nabla S &= \mathsf{A}\cdot\nabla S - \frac{1}{\nn{\nabla S}^2}\lr{~A\nabla S \wedge \nabla S}\cdot \nabla S = ~A\nabla S -~A\nabla S = ~0,
    \end{align*}
    since $\nabla S\cdot~A\nabla S = -\nabla S\cdot~A\nabla S = 0$.  Moving to the case of $~M$, notice that $~M = D_{st}~v^s\otimes~v^t$ for a particular choice of $~v$, meaning that
    \begin{align*} 
        ~M^\intercal &= \sum_{s,t}D_{st}\lr{~v^s\otimes~v^t}^\intercal = \sum_{s,t}D_{st}~v^t\otimes~v^s = \sum_{t,s}D_{ts}~v^s\otimes~v^t = \sum_{s,t}D_{st}~v^s\otimes~v^t = ~M,
    \end{align*}
    since $~D$ is a symmetric matrix.  Additionally, it is straightforward to check that, for any $1\leq s\leq r$,
    \begin{align*}
        ~v^s\cdot\nabla E &= \lr{~b^s-\frac{~b^s\cdot\nabla E}{\nn{\nabla E}^2}\nabla E}\cdot\nabla E = ~b^s\cdot \nabla E - ~b^s\cdot\nabla E = 0.
    \end{align*}
    So, it follows immediately that
    \begin{align*}
        ~M\nabla E &= \sum_{s,t}D_{st}\lr{~v^s\otimes~v^t}\cdot\nabla E = \sum_{s,t}D_{st}\lr{~v^t\cdot\nabla E}~v^s = ~0.
    \end{align*}
    Now, observe that
    \begin{align*}
        ~L &= ~A - \frac{1}{\nn{\nabla S}^2}\lr{~A\nabla S\lr{\nabla S}^\intercal - \nabla S\lr{~A\nabla S}^\intercal} \\
        &= ~A - \frac{1}{\nn{\nabla S}^2}\lr{~A\nabla S\lr{\nabla S^\intercal} + \nabla S\lr{\nabla S}^\intercal~A} \\
        &= \lr{~I-\frac{\nabla S\lr{\nabla S}^\intercal}{\nn{\nabla S}^2}}~A\lr{~I-\frac{\nabla S\lr{\nabla S}^\intercal}{\nn{\nabla S}^2}} = ~P^\perp_S~A~P^\perp_S,
    \end{align*}
    since $~A^\intercal = -~A$ and hence $~v^\intercal~A~v = 0$ for all $~v\in\mathbb{R}^n$.  Similarly, it follows that for every $1\leq s\leq r$,
    \begin{align*}
        ~P^\perp_E~b^s = ~b^s - \frac{~b^s\cdot\nabla E}{\nn{\nabla E}^2}\nabla E,
    \end{align*}
    and therefore $~M$ is expressible as
    \begin{align*}
        ~M = \sum_{s,t}D_{st}\lr{~P^\perp_E~b^s}\lr{~P^\perp_E~b^t}^\intercal = ~P^\perp_E~B~D~B^\intercal~P^\perp_E. \quad\qedhere
    \end{align*}
\end{proof}

With Lemma~\ref{lem:params} established, the proof of Theorem~\ref{thm:main} is straightforward.

\begin{proof}[Proof of Theorem~\ref{thm:main}]
    The ``if'' direction follows immediately from Lemma~\ref{lem:params}.  Now, suppose that $~L$ and $~M$ define a metriplectic system, meaning that the mentioned symmetries and degeneracy conditions hold.  Then, it follows from $~L\nabla S = ~0$ that the projection $~P^\perp_S~L~P^\perp_S=~L$ leaves $~L$ invariant, so that choosing $~A=~L$ yields $~P^\perp_S~A~P^\perp_S=~L$.  Similarly, from positive semi-definiteness and $~M\nabla E = ~0$ it follows that $~M=~U~\Lambda~U^\intercal = ~P^\perp_E~U~\Lambda~U^\intercal~P^\perp_E$ for some column-orthonormal $~U\in \mathbb{R}^{N\times r}$ and positive diagonal $~\Lambda\in\mathbb{R}^{r\times r}$. Therefore, choosing $~B=~U$ and $~D=~\Lambda$ yields $~M=~P^\perp_E~B~D~B^\intercal~P^\perp$, as desired.
\end{proof}

Looking toward the proof of Proposition~\ref{prop:approx}, we also need to establish the following Lemmata which give control over the orthogonal projectors $~P_{\tilde{E}}^\perp, ~P_{\tilde{S}}^\perp$.  First, we recall how control over the $L^\infty$ norm $\nn{\cdot}_{\infty}$ of a matrix field gives control over its spectral norm $\nn{\cdot}$.

\begin{lemma}\label{lem:specbound}
    Let $~A:K\to\mathbb{R}^{n\times n}$ be a matrix field defined on the compact set $K\subset\mathbb{R}^n$ with $m$ continuous derivatives.  Then, for any $\varepsilon>0$ there exists a two-layer neural network $\tilde{~A}:K\to\mathbb{R}^{n\times n}$ such that $\sup_{~x\in K}\nn{~A-\tilde{~A}} < \varepsilon$ and $\sup_{~x\in K}\nn{\nabla^k~A-\nabla^k\tilde{~A}}_{\infty}<\varepsilon$ for $1\leq k\leq m$ where $\nabla^k$ is the (total) derivative operator of order $k$.
\end{lemma}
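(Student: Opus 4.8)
The plan is to reduce the spectral-norm bound to a componentwise (entrywise) bound, then invoke a standard simultaneous-approximation theorem for neural networks. First I would recall the elementary matrix-norm inequality $\nn{B} \leq n\max_{i,j}\nn{B_{ij}}$ (equivalently $\nn{B}\leq \nn{B}_F \leq n\max_{i,j}\nn{B_{ij}}$), valid for any $B\in\mathbb{R}^{n\times n}$, so that controlling every entry of $A-\tilde{A}$ in $L^\infty(K)$ by $\varepsilon/n$ suffices to control the spectral norm by $\varepsilon$ uniformly on $K$. The same reasoning applied to each entry of the higher-order derivative tensors $\nabla^k A$ handles the derivative bounds, since $\nabla^k \tilde A$ is obtained entrywise from $\nabla^k$ of the scalar network approximating $A_{ij}$, and the $\nn{\cdot}_\infty$ norm on tensors is again a max over components.

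Next I would apply a classical simultaneous function-and-derivative approximation result for single-hidden-layer networks (the paper cites \citet{li1996simultaneous}; a Hornik-type theorem also works) to each of the $n^2$ scalar component functions $A_{ij}:K\to\mathbb{R}$, which by hypothesis have $m$ continuous derivatives on the compact set $K$. This yields, for each $(i,j)$, a two-layer network $\tilde A_{ij}$ with $\sup_{x\in K}\nn{A_{ij}-\tilde A_{ij}} < \varepsilon/n$ and $\sup_{x\in K}\nn{\nabla^k A_{ij}-\nabla^k\tilde A_{ij}} < \varepsilon/n$ for all $1\leq k\leq m$. Assembling these into the matrix field $\tilde A$ with $(\tilde A)_{ij} = \tilde A_{ij}$ gives a two-layer network-valued matrix field (the hidden layers can be concatenated into a single wider hidden layer, so $\tilde A$ is genuinely two-layer), and combining with the norm inequalities from the first step yields $\sup_{x\in K}\nn{A-\tilde A} < \varepsilon$ and $\sup_{x\in K}\nn{\nabla^k A - \nabla^k \tilde A}_\infty < \varepsilon$ for $1\leq k\leq m$, as claimed.

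I do not anticipate a serious obstacle here; this lemma is essentially bookkeeping around a black-box universal approximation theorem. The one point requiring a little care is ensuring the resulting object is legitimately a \emph{two-layer} network rather than $n^2$ separate networks — this is handled by stacking hidden units, since a block-diagonal first layer followed by a block-structured linear output layer realizes all $n^2$ scalar approximants simultaneously in one architecture. A second minor point is matching the regularity hypothesis: one must invoke a version of the approximation theorem that approximates a $C^m$ function together with its derivatives up to order $m$ in the sup norm, which is exactly what \citet{li1996simultaneous} provides for networks with sufficiently smooth (e.g. non-polynomial $C^\infty$) activation such as the $\tanh$ used elsewhere in the paper.
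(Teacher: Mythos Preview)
Your proposal is correct and takes essentially the same approach as the paper: bound the spectral norm by the Frobenius norm and then by $n$ times the max-entry norm, invoke \citet{li1996simultaneous} with tolerance $\varepsilon/n$, and conclude. Your explicit discussion of stacking the $n^2$ scalar approximants into a single wider two-layer architecture is a detail the paper glosses over, but otherwise the arguments coincide.
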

\begin{proof}
    This will be a direct consequence of Corollary 2.2 in \citet{li1996simultaneous} provided we show that $\nn{~A} \leq c\nn{~A}_{\infty}$ for some $c>0$.  To that end, if $\sigma_1\geq ...\geq\sigma_r>0$ ($r\leq n$) denote the nonzero singular values of $~A-\tilde{~A}$, it follows that for each $~x\in K$,
    \begin{align*}
        \nn{~A-\tilde{~A}} &= \sigma_1 \leq \sqrt{\sigma_1^2 + ... + \sigma_r^2} = \sqrt{\sum_{i,j}\nn{A_{ij}-\tilde{A}_{ij}}^2} = \nn{~A-\tilde{~A}}_F.
    \end{align*}
    On the other hand, it also follows that 
    \begin{align*}
        \nn{~A-\tilde{~A}}_F &= \sqrt{\sum_{i,j}\nn{A_{ij}-\tilde{A}_{ij}}^2} \leq \sqrt{\sum_{i,j}\max_{i,j}\nn{A_{ij}-\tilde{A}_{ij}}} = n\sqrt{\max_{i,j}\nn{A_{ij}-\tilde{A}_{ij}}} = n\nn{~A-\tilde{~A}}_{\infty},
    \end{align*}
    and therefore the desired inequality holds with $c=n$.  Now, for any $\varepsilon>0$ it follows from \citet{li1996simultaneous} that there exists a two layer network $\tilde{~A}$ with $m$ continuous derivatives such that $\sup_{~x\in K}\nn{~A-\tilde{~A}}_{\infty}<\varepsilon/n$ and $\sup_{~x\in K}\nn{\nabla^k~A-\nabla^k\tilde{~A}}_{\infty}<\varepsilon/n < \varepsilon$ for all $1\leq k\leq m$. Therefore, it follows that
    \begin{align*}
        \sup_{~x\in K}\nn{~A-\tilde{~A}}\leq n\sup_{~x\in K}\nn{~A-\tilde{~A}}_{\infty} < n\frac{\varepsilon}{n} = \varepsilon,
    \end{align*}
    completing the argument.
\end{proof}

Next, we bound the deviation in the orthogonal projectors $~P_{\tilde{E}}^\perp, ~P_{\tilde{S}}^\perp$.

\begin{lemma}\label{lem:projapprox}
    Let $f:\mathbb{R}^n\to\mathbb{R}$ be such that $\nabla f \neq ~0$ on the compact set $K\subset\mathbb{R}^n$.  For any $\varepsilon>0$, there exists a two-layer neural network $\tilde{f}:K\to\mathbb{R}$ such that $\nabla\tilde{f}\neq~0$ on $K$, $\sup_{~x\in K}\nn{f-\tilde{f}}<\varepsilon, \sup_{~x\in K}\nn{\nabla f-\nabla\tilde{f}}<\varepsilon$, and $\sup_{~x\in K}\nn{~P_{f}^\perp-~P_{\tilde{f}}^\perp}<\varepsilon$.
\end{lemma}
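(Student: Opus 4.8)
The plan is to build $\tilde f$ from the universal approximation theorem with simultaneous derivative approximation (Corollary 2.2 of \citet{li1996simultaneous}, as already invoked for Lemma~\ref{lem:specbound}) applied to $f$ on $K$, and then argue that the map $\nabla f \mapsto ~P_f^\perp$ is locally Lipschitz away from the zero vector, so that controlling $\nabla \tilde f - \nabla f$ uniformly gives control of $~P_{\tilde f}^\perp - ~P_f^\perp$ uniformly. First I would use compactness of $K$ and continuity of $\nabla f$ to set $c := \min_{~x\in K}\nn{\nabla f} > 0$; this is where the hypothesis $\nabla f \neq ~0$ on $K$ is essential. Then, given $\varepsilon>0$, pick a two-layer network $\tilde f$ with $\sup_K \nn{f-\tilde f} < \varepsilon$ and $\sup_K \nn{\nabla f - \nabla \tilde f} < \delta$ for a $\delta \le \varepsilon$ to be chosen; for $\delta < c/2$ this forces $\nn{\nabla \tilde f} > c/2 > 0$ on $K$, giving the claimed nonvanishing of $\nabla \tilde f$ (and hence well-definedness of $~P_{\tilde f}^\perp$).

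The analytic core is the estimate on $\nn{~P_f^\perp - ~P_{\tilde f}^\perp}$. Writing $~u = \nabla f / \nn{\nabla f}$ and $\tilde{~u} = \nabla \tilde f/\nn{\nabla \tilde f}$, we have $~P_f^\perp - ~P_{\tilde f}^\perp = \tilde{~u}\tilde{~u}^\intercal - ~u~u^\intercal$, so it suffices to bound $\nn{\tilde{~u}\tilde{~u}^\intercal - ~u~u^\intercal} \le \nn{\tilde{~u} - ~u}\,\nn{\tilde{~u}^\intercal} + \nn{~u}\,\nn{\tilde{~u}^\intercal - ~u^\intercal} = 2\nn{\tilde{~u} - ~u}$ (both unit vectors). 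The normalization map $~v \mapsto ~v/\nn{~v}$ is Lipschitz with constant $2/c$ on the set $\{\nn{~v} \ge c/2\}$ (a standard estimate: $\nn{~v/\nn{~v} - ~w/\nn{~w}} \le \frac{2}{\min(\nn{~v},\nn{~w})}\nn{~v-~w}$, proven by adding and subtracting $~v/\nn{~w}$), so $\nn{\tilde{~u} - ~u} \le (2/c)\,\delta$. Chaining these gives $\sup_K\nn{~P_f^\perp - ~P_{\tilde f}^\perp} \le (4/c)\,\delta$. Choosing $\delta = \min\{\varepsilon, c/4\}\cdot(c/4)$ — or more transparently $\delta < \min\{c/2,\ c\varepsilon/4\}$ — simultaneously secures $\sup_K\nn{\nabla f - \nabla\tilde f} < \varepsilon$, $\nabla\tilde f \neq ~0$, and $\sup_K\nn{~P_f^\perp - ~P_{\tilde f}^\perp} < \varepsilon$, while $\sup_K\nn{f-\tilde f}<\varepsilon$ is granted directly.

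I do not expect a serious obstacle here; the one point requiring care is that the approximation theorem must be applied with a target accuracy $\delta$ that is the \emph{minimum} of the accuracy needed for the function, the gradient, the nonvanishing margin, and the projector bound, and then one checks each of the four conclusions holds for that $\delta$. A secondary subtlety worth stating cleanly is the Lipschitz bound for normalization on the region bounded away from the origin — I would either cite it as elementary or include the two-line add-and-subtract argument. Everything else (the continuity of $\nabla f$ on compact $K$ yielding a positive lower bound, the rank-one difference bound, the triangle inequality) is routine, so the proof will be short.
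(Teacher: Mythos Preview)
Your proposal is correct and follows essentially the same strategy as the paper's proof: invoke Li et al.\ for simultaneous approximation of $f$ and $\nabla f$, use compactness of $K$ to get $c=\inf_{~x\in K}\nn{\nabla f}>0$, choose the approximation tolerance small enough that $\nn{\nabla\tilde f}\geq c/2>0$ on $K$, and then control the projector difference by an add-and-subtract estimate on the rank-one matrices. The only cosmetic difference is that you normalize to unit vectors first (bounding $\nn{\tilde{~u}\tilde{~u}^\intercal-~u~u^\intercal}\le 2\nn{\tilde{~u}-~u}$ and then the Lipschitz constant of $~v\mapsto~v/\nn{~v}$), whereas the paper bounds $\nn{~v~v^\intercal-\tilde{~v}\tilde{~v}^\intercal}$ for the unnormalized gradients directly and divides by $\min\{\nn{~v}^2,\nn{\tilde{~v}}^2\}$ afterward; your organization is arguably a bit tidier, but the underlying argument is the same.
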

\begin{proof}
    Denote $\nabla f = ~v$ and consider any $\tilde{~v}:K\to\mathbb{R}$.  Since $\nn{~v}\leq \nn{\tilde{~v}} + \nn{~v-\tilde{~v}}$, it follows for all $~x\in K$ that whenever $\nn{~v-\tilde{~v}}<(1/2)\inf_{~x\in K}\nn{~v}$, 
    \[\nn{\tilde{~v}}\geq\nn{~v}-\nn{~v-\tilde{~v}}>\nn{~v}-\frac{1}{2}\inf_{~x\in K}\nn{~v} > 0,\]
    so that $\tilde{~v}\neq 0$ in $K$, and since the square function is monotonic,
    \[ \inf_{~x\in K}\nn{\tilde{~v}}^2\geq \inf_{~x\in K}\lr{\nn{~v}-\frac{1}{2}\inf_{~x\in K}\nn{~v}}^2 = \frac{1}{4}\inf_{~x\in K}\nn{~v}^2.\]
    On the other hand, we also have $\nn{\tilde{~v}}\leq \nn{~v}+\nn{\tilde{~v}-~v} < \nn{~v}+(1/2)\inf_{~x\in K}\nn{~v}$, so that, adding and subtracting $\tilde{~v}~v^\intercal$ and applying Cauchy-Schwarz, it follows that for all $~x \in K$,
    \begin{align*}
        \nn{~v~v^\intercal-\tilde{~v}\tilde{~v}^\intercal} &\leq \nn{~v-\tilde{~v}}\nn{~v}+\nn{\tilde{~v}}\nn{~v-\tilde{~v}} \leq 2\max\left\{\nn{~v},\nn{\tilde{~v}}\right\}\nn{~v-\tilde{~v}} < \lr{2\nn{~v} + \inf_{~x\in K}\nn{~v}}\nn{~v-\tilde{~v}}.
    \end{align*}
    Now, by Corollary 2.2 in \citet{li1996simultaneous}, for any $\varepsilon>0$ there exists a two-layer neural network $\tilde{f}:K\to\mathbb{R}$ such that
    \begin{align*}
        &\sup_{~x\in K}\nn{~v-\nabla\tilde{f}} < \min\left\{\frac{1}{2}\inf_{~x\in K}\nn{~v}, \frac{\inf_{~x\in K}\nn{~v}^2}{2\sup_{~x\in K}\nn{~v}+\inf_{~x\in K}\nn{~v}}\frac{\varepsilon}{4}, \varepsilon\right\}\leq\varepsilon,
    \end{align*}
    and also $\sup_{~x\in K}\nn{f-\tilde{f}}<\varepsilon.$  Letting $\tilde{~v}=\nabla\tilde{f}$, it follows that for all $~x \in K$,
    \begin{align*}
        \nn{~P_f^\perp - ~P_{\tilde{f}}^\perp} &= \nn{\frac{~v~v^\intercal}{\nn{~v}^2}-\frac{\tilde{~v}\tilde{~v}^\intercal}{\nn{\tilde{~v}}^2}} \leq \frac{\nn{~v~v^\intercal-\tilde{~v}\tilde{~v}^\intercal}}{\min\left\{\nn{~v}^2,\nn{\tilde{~v}}^2\right\}} \leq \frac{2\nn{~v}+\inf_{~x\in K}\nn{~v}}{\min\left\{\nn{~v}^2,\nn{\tilde{~v}}^2\right\}}\nn{~v-\tilde{~v}},
    \end{align*}
    and therefore, taking the supremum of both sides and applying the previous work yields the desired estimate,
    \begin{align*}
        \sup_{~x\in K} &\nn{~P_f^\perp - ~P_{\tilde{f}}^\perp} \leq 4\frac{2\sup_{~x\in K}\nn{~v} + \inf_{~x\in K}\nn{~v}}{\inf_{~x\in K}\nn{~v}^2}\sup_{~x\in K}\nn{~v-\tilde{~v}} < \varepsilon. \qedhere
    \end{align*}
\end{proof}

With these intermediate results established, the proof of the approximation result Proposition~\ref{prop:approx} proceeds as follows.

\begin{proof}[Proof of Proposition~\ref{prop:approx}]
    Recall from Theorem~\ref{thm:main} that we can write $~L = ~P^\perp_S\lr{~A_{\mathrm{tri}}-~A_{\mathrm{tri}}^\intercal}~P^\perp_S$ and similarly for $\tilde{~L}$.  Notice that, by adding and subtracting $~P_{\tilde{S}}^\perp~A_{\mathrm{tri}}~P_S^\perp$ and $~P_{\tilde{S}}^\perp\tilde{~A}_{\mathrm{tri}}~P_S^\perp$, it follows that for all $~x\in K$,
    \begin{align*}
        &\nn{~P_S^\perp~A_{\mathrm{tri}}~P_S^\perp - ~P_{\tilde{S}}^\perp\tilde{~A}_{\mathrm{tri}}~P_{\tilde{S}}^\perp} \\
        &\qquad= \bigg|\lr{~P^\perp_S-~P^\perp_{\tilde{S}}}~A_{\mathrm{tri}}~P^\perp_S + ~P^\perp_{\tilde{S}}\lr{~A_{\mathrm{tri}}-\tilde{~A}_{\mathrm{tri}}}~P^\perp_S + ~P^\perp_{\tilde{S}}\tilde{~A}_{\mathrm{tri}}\lr{~P^\perp_S-~P^\perp_{\tilde{S}}}\bigg| \\
        &\qquad\leq \nn{~P^\perp_S-~P^\perp_{\tilde{S}}}\nn{~A_\mathrm{tri}} + \nn{~A_{\mathrm{tri}}-\tilde{~A}_{\mathrm{tri}}} + \nn{\tilde{~A}_{\mathrm{tri}}}\nn{~P^\perp_S-~P^\perp_{\tilde{S}}} \\
        &\qquad\leq 2\max\left\{\nn{~A_{\mathrm{tri}}},\nn{\tilde{~A}_{\mathrm{tri}}}\right\}\nn{~P^\perp_S-~P^\perp_{\tilde{S}}} + \nn{~A_{\mathrm{tri}}-\tilde{~A}_{\mathrm{tri}}}
    \end{align*}
    where we have used that $~P_S^\perp,~P_{\tilde{S}}^\perp$ have unit spectral norm.  By Lemma~\ref{lem:specbound}, for any $\varepsilon>0$ there exists a two layer neural network $\tilde{~A}_{\mathrm{tri}}$ such that $\sup_{~x\in K}\nn{~A_{\mathrm{tri}}-\tilde{~A}_{\mathrm{tri}}}<\frac{\varepsilon}{4}$, and by Lemma~\ref{lem:projapprox} there exists a two-layer network $\tilde{S}$ with $\nabla\tilde{S}\neq~0$ on $K$ such that 
    \begin{align*}
        \sup_{~x\in K}&\nn{~P_S^\perp-~P_{\tilde{S}}^\perp} < \min\left\{ \varepsilon, \max\left\{\sup_{~x\in K}\nn{~A_{\mathrm{tri}}},\sup_{~x\in K}\nn{\tilde{~A}_{\mathrm{tri}}}\right\}^{-1}\frac{\varepsilon}{8} \right\}.
    \end{align*}
    It follows that $\tilde{S},\nabla\tilde{S}$ are $\varepsilon$-close to $S,\nabla S$ on $K$ and 
    \begin{align*}
        \sup_{~x\in K}\lr{2\max\left\{\nn{~A_{\mathrm{tri}}},\nn{\tilde{~A}_{\mathrm{tri}}}\right\}\nn{~P^\perp_S-~P^\perp_{\tilde{S}}}} < \frac{\varepsilon}{4}.
    \end{align*}
    Therefore, the estimate
    \begin{align*}
        \sup_{~x\in K}\nn{~L-\tilde{~L}} &\leq 2\sup_{~x\in K}\nn{~P_S^\perp~A_{\mathrm{tri}}~P_S^\perp - ~P_{\tilde{S}}^\perp\tilde{~A}_{\mathrm{tri}}~P_{\tilde{S}}^\perp} < 2\lr{\frac{\varepsilon}{4}+\frac{\varepsilon}{4}} = \varepsilon,
    \end{align*}
    implies that $\tilde{~L}$ is $\varepsilon$-close to $~L$ on $K$ as well.

    Moving to the case of $~M$, we see that for all $~x\in K$, by writing $~M=~U~\Lambda~U^\intercal = ~K_{\mathrm{chol}}~K_{\mathrm{chol}}^\intercal$ for $~K_{\mathrm{chol}}=~U~\Lambda^{1/2}$ and repeating the first calculation with $~K_{\mathrm{chol}}$ in place of $~A_{\mathrm{tri}}$ and $~P_E^\perp$ in place of $~P_S^\perp$,
    \begin{align*}
        &\nn{~P_E^\perp~K_{\mathrm{chol}}~K_{\mathrm{chol}}^\intercal~P_E^\perp - ~P_{\tilde{E}}^\perp\tilde{~K}_{\mathrm{chol}}\tilde{~K}_{\mathrm{chol}}^\intercal~P_{\tilde{E}}^\perp} \\
        &\qquad\leq 2\max\left\{\nn{~K_{\mathrm{chol}}},\nn{\tilde{~K}_{\mathrm{chol}}}\right\}\nn{~P^\perp_E-~P^\perp_{\tilde{E}}} + \nn{~K_{\mathrm{chol}}~K_{\mathrm{chol}}^\intercal-\tilde{~K}_{\mathrm{chol}}\tilde{~K}_{\mathrm{chol}}^\intercal}.
    \end{align*}
    Moreover, if $\nn{~K_{\mathrm{chol}}-\tilde{~K}_{\mathrm{chol}}}<(1/2)\inf_{~x\in K}\nn{~K_{\mathrm{chol}}}$ for all $~x\in K$ then similar arguments as used in the proof of Lemma~\ref{lem:projapprox} yield the following estimate for all $~x\in K$,
    \begin{align*}
        \nn{~K_{\mathrm{chol}}~K_{\mathrm{chol}}^\intercal-\tilde{~K}_{\mathrm{chol}}\tilde{~K}_{\mathrm{chol}}^\intercal} &\leq 2\max\left\{\nn{~K_{\mathrm{chol}}},\nn{\tilde{~K}_{\mathrm{chol}}}\right\}\nn{~K_{\mathrm{chol}}-\tilde{~K}_{\mathrm{chol}}} \\
        &\leq \lr{2\nn{~K_{\mathrm{chol}}}+\inf_{~x\in K}\nn{~K_{\mathrm{chol}}}}\nn{~K_{\mathrm{chol}}-\tilde{~K}_{\mathrm{chol}}}.
    \end{align*}    
    As before, we now invoke Lemma~\ref{lem:specbound} to construct a two-layer lower-triangular network $\tilde{~K}_{\mathrm{chol}}$ such that 
    \begin{align*}
        &\sup_{~x\in K} \nn{~K_{\mathrm{chol}}-\tilde{~K}_{\mathrm{chol}}} < \min\bigg\{ \frac{1}{2}\inf_{~x\in K}\nn{~K_{\mathrm{chol}}}, \lr{2\sup_{~x\in K}\nn{~K_{\mathrm{chol}}}+\inf_{~x\in K}\nn{~K_{\mathrm{chol}}}}^{-1}\frac{\varepsilon}{2} \bigg\},
    \end{align*}
    as well as (using Lemma~\ref{lem:projapprox}) a network $\tilde{E}$ satisfying $\nabla\tilde{E}\neq~0$ on $K$ and
    \begin{align*}
        \sup_{~x\in K}&\nn{~P_E^\perp-~P_{\tilde{E}}^\perp} < \min\left\{ \varepsilon, \max\left\{\sup_{~x\in K}\nn{~K_{\mathrm{chol}}},\sup_{~x\in K}\nn{\tilde{~K}_{\mathrm{chol}}}\right\}^{-1}\frac{\varepsilon}{4} \right\}.
    \end{align*}
    Again, it follows that $\tilde{E},\nabla\tilde{E}$ are $\varepsilon$-close to $E,\nabla E$ on $K$, and by the work above we conclude
    \begin{align*}
        \sup_{~x\in K}\nn{~M-\tilde{~M}} &=\sup_{~x\in K}\nn{~P_E^\perp~K_{\mathrm{chol}}~K_{\mathrm{chol}}^\intercal~P_E^\perp - ~P_{\tilde{E}}^\perp\tilde{~K}_{\mathrm{chol}}\tilde{~K}_{\mathrm{chol}}^\intercal~P_{\tilde{E}}^\perp} < \frac{\varepsilon}{2} + \frac{\varepsilon}{2} = \varepsilon,
    \end{align*}
    as desired.
\end{proof}

It is now possible to give a proof of the error bound in Theorem~\ref{thm:approx}.  Recall the $L^2\lr{[0,T]}$ error metric $\left\|~x\right\|$ and Lipschitz constant $L_f$, defined for all $~x,~y\in\mathbb{R}^n$ and Lipschitz continuous functions $f$ as
\begin{align*}
    \left\|~x\right\|^2 = \int_0^T \nn{~x}^2\,dt, \quad \nn{f(~x)-f(~y)}\leq L_f\nn{~x-~y}.
\end{align*}

\begin{proof}[Proof of Theorem~\ref{thm:approx}]
    First, note that the assumption that one of $E,-S$ (without loss of generality, say $E$) has bounded sublevel sets implies bounded trajectories for the state $~x$ as in Remark~\ref{rem:bounded}, so we may assume $~x\in K$ for some compact $K\subset\mathbb{R}^n$.  Moreover, for any $\varepsilon>0$ it follows from Proposition~\ref{prop:approx} that there are approximate networks $\tilde{E},\tilde{S}$ which are $\varepsilon$-close to $E,S$ on $K$.  Additionally, it follows that $\tilde{E},\tilde{S}$ have nonzero gradients $\nabla\tilde{E},\nabla\tilde{S}$ which are also $\varepsilon$-close to the true gradients $\nabla E,\nabla S$ on $K$.  This implies that for each $~x\in K$, $E = \tilde{E} + (E-\tilde{E}) \leq \tilde{E}+\varepsilon$, so it follows that the sublevel sets $\{~x\,|\,\tilde{E}(~x)\leq m\}\subseteq\{~x\,|\,E(~x)\leq m+\varepsilon\}$ are also bounded.  Therefore, we may assume (by potentially enlarging $K$) that both $~x,\tilde{~x}\in K$ lie in the compact set $K$ for all time.

    
    Now, let $~y = ~x-\tilde{~x}$.  The next goal is to bound the following quantity:
    \begin{align*}
        \nn{\dot{~y}} &= \bigg|~L(~x)\nabla E(~x)+~M(~x)\nabla S(~x) - \tilde{~L}\lr{\tilde{~x}}\nabla \tilde{E}\lr{\tilde{~x}} - \tilde{~M}\lr{\tilde{~x}}\nabla \tilde{S}\lr{\tilde{~x}}\bigg| \\
        &= \bigg|\lr{~L(~x)\nabla E(~x)-\tilde{~L}\lr{\tilde{~x}}\nabla E\lr{\tilde{~x}}} + \lr{~M(~x)\nabla S(~x)-\tilde{~M}\lr{\tilde{~x}}\nabla S\lr{\tilde{~x}}}\bigg| =: \nn{\dot{~y}_E + \dot{~y}_S}.
    \end{align*}
    To that end, notice that by adding and subtracting $~L(~x)\nabla E\lr{\tilde{~x}}, \tilde{~L}(~x)\nabla E\lr{\tilde{~x}},\tilde{~L}\lr{\tilde{~x}}\nabla E\lr{\tilde{~x}}$, it follows that
    \begin{align*}
        \dot{~y}_E &= ~L(~x)\lr{\nabla E(~x)-\nabla E\lr{\tilde{~x}}} + \lr{~L(~x)-\tilde{~L}(~x)}\nabla E\lr{\tilde{~x}} \\
        &\qquad+ \lr{\tilde{~L}(~x)-\tilde{~L}\lr{\tilde{~x}}}\nabla E\lr{\tilde{~x}} + \tilde{~L}\lr{\tilde{~x}}\lr{\nabla E\lr{\tilde{~x}}-\nabla\tilde{E}\lr{\tilde{~x}}}.
    \end{align*}
    By Proposition~\ref{prop:approx} there exists a two-layer neural network $\tilde{~L}$ with one continuous derivative such that $\sup_{~x\in K}\nn{~L-\tilde{~L}}<\varepsilon$, which implies that $\tilde{~L}$ is Lipschitz continuous with (uniformly well-approximated) Lipschitz constant.  Using this fact along with the assumed Lipschitz continuity of $\nabla E$ and the approximation properties of the network $\tilde{E}$ already constructed then yields
    \begin{align*}
        \nn{\dot{~y}_E} &\leq \lr{L_{\nabla E}\sup_{~x\in K}\nn{~L} + L_{\tilde{~L}}\sup_{~x\in K}\nn{\nabla E}}\nn{~y} + \varepsilon\lr{\sup_{~x\in K}\nn{\tilde{~L}}+ \sup_{~x\in K}\nn{\nabla E} } =: a_E\nn{~y} + \varepsilon\, b_E.
    \end{align*}
    Similarly, by adding and subtracting $~M(~x)\nabla S\lr{\tilde{~x}}, \tilde{~M}(~x)\nabla S\lr{\tilde{~x}},\tilde{~M}\lr{\tilde{~x}}\nabla S\lr{\tilde{~x}}$, it follows that
    \begin{align*}
        \dot{~y}_S &= ~M(~x)\lr{\nabla S(~x)-\nabla S\lr{\tilde{~x}}} + \lr{~M(~x)-\tilde{~M}(~x)}\nabla S\lr{\tilde{~x}} \\
        &\qquad+ \lr{\tilde{~M}(~x)-\tilde{~M}\lr{\tilde{~x}}}\nabla S\lr{\tilde{~x}} + \tilde{~M}\lr{\tilde{~x}}\lr{\nabla S\lr{\tilde{~x}}-\nabla\tilde{S}\lr{\tilde{~x}}}.
    \end{align*}
    By Proposition~\ref{prop:approx}, there exists a two-layer network $\tilde{~M}$ with one continuous derivative such that $\sup_{~x\in K}\nn{~M-\tilde{~M}}<\varepsilon$, with $\tilde{~M}$ Lipschitz continuous for the same reason as before.  It follows from this and $\sup_{~x\in K}\nn{\nabla S-\nabla\tilde{S}}<\varepsilon$ that
    \begin{align*}
        \nn{\dot{~y}_S} &\leq \lr{L_{\nabla S}\sup_{~x\in K}\nn{~M} + L_{\tilde{~M}}\sup_{~x\in K}\nn{\nabla S}}\nn{~y} + \varepsilon\lr{\sup_{~x\in K}\nn{\tilde{~M}} + \sup_{~x\in K}\nn{\nabla S} } =: a_S\nn{~y} + \varepsilon\, b_S.
    \end{align*}
    Now, recall that $\partial_t\nn{~y} = \nn{~y}^{-1}\lr{\dot{~y}\cdot~y} \leq \nn{\dot{~y}}$ by Cauchy-Schwarz, and therefore the time derivative of $\nn{~y}$ is bounded by
    \begin{align*}
        \partial_t\nn{~y} &\leq \nn{\dot{~y}_E} + \nn{\dot{~y}_S} = \lr{a_E+a_S}\nn{~y} + \varepsilon\lr{b_E+b_S} =: a\nn{~y}+b.
    \end{align*}
    This implies that $\partial_t\nn{~y}-a\nn{~y}\leq b$, so multiplying by the integrating factor $e^{-at}$ and integrating in time yields
    \begin{align*}
        \nn{~y(t)} \leq \varepsilon b\int_0^t e^{a\lr{t-\tau}}\,d\tau = \varepsilon\frac{b}{a}\lr{e^{at}-1},
    \end{align*}
    where we used that $~y(0)=~0$ since the initial condition of the trajectories is shared. Therefore, the $L^2$ error in time can be approximated by 
    \begin{align*}
        \left\|~y\right\|^2 &= \int_0^T \nn{~y}^2\,dt \leq \varepsilon^2\frac{b^2}{a^2}\lr{e^{2aT}-2e^{aT}+T+1},
    \end{align*}
    establishing the conclusion.
\end{proof}

\section{Experimental and Implementation Details}\label{app:experiments}
This Appendix records additional details related to the numerical experiments in Section~\ref{sec:exps}.  For each benchmark problem, a set of trajectories is manufactured given initial conditions by simulating ODEs with known metriplectic structure.  \KL{The metriplectic forms of these systems are given in the subsections below.}  For the experiments in Table~\ref{tab:tgc&tdp}, only the observable variables are used to construct datasets, since entropic information is assumed to be unknown. Algorithm~\ref{alg:train_exp_set_one} summarizes the training of the dynamics models used for comparison with NMS.

\begin{algorithm}[htb]
   \caption{Training dynamics models}
   \label{alg:train_exp_set_one}
\begin{algorithmic}[1]
   \STATE {\bfseries Input:} snapshot data $~X\in\mathbb{R}^{n\times n_s}$, each column $~x_s=~x\lr{t_s,~\mu_s}$, target rank $r\geq 1$
   \STATE Initialize loss $L=0$ and networks with parameters $\Theta$
   \FOR{step in $N_{\mathrm{steps}}$}
   \STATE Randomly draw an initial condition $\lr{t_{0_k},~x_{0_k}}$ where $k\in n_s$
   \STATE $\tilde{~x}_1,...,\tilde{~x}_l=\mathrm{ODEsolve}\lr{~x_{0_k}, \dot{~x}, t_1,...,t_l}$
   \STATE Compute the loss $L\lr{ (~x^{\text{o}}_1,\ldots,~x^{\text{o}}_l), (\tilde{~x}^{\text{o}}_0,\ldots,\tilde{~x}^{\text{o}}_l)}$ 
   \STATE Update the model parameters $\Theta$ via SGD
   \ENDFOR
\end{algorithmic}
\end{algorithm}



For each compared method, integrating the ODEs is done via the Dormand--Prince method (dopri5)~\citet{dormand1980family} with relative tolerance $10^{-7}$ and absolute tolerance $10^{-9}$. The loss is evaluated by measuring the discrepancy between the ground truth observable states $~x^{\text{o}}$ and the approximate observable states $\tilde{~x}^{\text{o}}$ in the mean absolute error (MAE) and mean squared error (MSE) metrics, \KL{which are defined in the usual way via the $\ell^1,\ell^2$ norms respectively for true data $~x(t_i)$ and approximate data $\tilde{~x}(t_i)$: 
\begin{align*}
    MAE(\tilde{~x},~x) &= \frac{1}{n_t}\sum_{i=1}^{n_t}\nn{\tilde{~x}(t_i)-~x(t_i)}_1, \qquad MSE(\tilde{~x},~x) = \frac{1}{n_t}\sum_{i=1}^{n_t}\nn{\tilde{~x}(t_i)-~x(t_i)}_2.
\end{align*}}
The model parameters $\Theta$ (i.e., the weights and biases) are updated by using Adamax~\citet{kingma2014adam} with an initial learning rate of 0.01. The number of training steps is set as 30,000, and the model parameters resulting in the best performance for the validation set are chosen for testing. 
Specific information related to the experiments in Section~\ref{sec:exps} is given in the subsections below.

For generating the results reported in Table~\ref{tab:tgc&tdp}, we implemented the proposed algorithm in Python 3.9.12 and PyTorch 2.0.0. Other required information is provided with the accompanying code. All experiments are conducted on Apple M2 Max chips with 96 GB memory. To provide the mean and the standard deviation, experiments are repeated three times with varying random seeds for all considered methods. 

\subsection{Two gas containers}\label{app:tgc}

\KL{The metriplectic equations of motion in the two gas containers example are given in terms of the state $~x = \begin{pmatrix}q & p & S_1 & S_2\end{pmatrix}^\intercal$.  With the energy $E(~x) = (1/2m)p^2 + E_1(~x)+E_2(~x)$ and entropy $S(~x) = S_1 + S_2$ defined in Section~\ref{sec:exps}, it follows that $\dot{~x} = ~L\nabla E(~x)+~M(~x)\nabla S$ where $\nabla S = \begin{pmatrix}0 & 0 & 1 & 1\end{pmatrix}^\intercal$ and 
\begin{align*}
    ~L &=
    \begin{pmatrix}
        0 & 1 & 0 & 0 \\
        -1 & 0 & 0 & 0 \\
        0 & 0 & 0 & 0 \\
        0 & 0 & 0 & 0
    \end{pmatrix}, &&~M(~x) =
    \begin{pmatrix}
        0 & 0 & 0 & 0 \\
        0 & 0 & 0 & 0 \\
        0 & 0 & T_1^{-2} & -(T_1T_2)^{-1} \\
         0 & 0 & -(T_1T_2)^{-1} & T_2^{-2}
    \end{pmatrix}, \\
    \nabla E(~x) &= 
    \begin{pmatrix}
        -\lr{\frac{e^{2S_1}}{q^5}}^{\frac{1}{3}} + \lr{\frac{e^{2S_2}}{(2-q)^5}}^{\frac{1}{3}} \\ \frac{3}{2}p \\ T_1(~x) \\ T_2(~x)
    \end{pmatrix},  &&T_i(~x) = \partial_{S_i}E_i(~x).
\end{align*}
}

As mentioned in the body, the two gas container (TGC) problem tests models' predictive capability (i.e., extrapolation in time). To this end, one simulated trajectory is obtained by solving an IVP with a known TGC system and an initial condition, and the trajectory of the observable variables is split into three subsequences, [0, $t_{\text{train}}$], ($t_{\text{train}}, t_{\text{val}}]$, and ($t_{\text{val}},t_{\text{test}}]$ for training, validation, and test with $0<t_{\text{train}}<t_{\text{val}}<t_{\text{test}}$.

In the experiment, a sequence of 100,000 timesteps is generated using the Runge--Kutta 4th-order (RK4) time integrator with a step size 0.001. The initial condition is given as $~x =(1,2,103.2874, 103.2874)$ following \citet{shang2020structure}. The training/validation/test split is defined by $t_{\text{train}} = 20$, $t_{\text{val}}=30$, and $t_{\text{test}}=100$. For a fair comparison, all considered models are set to have a similar number of model parameters, $\sim$2,000. 
The specifications of the network architectures are: 



\begin{itemize}
\item NMS: The total number of model parameters is 1959. The functions $~A_{\mathrm{tri}},~B,~K_{\mathrm{chol}},E,S$ are parameterized as MLPs with the Tanh nonlinear activation function. The MLPs parameterizing $~A_{\mathrm{tri}},~B,~K_{\mathrm{chol}},E$ are specified as 1 hidden layer with 10 neurons, and the on parameterizing $S$ is specified as 3 hidden layers with 25 neurons. 
\item NODE: The total number of model parameters is 2179. The black-box NODE is parameterized as an MLP with the Tanh nonlinear activation function, 4 hidden layers and 25 neurons.
\item SPNN: The total number of model parameters is 1954. The functions $E$ and $S$ are parameterized as MLPs with the Tanh nonlinear activation function; each MLP is specified as 3 hidden layers and 20 neurons. The two 2-tensors defining $~L$ and $~M$ are defined as learnable $3\times3$ matrices. 
\item GNODE: The total number of model parameters is 2343. The functions $E$ and $S$ are parameterized as MLPs with the Tanh nonlinear activaton function; each MLP is specified as 2 hidden layers and 30 neurons. The matrices and 3-tensors required to learn $~L$ and $~M$ are defined as learnable $3\times 3$ matrices and $3\times3\times3$ tensor.
\item GFINN: The total number of model parameters is 2065. The functions $E$ and $S$ are parameterized as MLPs with Tanh nonlinear activation function; each MLP is specified as 2 hidden layers and 20 neurons. The matrices to required to learn $~L$ and $~M$ are defined as $K$ learnable $3\times 3$ matrices, where $K$ is set to 2. 
\end{itemize}

\subsection{Thermoelastic double pendulum}
The equations of motion in this case are given for $1\leq i \leq 2$ as
\begin{align*}
    \dot{~q}_i &=\frac{~p_i}{m_i}, \quad \dot{~p}_i = -\partial_{~q_i}\lr{E_1(~x)+E_2(~x)}, \quad \dot{S}_1 = \kappa\lr{T_1^{-1}T_{2}-1}, \quad \dot{S}_2 = \kappa\lr{T_1T_2^{-1}-1},
\end{align*}
where $\kappa>0$ is a thermal conductivity constant (set to 1), $m_i$ is the mass of the $i^{\mathrm{th}}$ spring (also set to 1) and $T_i = \partial_{S_i}E_i$ is its absolute temperature. In this case, $~q_i,~p_i\in\mathbb{R}^2$ represent the position and momentum of the $i^{\mathrm{th}}$ mass, while $S_i$ represents the entropy of the $i^{\mathrm{th}}$ pendulum.  As before, the total entropy $S(~x)=S_1+S_2$ is the sum of the entropies of the two springs, while defining the internal energies 
\begin{align*}
    E_i(~x) &= \frac{1}{2}\lr{\ln\lambda_i}^2 + \ln\lambda_i + e^{S_i-\ln\lambda_i}-1, \quad \lambda_1 = \nn{~q_i}, \quad \lambda_2 = \nn{~q_2-~q_1},
\end{align*}
leads to the total energy $E(~x) = (1/2m_1)\nn{~p_1}^2+(1/2m_2)\nn{~p_2}^2 + E_1(~x) + E_2(~x).$
\KL{This system is metriplectic with matrix fields $~L,~M$, given by
\begin{align*}
    ~L(~x) = 
    \begin{pmatrix}
    ~0 & ~0 & ~I & ~0 & ~0 \\
    ~0 & ~0 & ~0 & ~I & ~0 \\
    -~I & ~0 & ~0 & ~0 & ~0 \\
    ~0 & -~I & ~0 & ~0 & ~0 \\
    ~0 & ~0 & ~0 & ~0 & ~0
    \end{pmatrix}, \qquad 
    ~M(~x) = 
    \begin{pmatrix}
    ~0 & ~0 & ~0 & ~0 & 0 & 0 \\
    ~0 & ~0 & ~0 & ~0 & 0 & 0 \\
    ~0 & ~0 & ~0 & ~0 & 0 & 0 \\
    ~0 & ~0 & ~0 & ~0 & 0 & 0 \\
    0 & 0 & 0 & 0 & T_1^{-1}T_2(~x) & -1 \\
    0 & 0 & 0 & 0 & -1 & T_1T_2^{-1}(~x)
    \end{pmatrix},
\end{align*}
where $T_i = \partial_{S_i}E_i$.}

The thermoelastic double pendulum experiment tests model prediction across initial conditions.  In this case, 100 trajectories are generated by varying initial conditions that are randomly sampled from [0.1,1.1] $\times$ [-0.1,0.1] $\times$ [2.1, 2.3] $\times$ [-0.1,0.1] $\times$ [-1.9,2.1] $\times$ [0.9,1.1] $\times$ [-0.1, 0.1] $\times$ [0.9,1.1] $\times$ [0.1,0.3] $\subset\mathbb{R}^{10}$. Each trajectory is obtained from the numerical integration of the ODEs using an RK4 time integrator with step size 0.02 and the final time $T=40$, resulting in the trajectories of length 2,000. The resulting 100 trajectories are split into 80/10/10 for training/validation/test sets. For a fair comparison, all considered models are again set to have similar number of model parameters, $\sim$2,000.  The specifications of the network architectures are: 


\begin{itemize}
\item NMS: The total number of model parameters is 2201. The functions $~A,~B,~K,E,S$ are parameterized as MLPs with the Tanh nonlinear activation function. The MLPs parameterizing are specified as 1 hidden layer with 15 neurons.
\item NODE: The total number of model parameters is 2005. The black-box NODE is parameterized as an MLP with the Tanh nonlinear activation function, 2 hidden layers and 35 neurons.
\item SPNN: The total number of model parameters is 2362. The functions $E$ and $S$ are parameterized as MLPs with the Tanh nonlinear activation function; each MLP is specified as 3 hidden layers and 20 neurons. The two 2-tensors defining $~L$ and $~M$ are defined as learnable $3\times3$ matrices. 
\item GNODE: The total number of model parameters is 2151. The functions $E$ and $S$ are parameterized as MLPs with the Tanh nonlinear activaton function; each MLP is specified as 2 hidden layers and 15 neurons. The matrices and 3-tensors required to learn $~L$ and $~M$ are defined as learnable $3\times 3$ matrices and $3\times3\times3$ tensor.
\item GFINN: The total number of model parameters is 2180. The functions $E$ and $S$ are parameterized as MLPs with Tanh nonlinear activation function; each MLP is specified as 2 hidden layers and 15 neurons. The matrices to required to learn $~L$ and $~M$ are defined as $K$ learnable $3\times 3$ matrices, where $K$ is set to 2. 
\end{itemize}


\section{Additional figures: Two gas containers and thermoelastic double pendulum}

\KL{Figure~\ref{fig:tgc_rhs_trajs} presents the ground-truth and predicted trajectories of $\dot q$ and $\dot p$ for the two gas containers example, providing confirmation that the learned metriplectic vector field $\dot{~x}$ closely approximates the true metriplectic vector field. The values pictured are obtained by evaluating the different dynamics functions (either the ground truth or those learned by the different parameterization) along the trajectories of $q$ and $p$. Figure~\ref{fig:tgc_rhs_trajs} shows that the proposed NMS method produces the most accurate learned velocities $\dot q, \dot p$. 
The MAE between the ground-truth metriplectic vector field and the predicted ones computes to [0.2652, 0.1355, 0.2285, 0.0546] for [SPNN, GNODE, GFINN, NMS], confirming that  NMS yields the lowest error in this quantity.  Note that only the observed state variables are included in this computation as no access to the unobserved variables is assumed during training.} 
\begin{figure}[ht!]
\centering
\includegraphics[width=.5\columnwidth]{figs/tgc_legend.png}\vspace{-2mm}\\
\subfigure[$\dot q$]{\includegraphics[width=0.45\columnwidth]{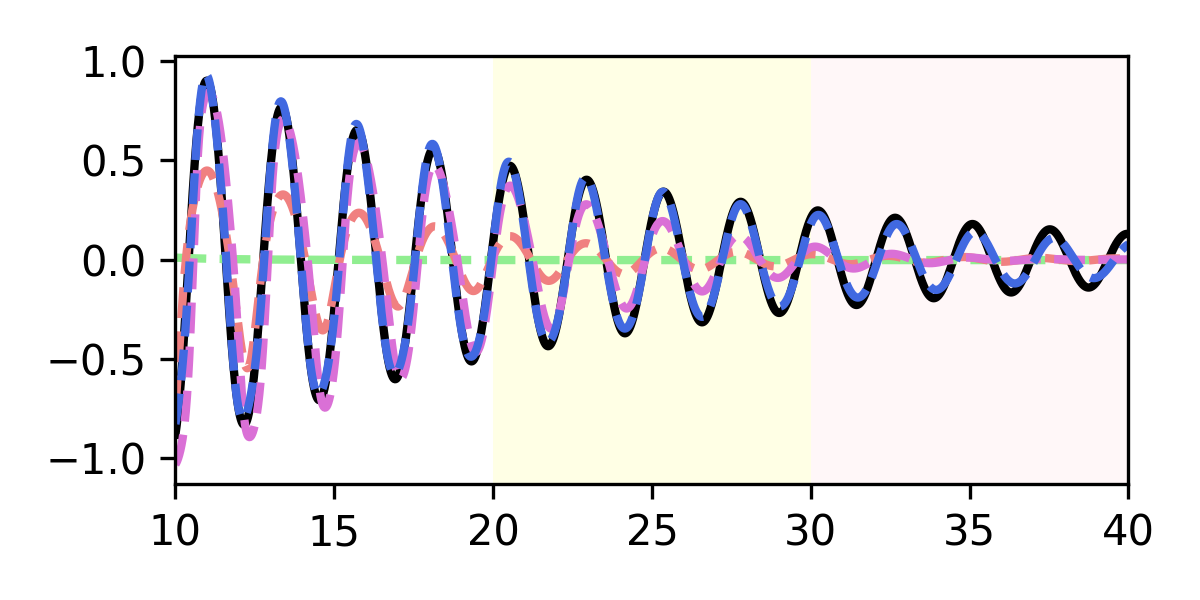}}
\subfigure[$\dot p$]{\includegraphics[width=0.45\columnwidth]{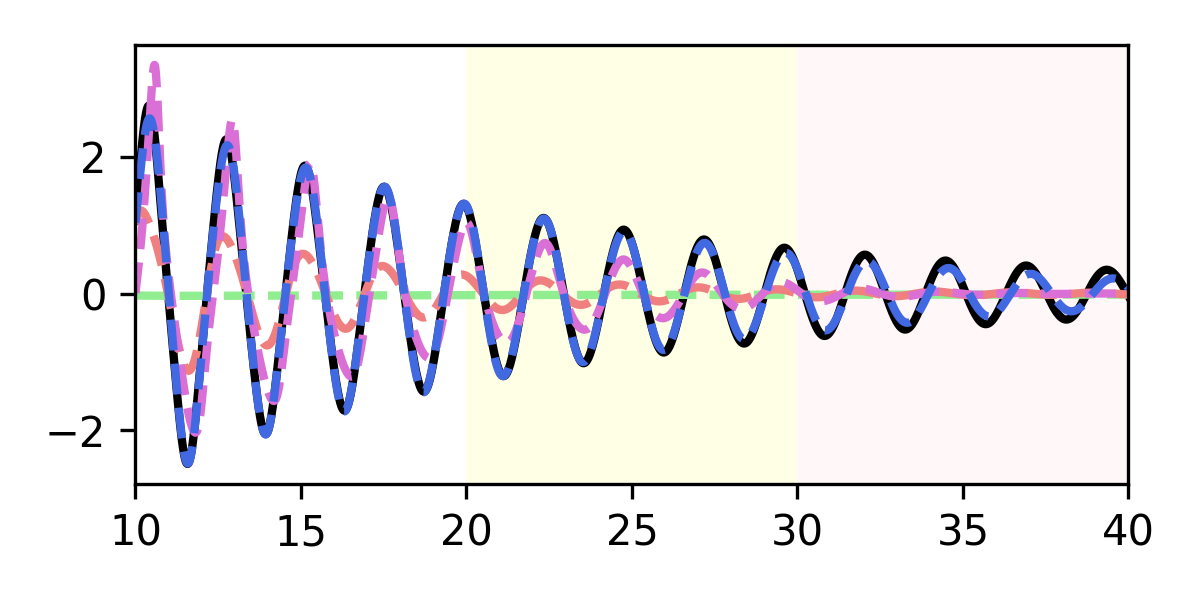}}
\caption{\KL{The ground-truth and predicted trajectories of $\dot q$ and $\dot p$ obtained via evaluating the (learned) dynamics functions.}}\label{fig:tgc_rhs_trajs}
\end{figure}

Figure~\ref{fig:tdp_trajs} depicts trajectories of the ground-truth and the predicted position ($~q$), momentum ($~p$), entropy ($S$), and energy ($E$) for the thermoelastic double pendulum example. Figures show that the trajectories produced by NMS follow the ground-truth trajectories closer than those produced by the baseline methods.  Figures~\ref{fig:tdp_S} and ~\ref{fig:tdp_E} demonstrate that the trajectories of entropy $S$ and energy $E$ produced by GNODE, GFINN, and NMS are thermodynamically consistent (i.e., non-decreasing entropy and conserved energy). 
\begin{figure}[ht!]
\centering
\includegraphics[width=.5\columnwidth]{figs/tgc_legend.png}\vspace{-2mm}\\
\subfigure[$q_{1x}$]{\includegraphics[width=0.245\columnwidth]{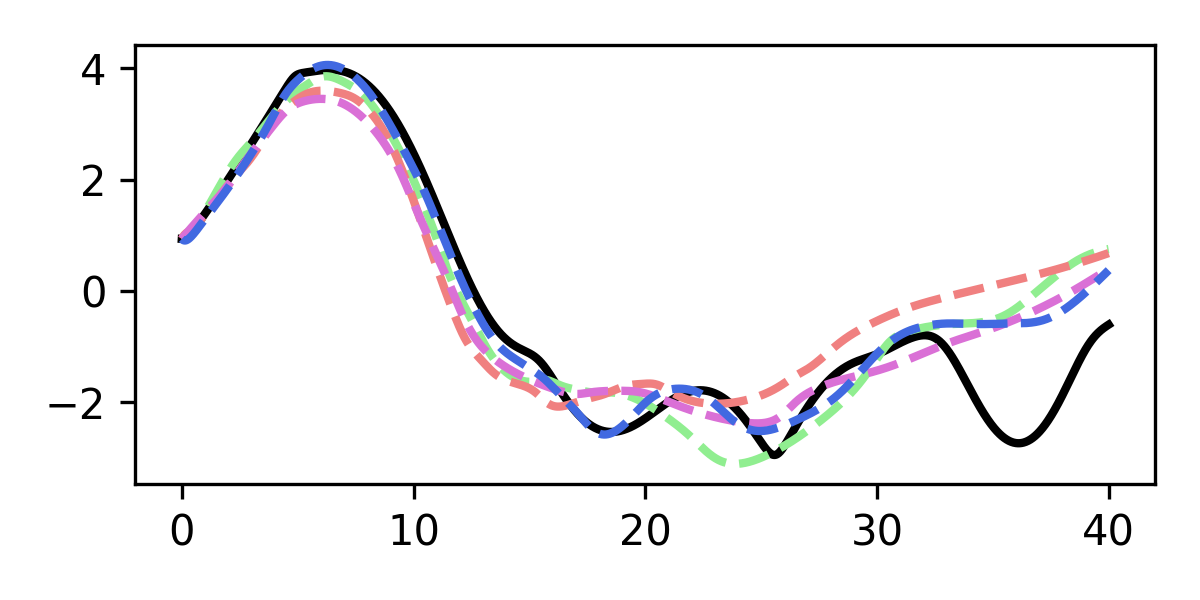}}
\subfigure[$q_{1y}$]{\includegraphics[width=0.245\columnwidth]{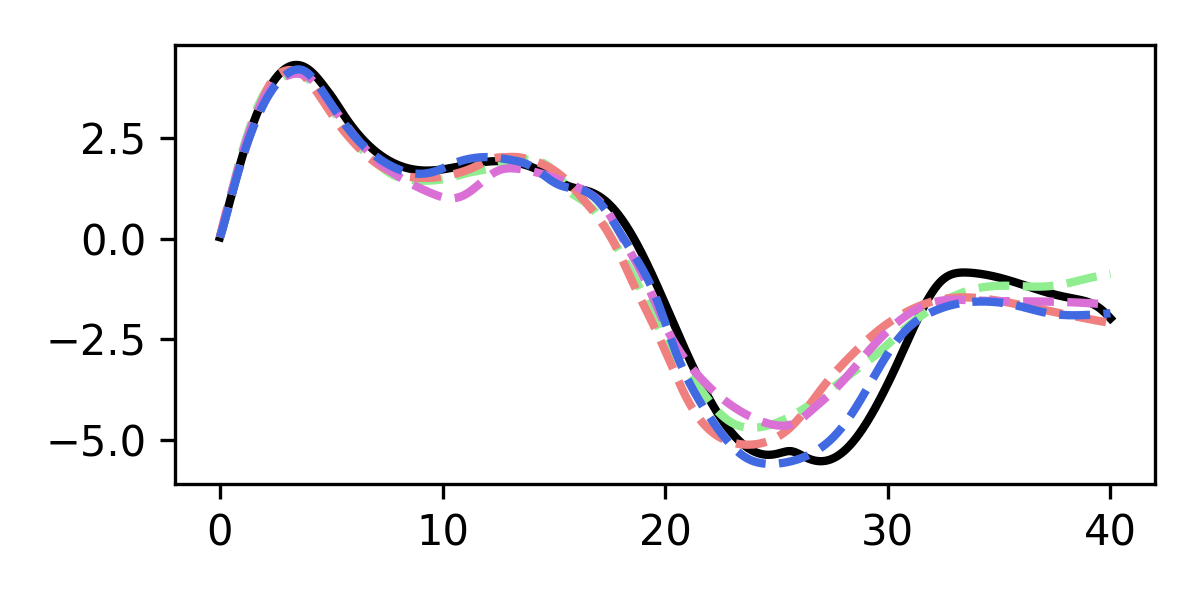}}
\subfigure[$q_{2x}$]{\includegraphics[width=0.245\columnwidth]{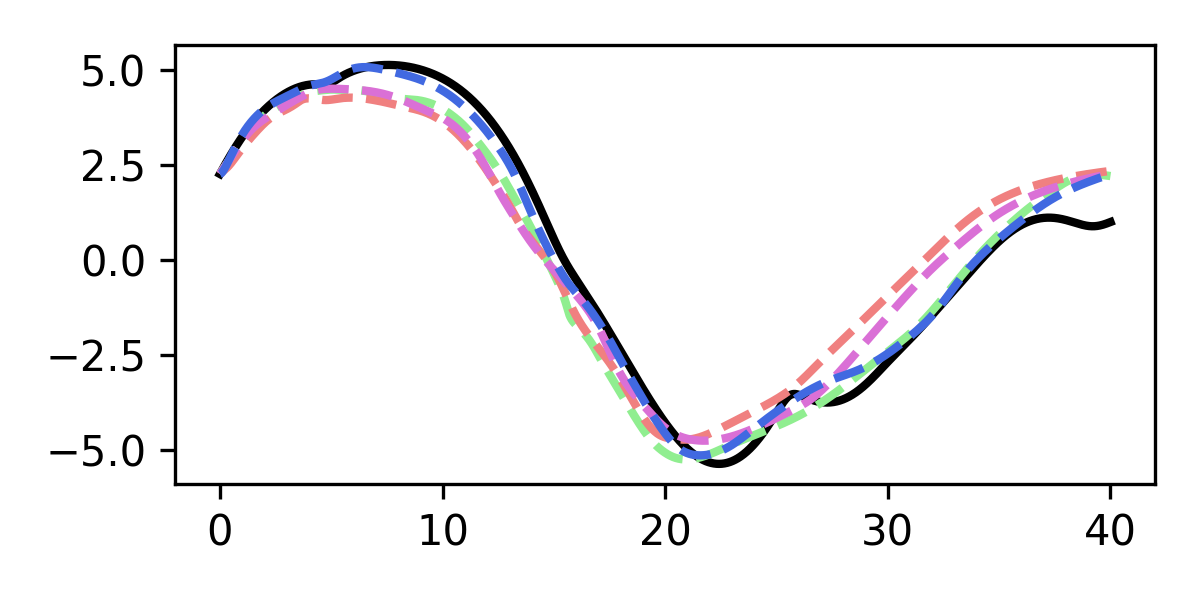}}
\subfigure[$q_{2y}$]{\includegraphics[width=0.245\columnwidth]{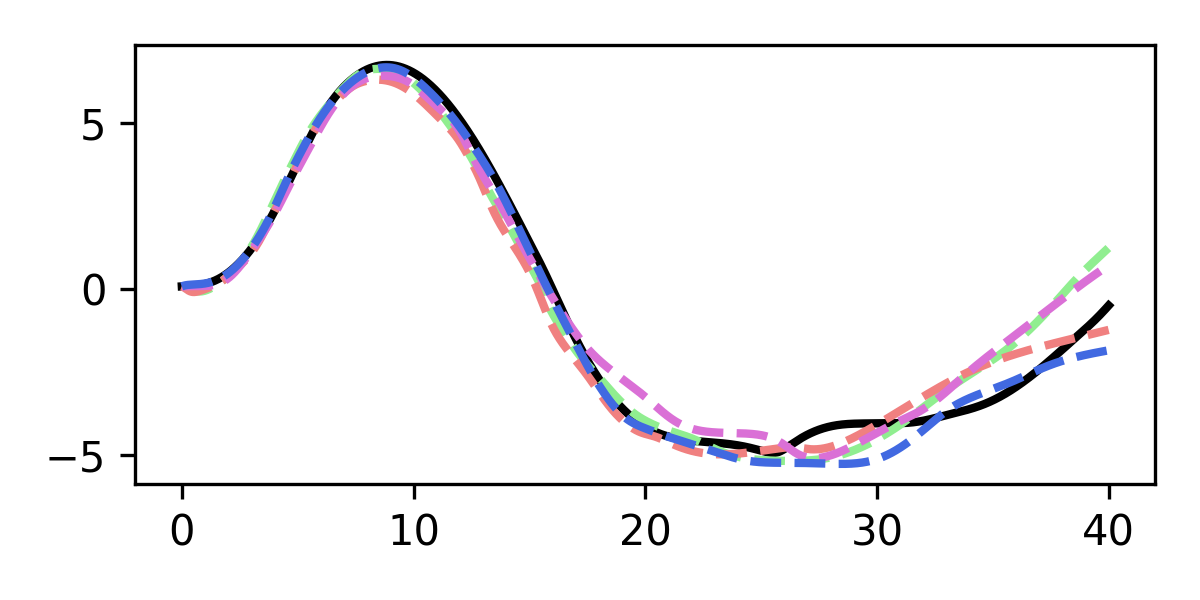}}\\
\subfigure[$p_{1x}$]{\includegraphics[width=0.245\columnwidth]{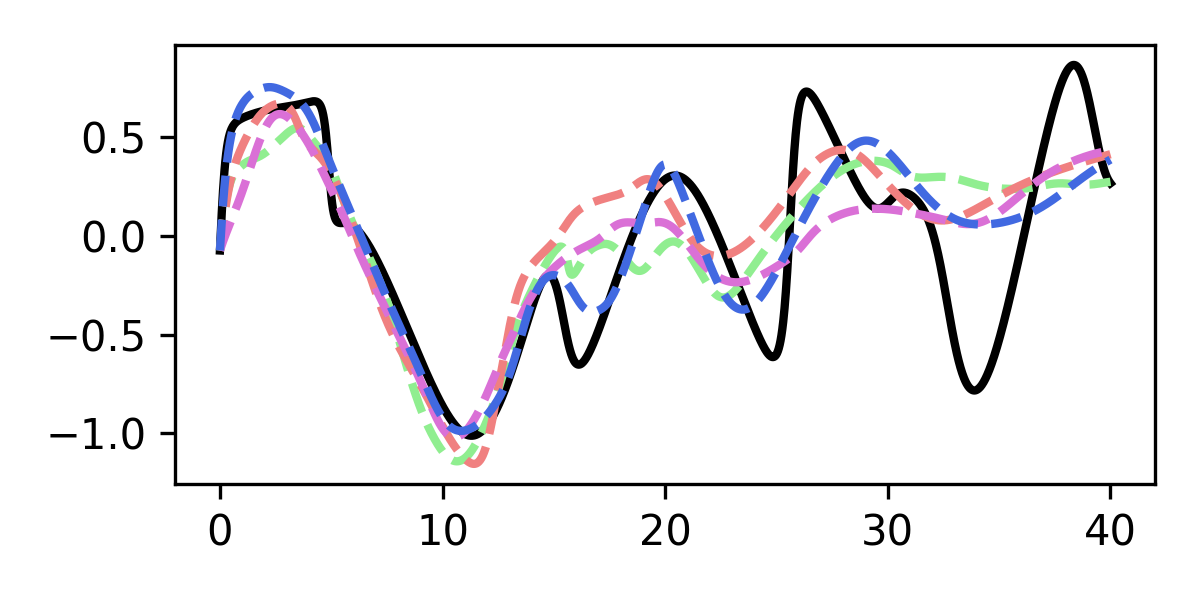}}
\subfigure[$p_{1y}$]{\includegraphics[width=0.245\columnwidth]{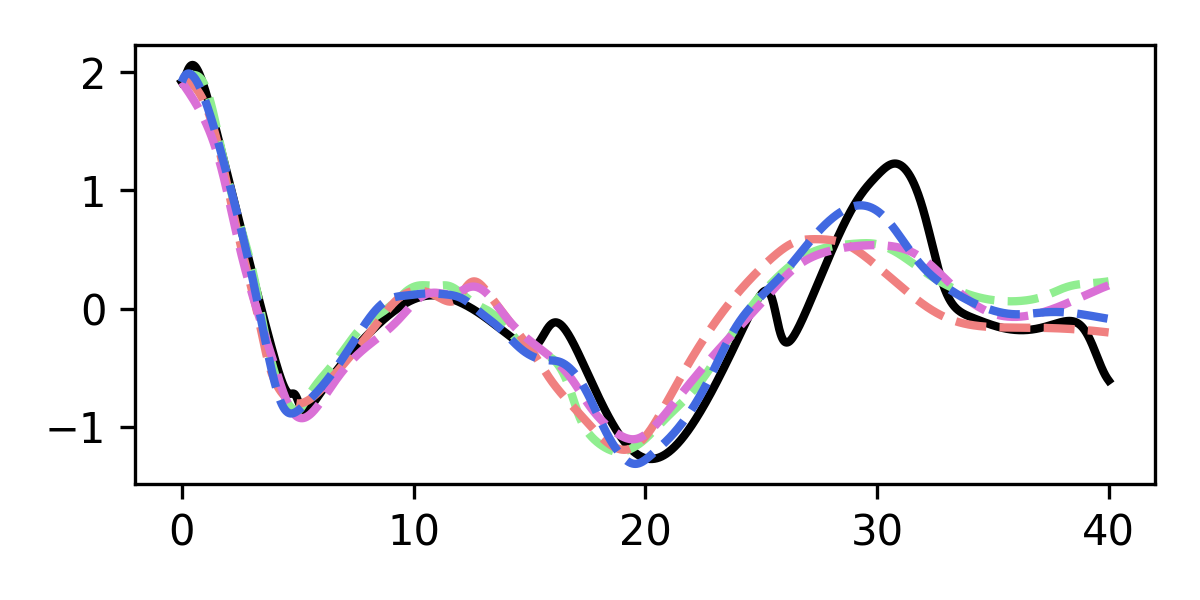}}
\subfigure[$p_{2x}$]{\includegraphics[width=0.245\columnwidth]{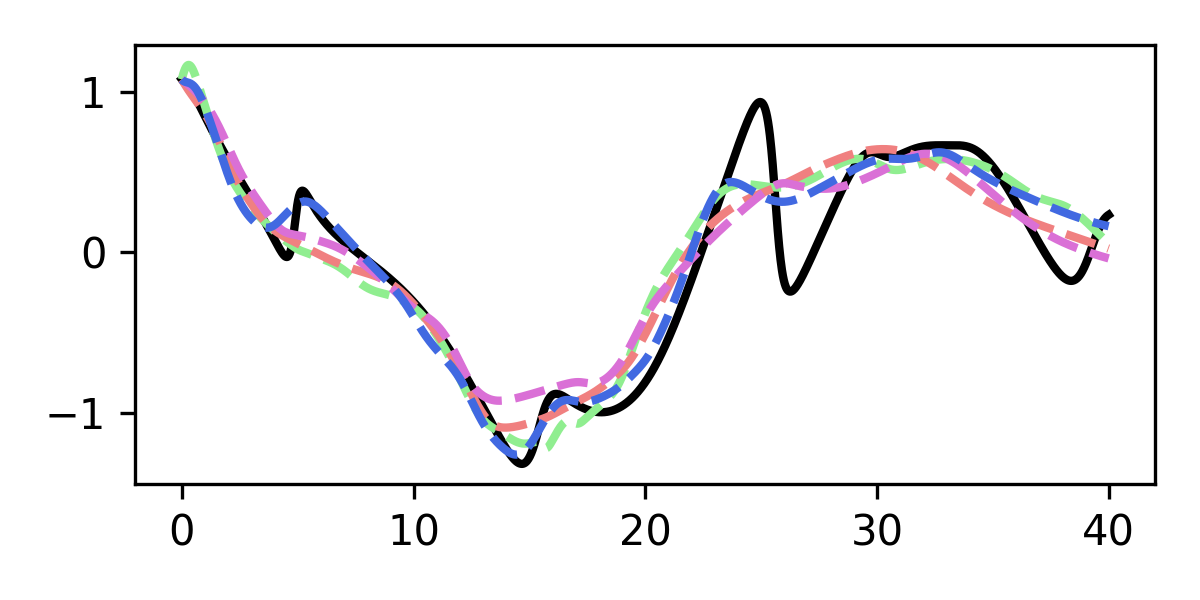}\label{fig:tdp_S}}
\subfigure[$p_{2y}$]{\includegraphics[width=0.245\columnwidth]{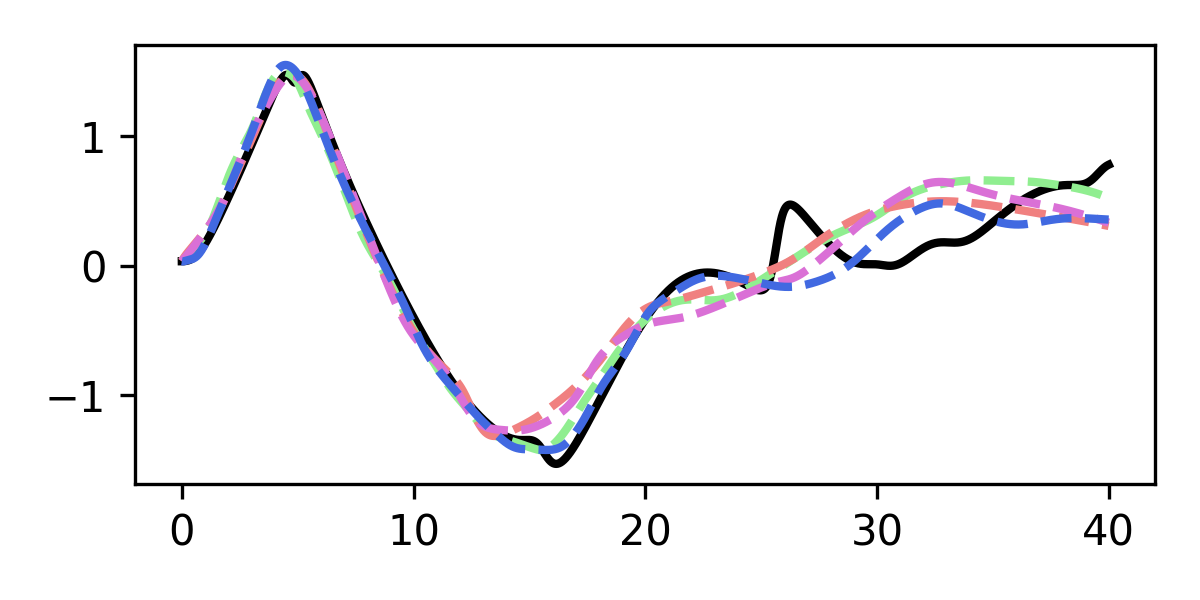}\label{fig:tdp_E}}\\
\subfigure[$S$]{\includegraphics[width=0.4\columnwidth]{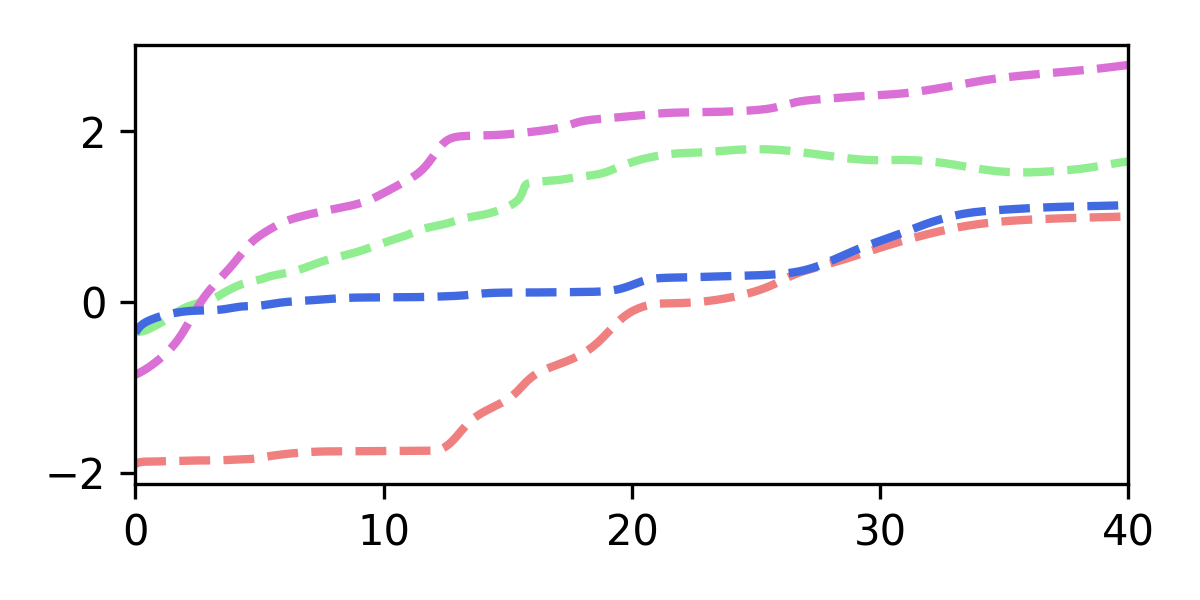}}
\subfigure[$E$]{\includegraphics[width=0.4\columnwidth]{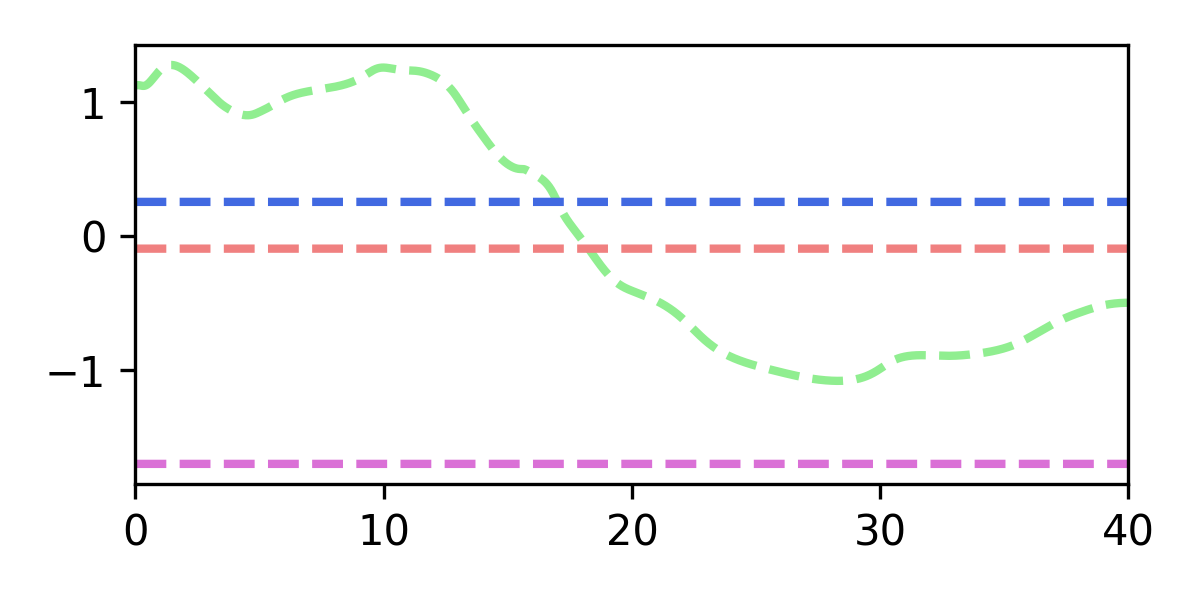}}
\caption{The ground-truth and predicted position, momentum, entropy, and energy for the thermoelastic double pendulum. Figures illustrate that the trajectories generated by NMS follow the ground-truth ones closer than those of other methods do.}\label{fig:tdp_trajs}
\end{figure}

\section{Additional experiment: Damped nonlinear oscillator}\label{app:dno}
Consider a damped nonlinear oscillator of variable dimension with state $~x = \begin{pmatrix}~q&~p&S\end{pmatrix}^\intercal$, whose motion is governed by the metriplectic system
\begin{align*}
    \dot{~q} = \frac{~p}{m}, \quad \dot{~p} = k\sin{~q}-\gamma ~p, \quad \dot{S}=\frac{\gamma \nn{~q}^2}{mT}.
\end{align*}
Here $~q,~p\in\mathbb{R}^n$ denote the position and momentum of the oscillator, $S$ is the entropy of a surrounding thermal bath, and the constant parameters $m,\gamma,T$ are the mass, damping rate, and (constant) temperature.  This leads to the total energy $ E(~x) = (1/2m)\nn{~p}^2-k\cos{~q}+TS,$
which is readily seen to be constant along solutions $~x(t)$.  \KL{This system has a metriplectic expression in terms of $E,S$ and the matrix fields
\begin{align*}
    ~L(~x) = 
    \begin{pmatrix}
    ~0 & -~I & 0 \\
    -~I & ~0 & 0 \\
    0 & 0 & 0
    \end{pmatrix}, \qquad ~M(~x) = 
    \begin{pmatrix}
        ~0 & ~0 & 0 \\
        ~0 & ~0 & 0 \\
        0 & 0 & \frac{\gamma|~q|^2}{mT}.
    \end{pmatrix}
\end{align*}
}

It is now verified that NMS can accurately and stably predict the dynamics of a nonlinear oscillator $~x = \begin{pmatrix}~q&~p&S\end{pmatrix}^\intercal$ in the case that $n=1,2$, both when the entropy $S$ is observable as well as when it is not.  As before, the task considered is prediction in time, although all compared methods NODE, GNODE, and $\text{NMS}_{\text{known}}$ are now trained on full state information from the training interval, and test errors are computed over the full state $~x$ on the extrapolation interval $(t_{\mathrm{valid}}, t_{\mathrm{test}}]$, which is $150\%$ longer than the training interval.  In addition, another NMS model, $\text{NMS}_{\mathrm{diff}}$, was trained using only the partial state information $~x^o = \begin{pmatrix}~q,~p\end{pmatrix}^\intercal$and tested under the same conditions, with the initial guess for $~x^u$ generated as in Appendix~\ref{app:diffusion}.  
As can be seen in Table \ref{tab:dno}, NMS is more accurate than GNODE or NODE in both the 1-D and 2-D nonlinear oscillator experiments, improving on previous results by up to two orders of magnitude.  Remarkably, NMS produces more accurate entropic dynamics even in the case where the entropic variable $S$ is unobserved during NMS training and observed during the training of other methods. This illustrates another advantage of the NMS approach: because of the reasonable initial data for $S$ produced by the diffusion model, the learned metriplectic system produced by NMS remains performant even when metriplectic governing equations are unknown and only partial state information is observed.


\begin{table}[htb]
\centering
\caption{Experimental results for the benchmark problems with respect to MSE and MAE. The best scores are in boldface.}
\label{tab:dno}
\vspace{0.5pc}
\begin{tabular}{ccccccc}
        \toprule
        \multirow{2}{*}{} & \multicolumn{2}{c}{1-D D.N.O.} &\multicolumn{2}{c}{T.G.C.} &\multicolumn{2}{c}{2-D D.N.O.} \\ 
         & MSE & MAE & MSE & MAE & MSE & MAE \\ \midrule
        $\textbf{NMS}_{\mathrm{diff}}$ & .0170 & .1132 & .0045 & .0548 & .0275 & .1456 \\ 
        $\textbf{NMS}_{\mathrm{known}}$ & .0239 & .1011 & .0012 & .0276 & .0018 & .0357 \\
        \midrule 
        NODE & .0631 & .2236 & .0860 & .2551 & .0661 & .2096 \\ 
        GNODE & .0607 & .1976 & .0071 & .0732 & .2272 & .4267 \\ \bottomrule
    \end{tabular}%
\end{table}

To describe the experimental setup precisely,  data is collected from a single trajectory with initial condition as $~x = (~2,~0,0)$ following ~\citet{lee2021machine}. The path is calculated at 180,000 steps with a time interval of 0.001, and is then split into training/validation/test sets as before using $t_{\text{train}}=60$, $t_{\text{val}}=90$ and $t_{\text{test}}=180$.  Specifications of the networks used for the experiments in Table~\ref{tab:dno} are:


\begin{itemize}
    \item NMS: The total number of parameters is 154. The number of layers for $~A_{\mathrm{tri}},~B,~K_{\mathrm{chol}},E,S$ is selected from \{1,2,3\} and the number of neurons per layer from \{5,10,15\}. The best hyperparameters are 1 hidden layer with 5 neurons for each network function.
    \item GNODE: The total number of model parameters is 203. The number of layers and number of neurons for each network is chosen from the same ranges as for NMS. The best hyperparameters are 1 layer with 10 neurons for each network function.
    \item NODE: The total number of model paramters is 3003. The NODE architecture is formed by stacking MLPs with Tanh activation functions. The number of blocks is chosen from \{3,4,5\} and the number of neurons of each MLP from \{30,40,50\}. The best hyperparameters are 4 and 30 for the number of blocks and number of neurons, respectively.
\end{itemize}





\KL{\section{Additional experiment: Hamiltonian neural networks for two gas container}}
\KL{As briefly mentioned in the main body, from the definition of metriplectic dynamics, restricting $M=0$ leads to Hamiltonian systems. In this section, we present the results of training Hamiltonian systems only on the observable variables $(q,p)$ from the two gas container example problems. We use the same dataset used in Section~\ref{sec:tgc} and the same experimental setup described in Section~\ref{app:tgc}.} 

\KL{For parameterization of the Hamiltonian dynamics, we explicitly consider the canonical Poisson matrix $~L=~J=\begin{bmatrix} 0 & 1 \\ -1 & 0 \end{bmatrix}$ as is standard in Hamiltonian neural networks (HNNs), along with an MLP to parameterize the Hamiltonian function $H = H_{\Theta}(q,p)$, resulting in the dynamics}
\begin{equation}
    \KL{\dot{~x} = \begin{bmatrix}\dot{q} \\ \dot{p}  \end{bmatrix} =  \begin{bmatrix} 0 & 1 \\ -1 & 0  \end{bmatrix} \begin{bmatrix}
        \partial_q H \\ \partial_p H
    \end{bmatrix} = ~L \nabla H.}
\end{equation}

\KL{Figure~\ref{fig:tgc_hnn} presents the trajectories of position and momentum predicted by the trained HNN. It essentially shows that imposing an inductive bias (i.e., energy conservation), which does not match with the underlying physical system (i.e., dissipation), results in catastrophic failure in prediction. Figure~\ref{fig:hamiltonian} confirms that the learned Hamiltonian is kept constant over the numerical simulation.}

\begin{figure}[ht!]
\centering
\subfigure[$q$]{\includegraphics[width=0.275\columnwidth]{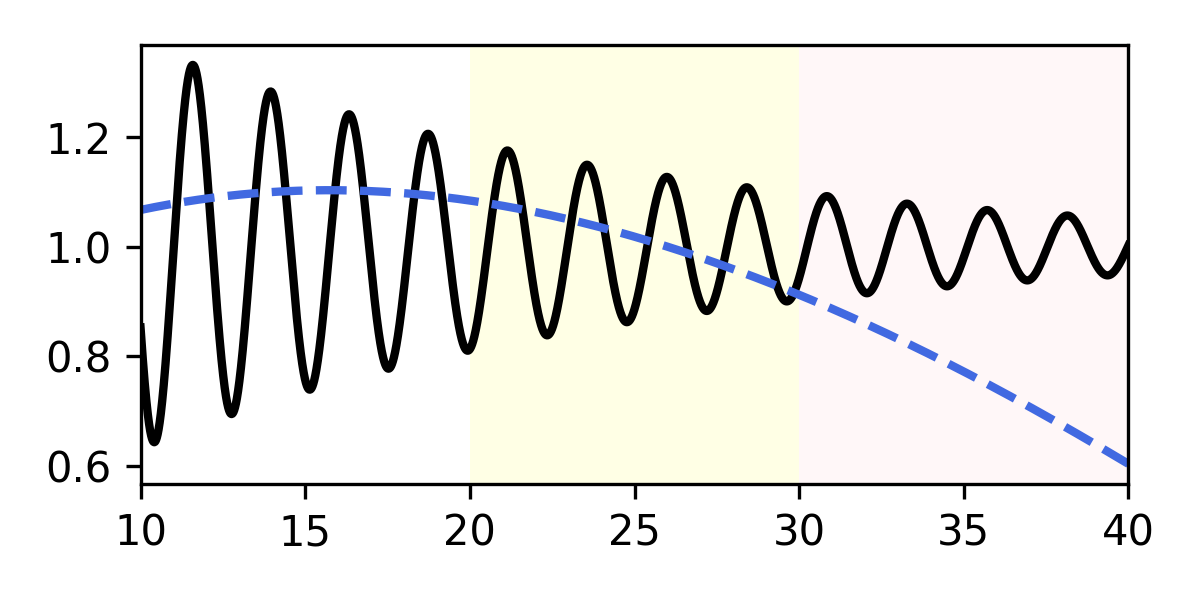}}
\subfigure[$p$]{\includegraphics[width=0.275\columnwidth]{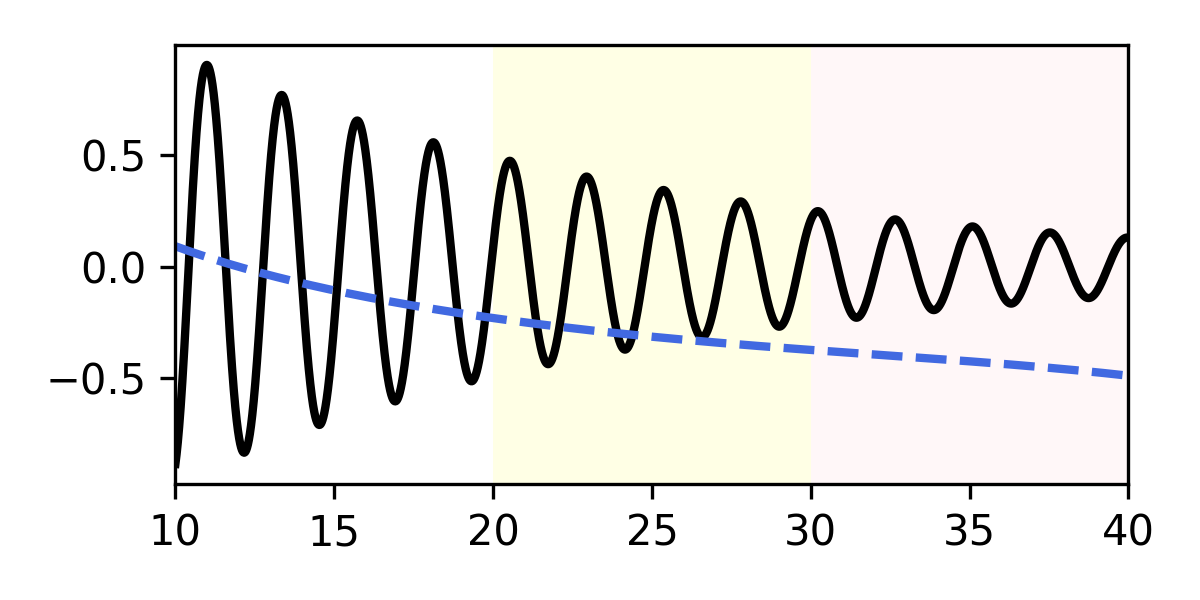}}
\subfigure[$\mathcal H_\Theta$]{\includegraphics[width=0.275\columnwidth]{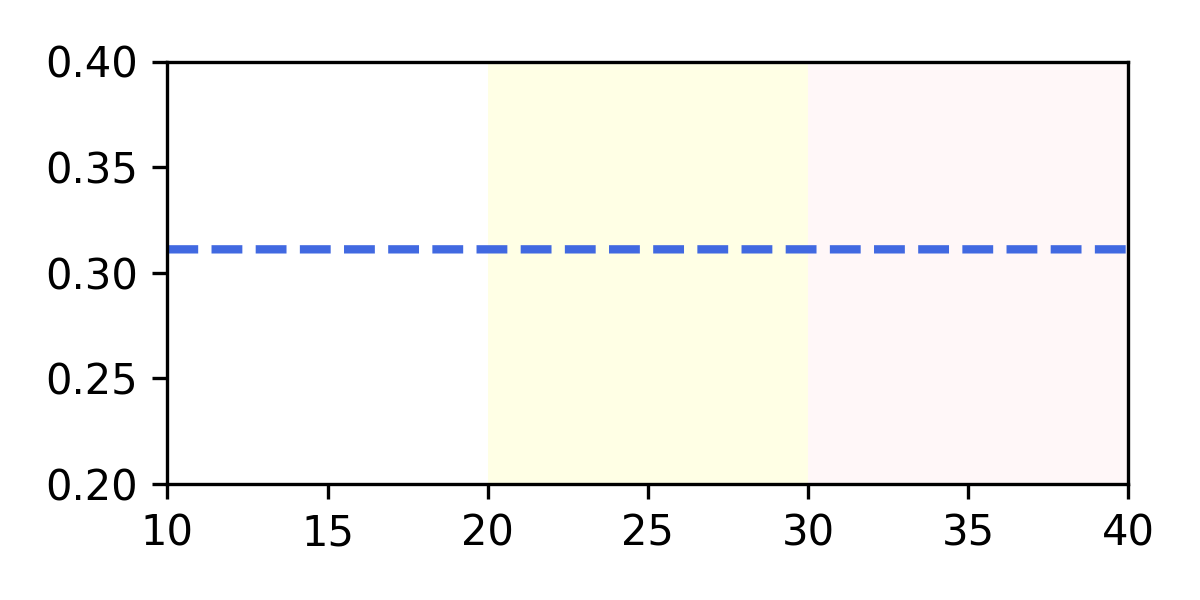}\label{fig:hamiltonian}}
\caption{\KL{[Hamiltonian neural networks] The ground-truth and predicted trajectories of position and momentum for the two gas containers example, along with the learned Hamiltonian. Notice that the incorrect inductive bias of conservation leads to catastrophic failure in network prediction.}}\label{fig:tgc_hnn}
\end{figure}

\KL{\section{Additional experiment:  A damped Thermoelastic rod}}
\KL{As an additional benchmark problem, we consider a damped thermoelastic rod, which discretizes an infinite-dimensional metriplectic system. Consider a 1-dimensional elastic rod, with coordinate $s\in [0,l]$, which evolves following the dynamics defined as:}
\KL{\begin{equation}\label{eq:ter_dyn}
    \begin{bmatrix}
        \dot q\\
        \dot p\\
        \dot S
    \end{bmatrix} = 
    \begin{bmatrix}
    \frac{p}{m}    \\
    V'(q) - \gamma \frac{p}{m}\\
    \gamma \frac{p^2}{m^2}
    \end{bmatrix}
    = ~L\nabla E(q,p,e) + ~M (q,p,e)\nabla S.
\end{equation}}
\KL{Here, $V$ denotes a given potential function, $\gamma$ denotes a constant controlling the rate of dissipation, $m=m(s)$ denotes the mass density, and $e=e(s)$ denotes the internal energy representing the conversion of mechanical energy into heat. The energy function is given by}
\KL{\begin{equation}
    E(q,p,e) = H(q,p) + S(e) = \int_0^\ell \left( \frac{p(s)^2}{2m(s)} + V(q(s)) \right) + \int_0^{\ell} e(s),
\end{equation}}
\KL{where $H(q,p)$ denotes the Hamiltonian. }

\KL{In numerical computation, this continuous system is discretized, leading to $s$ being represented by a vector $~s \in \mathbb{R}^N$, where $N$ is the number of discretization points. This discretization results in a vectorized representation of $q,p$, denoted as $~q,~p \in \mathbb{R}^{N}$, and a discretized version of Eq.~\eqref{eq:ter_dyn}, where the state variables are $[~q,~p,S]$.  In particular, the resulting metriplectic system is given in terms of $\nabla S =\begin{pmatrix} ~0 & ~0 & 1 \end{pmatrix}$ and 
\begin{align*}
    ~L(~x) = 
    \begin{pmatrix}
        ~0 & ~I & ~0 \\
        -~I & ~0 & ~0 \\
        ~0^\intercal & ~0^\intercal & 0
    \end{pmatrix},\qquad 
    ~M(~x) = \gamma
    \begin{pmatrix}
        ~0 & ~0 & ~0 \\
        ~0 & ~I & -\frac{~p}{m} \\
        ~0 & -\frac{~p^\intercal}{m} & \lr{\frac{\nn{~p}}{m}}^2
    \end{pmatrix},\qquad \nabla E(~x) = 
    \begin{pmatrix}
        ~V'(~q) \\ \frac{~p}{m} \\ 1
    \end{pmatrix}.
\end{align*}
}

\KL{In the experiments, we largely follow the setup used in the two gas container experiments, i.e., testing models' predictive capability through temporal extrapolation. One simulated trajectory is obtained by solving an IVP with a known thermoelastic rod system, and the trajectory of observed variables is then split into three subsequences,  [0, $t_{\text{train}}$], ($t_{\text{train}}, t_{\text{val}}]$, and ($t_{\text{val}},t_{\text{test}}]$ for training, validation, and testing with $0<t_{\text{train}}<t_{\text{val}}<t_{\text{test}}$. In the experiment, we set $t_{\text{train}}=30$,  $t_{\text{val}}=40$, and $t_{\text{test}}=50$. We train the proposed NMS on spatial discretizations of varying size $N=\{50,75,100\}$, which results in systems of dimension $\{101,151,201\}$, respectively.  The performance of NMS is then compared to the performance of GFINNs, which are the current state-of-the-art in metriplectic learning.}

\KL{Figure~\ref{fig:ter_results} shows the ground truth and predicted trajectories of the first and the last components of $q$ and $p$ for the case of $N=50$, where it can be seen that the trajectories predicted by NMS are significantly better aligned with the ground truth trajectories compared to those of GFINN, particularly in the extrapolation regime ($40<t<50$). Figure~\ref{fig:ter_varying_N} shows the results of training GFINN and NMS on thermoelastic rod systems with varying $N$, where GFINN and NMS instances with comparable model sizes are considered. The results suggest that the proposed NMS is parameter efficient, supporting the theoretical conclusions in Section~\ref{sec:formulation}.} 

\begin{figure}[t]
    \centering
    \includegraphics[width=.5\columnwidth]{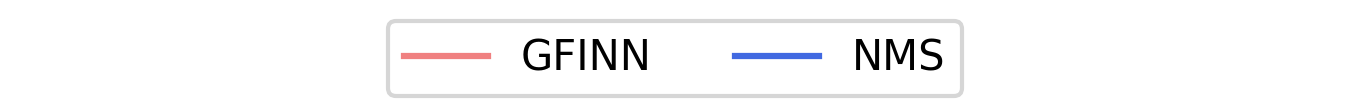}\\
    \begin{minipage}[b]{0.6\textwidth}
        \centering
        \subfigure[$q_1$ (first component)]{\includegraphics[width=0.4\columnwidth]{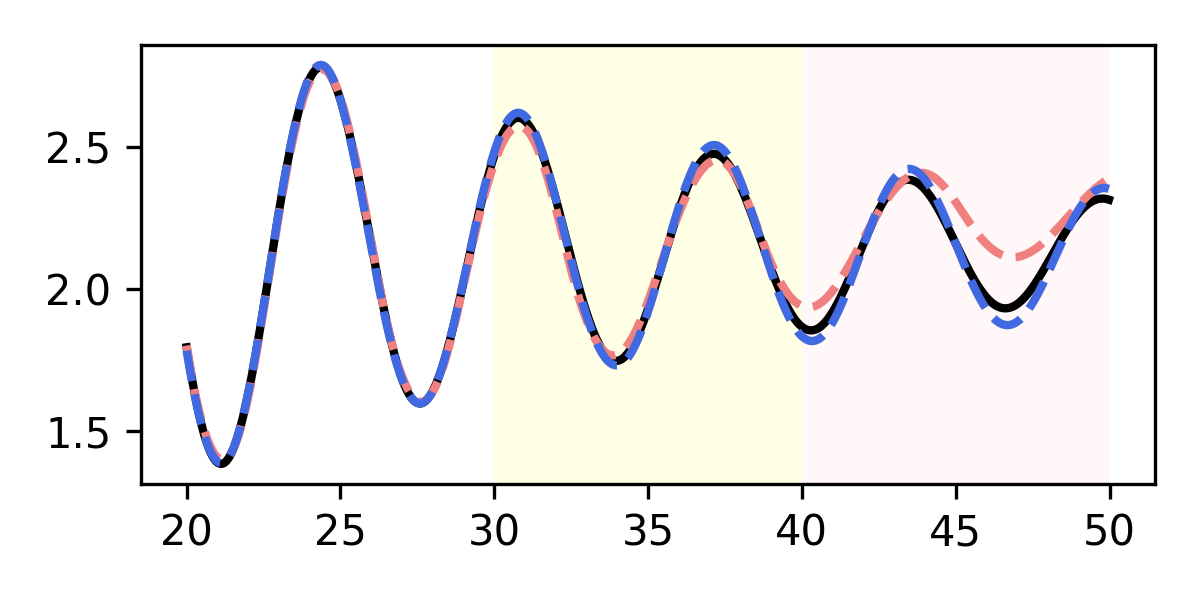}}
        \subfigure[$q_{50}$ (last component)]{\includegraphics[width=0.4\columnwidth]{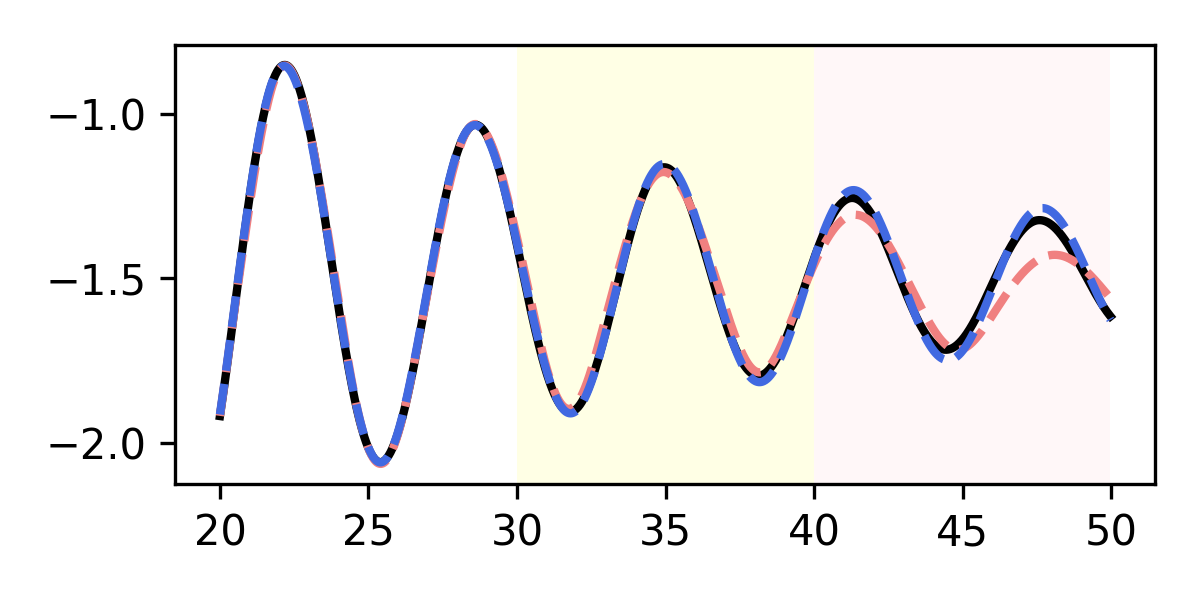}}\\
        \subfigure[$p_1$ (first component)]{\includegraphics[width=0.4\columnwidth]{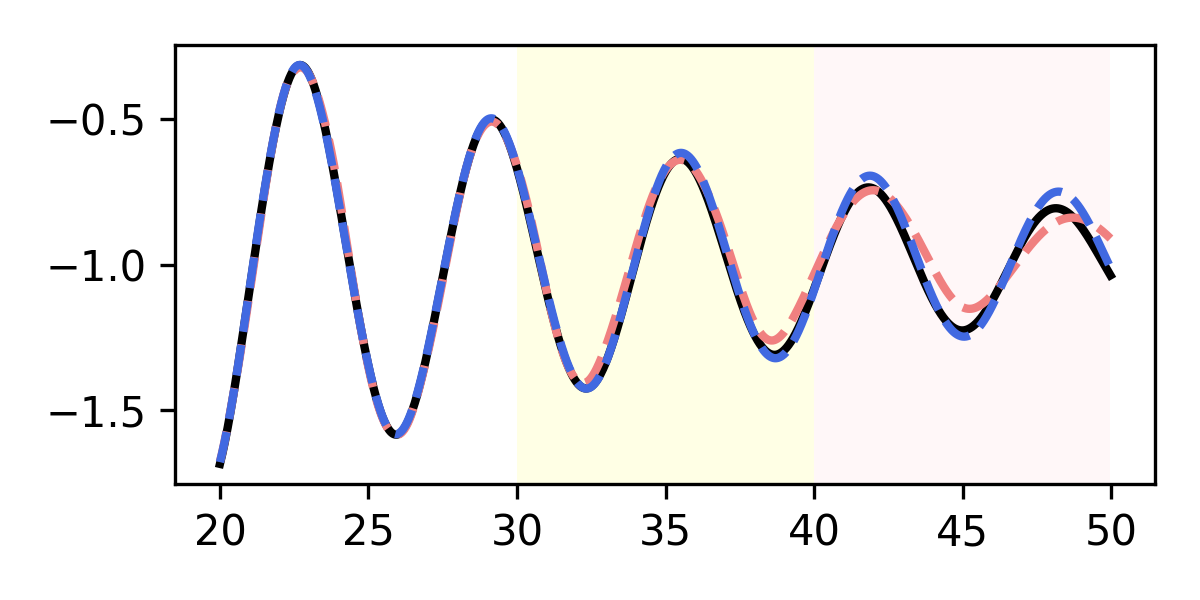}}
        \subfigure[$p_{50}$ (last component)]{\includegraphics[width=0.4\columnwidth]{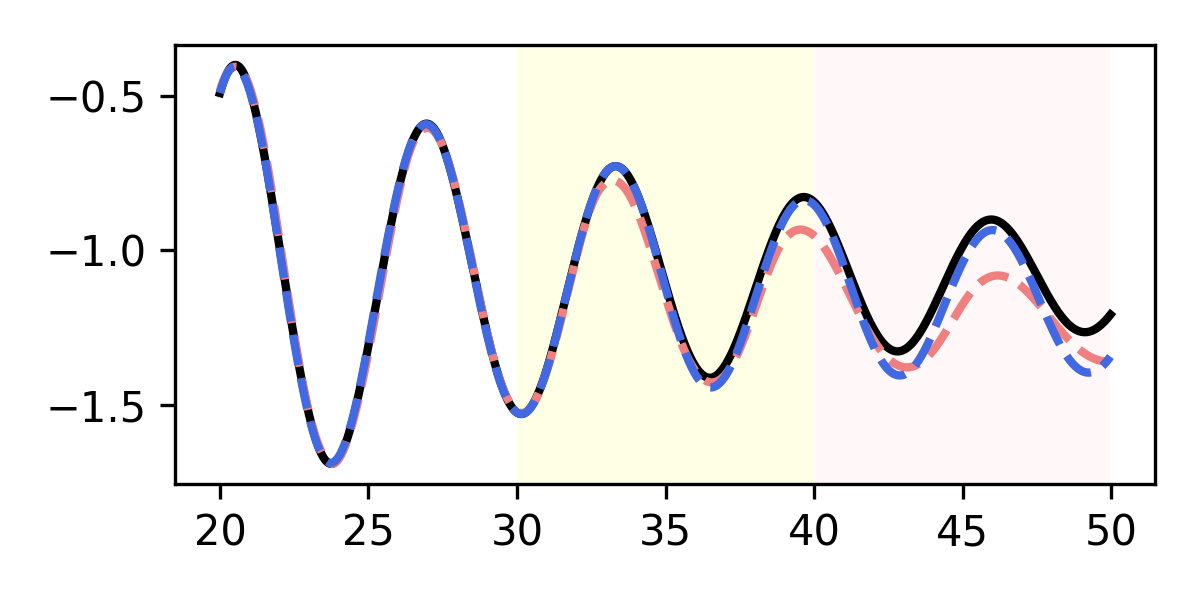}}
        \caption{\KL{The ground-truth and predicted trajectories of positions and momenta for the thermoelastic rod are depicted with $N=50$. }}\label{fig:ter_results}
    \end{minipage}
    \hfill
    \begin{minipage}[b]{0.35\textwidth}
        \includegraphics[width=.8\textwidth]{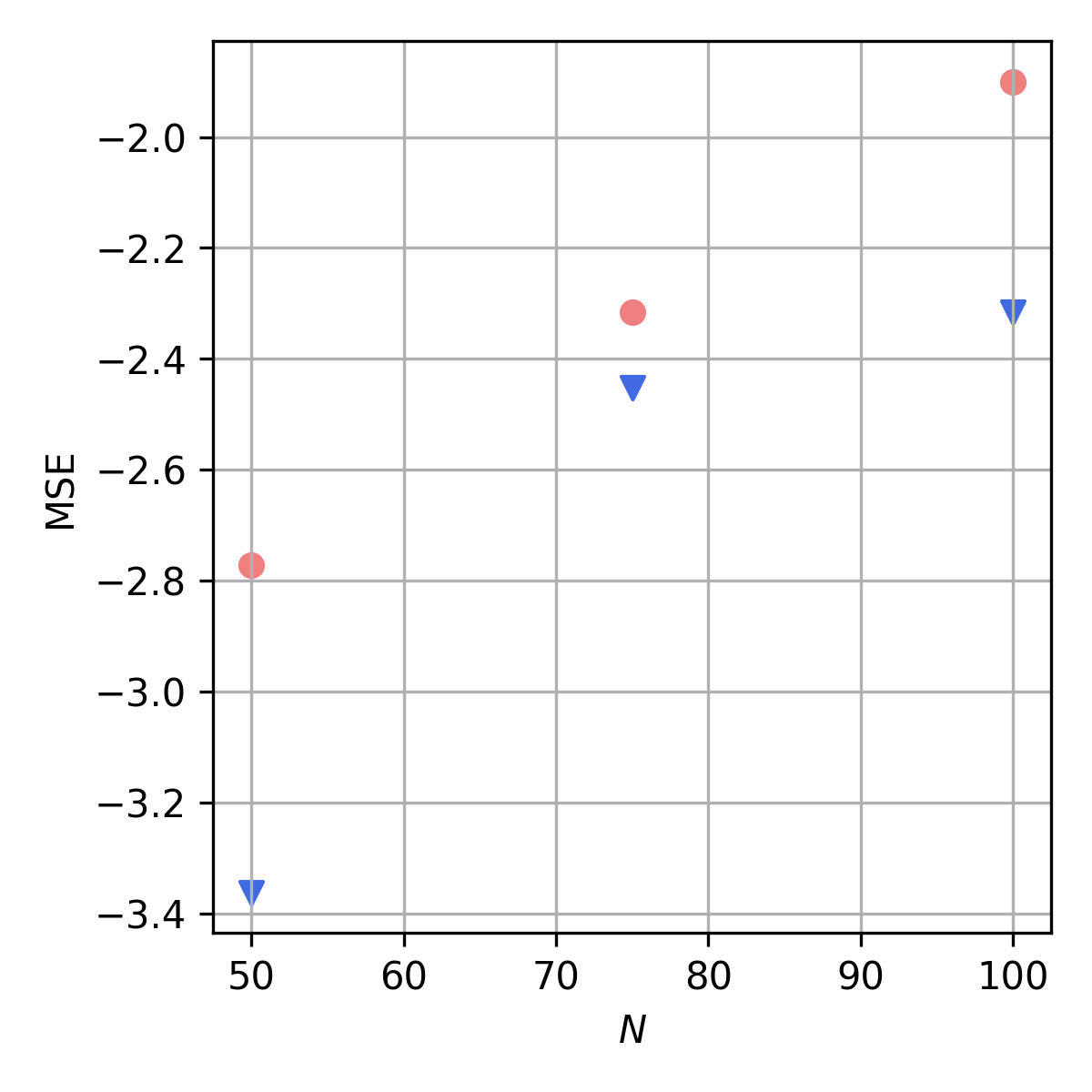}
        \caption{\KL{MSEs (in the $\log_{10}$ scale) of predictions obtained by NMS and GFINN for varying $N$.}}\label{fig:ter_varying_N}
    \end{minipage}
\end{figure}

\section{Scaling study}\label{app:scaling}
To compare the scalability of the proposed NMS architecture design with existing architectures, different realizations of GNODE, GFINN, and NMS are generated by varying the dimension of the state variables, $n=\{1,5,10,15,20,30,50\}$. The specifications of these models (i.e., hyperparameters) are set so that the number of model parameters is kept similar between each method for smaller values of $n$.  For example, for $n=1,5$ the number of model parameters is $\sim$20,000 for each architecture.  The results in Figure~\ref{fig:n_vs_param} confirm that GNODE scales cubically in $n$ while both GFINN and NMS scale quadratically. Note that only a constant scaling advantage of NMS over GFINN can be seen from this plot, since $r$ is fixed during this study.



It is also worthwhile to investigate the computational timings of these three models. Considering the same realizations of the models listed above, i.e., the model instances for varying $n=\{1,5,10,15,20,30,50\}$, 1,000 random samples of states $\{~x^{(i)}\}_{i=1}^{1,000}$ are generated. These samples are then fed to the dynamics function $~L(~x^{(i)})\nabla E(~x^{(i)}) + ~M(~x^{(i)})\nabla S(~x^{(i)})$ for $i=1,\ldots,1000$, and the computational wall time of the function evaluation via PyTorch's profiler API is measured. The results of this procedure are displayed in  Figure~\ref{fig:param_vs_walltime}.  Again, it is seen that the proposed NMSs require less computational resources than GNODEs and GFINNs.

\begin{figure}[h]
    \centering
    \includegraphics[width=.5\columnwidth]{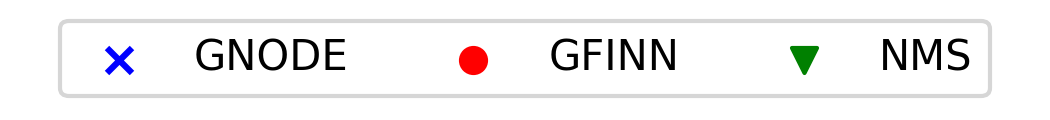}\\
    \subfigure[State dimension $n$ versus number of model parameters]{\includegraphics[width=.33\columnwidth]{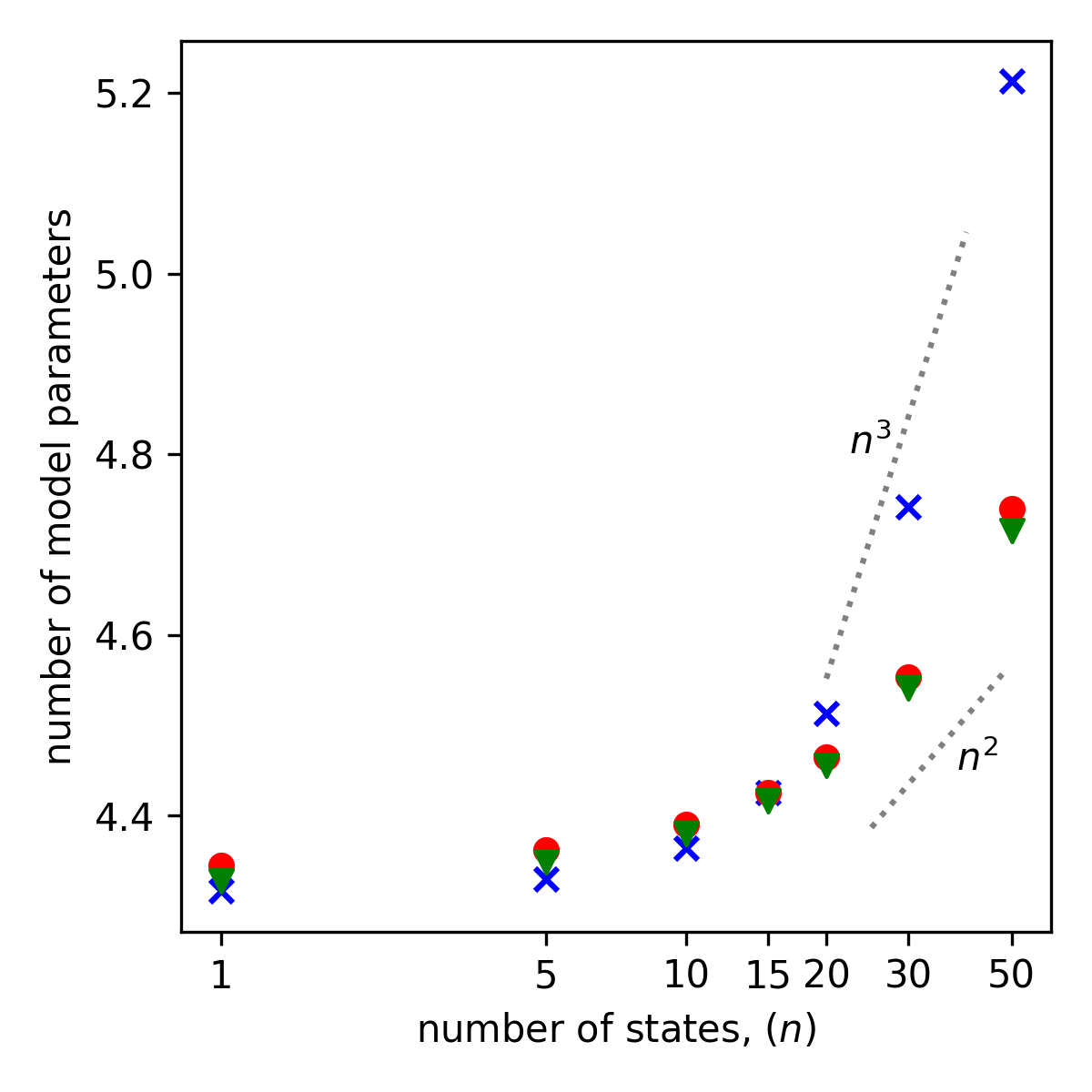}\label{fig:n_vs_param}} \hspace{2pc}
    \subfigure[Model parameters versus wall time in microseconds]{\includegraphics[width=.33\columnwidth]{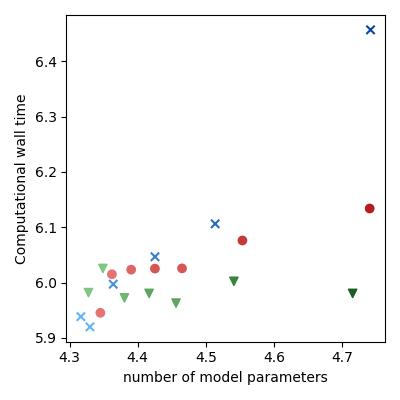}\label{fig:param_vs_walltime}}
    \caption{A study of the scaling behavior of GNODE, GFINN, and NMS.}
    \label{fig:scale}
\end{figure}



\section{Diffusion model for unobserved variables}\label{app:diffusion}
Recent work in ~\citet{lim2023regular} suggests the benefits of performing time-series generation using a diffusion model.  This Appendix describes how this technology is used to generate initial conditions for the unobserved NMS variables in the experiments corresponding to Table~\ref{tab:dno}.  More precisely, we describe how to train a conditional diffusion model which generates values for unobserved variables $~x^u$ given values for the observed variables $~x^o$.

\paragraph{Training and sampling:}
Recall that diffusion models add noise with the following stochastic differential equation (SDE):
\begin{equation*}
    d\textbf{x}(t)=\textbf{f}(t,\textbf{x}(t))dt+g(t)d\textbf{w},  \quad t \in [0,1],
\end{equation*}
where $\textbf{w} \in \mathbb{R}^{\dim(\textbf{x})}$ is a multi-dimensional Brownian motion, $\textbf{f}(t,\cdot): \mathbb{R}^{\dim(\textbf{x})} \rightarrow \mathbb{R}^{\dim(\textbf{x})}$ is a vector-valued drift term, and $g : [0,1] \rightarrow \mathbb{R}$ is a scalar-valued diffusion function.

For the forward SDE, there exists a corresponding reverse SDE:
\begin{equation*}
    d\textbf{x}(t)=[\textbf{f}(t,\textbf{x}(t))-g^2(t)\nabla_{\textbf{x}(t)}{\log p(\textbf{x}(t))}]dt+g(t)d\bar{\textbf{w}},
\end{equation*}
which produces samples from the initial distribution at $t=0$.  This formula suggests that if the score function, $\nabla_{\textbf{x}(t)}{\log p(\textbf{x}(t))}$, is known, then real samples from the prior distribution $p(\textbf{x}) \sim \mathcal{N}(\mu, \sigma^2)$ can be recovered, where $\mu, \sigma$ vary depending on the forward SDE type.

In order for a model $M_\theta$ to learn the score function, it has to optimize the following loss:
\begin{equation*}
L(\theta) = \mathbb{E}_{t}\{\lambda(t)\mathbb{E}_{\textbf{x}(t)}[\left\|M_{\theta}(t,\textbf{x}(t))-{\nabla}_{\textbf{x}(t)}\log p(\textbf{x}(t))\right\|_2^2]\},
\end{equation*}
where $t$ is uniformly sampled over $[0,1]$ with an appropriate weight function $\lambda(t):[0,1]\rightarrow \mathbb{R}$. However, using the above formula is computationally prohibitive.
Thanks to~\citet{vincent2011matching}, this loss can be substituted with the following denoising score matching loss:
\begin{align*}
L^*(\theta) &= \mathbb{E}_{t}\{\lambda(t)\mathbb{E}_{\textbf{x}(0)}\mathbb{E}_{\textbf{x}(t)|\textbf{x}(0)}[\left\|M_{\theta}(t,\textbf{x}(t))-{\nabla}_{\textbf{x}(t)}\log p(\textbf{x}(t)|\textbf{x}(0))\right\|_2^2]\}.
\end{align*}
Since score-based generative models use an affine drift term, the transition kernel $p(\textbf{x}(t)|\textbf{x}(0))$ follows a certain Gaussian distribution~\citet{sarkka2019applied}, and therefore the gradient term ${\nabla}_{\textbf{x}(t)}\log p(\textbf{x}(t)|\textbf{x}(0))$ can be analytically calculated.

\paragraph{Experimental details}
On the other hand, the present goal is to generate unobserved variables $~x^u$ given values for the observed variables $~x^o = (~q,~p)$, i.e., conditional generation. Therefore, our model has to learn the conditional score function, ${\nabla}_{~x^u(t)}\log p(~x^u(t)|~x^o)$.  
For example, in the damped nonlinear oscillator case, $S(t)$ is initialized as a perturbed $t \in [0,1]$, from which the model takes the concatenation of $~q,~p,S(t)$ as inputs and learns conditional the score function $\nabla_{S(t)}\log(S(t)|~q,~p)$.



For the experiments in Table~\ref{tab:dno}, diffusion models are trained to generate $~x^u$ variables on three benchmark problems: the damped nonlinear oscillator, two gas containers, and thermolastic double pendulum. On each problem, representative parameters such as mass or thermal conductivity are varied, with the total number of cases denoted by $N$.  Full trajectory data of length $T$ is then generated using a standard numerical integrator (e.g., dopri5), before it is evenly cut into $\lfloor{T/L}\rfloor$ pieces of length $L$.  Let $V, U$ denote the total number of variables and the number of unobserved variables, respectively.  It follows that the goal is to generate $U$ unobserved variables given $V-U$ observed ones, i.e., the objective is to generate data of shape $(NT/L,L,U)$ conditioned on data of shape $(NT/L,L,V-U)$.  After the diffusion model has been trained for this task, the output data is reshaped into size $(N,T,U)$, which is used to initialize the NMS model.  Note that the NODE and GNODE methods compared to NMS in Table~\ref{tab:dno} use full state information for their training, i.e., $~x^u=\varnothing$ in these cases, making it comparatively easier for these methods to learn system dynamics.

As in other diffusion models e.g. \citet{song2020score}, a U-net architecture is used, modifying 2-D convolutions to 1-D ones and following the detailed hyperparameters described in \citet{song2020score}.  Note the following \textit{probability flow} ODE seen in \citet{song2020score}:
\begin{align*}
    d\textbf{x}(t)=\left[\textbf{f}(t,\textbf{x}(t))-\frac{1}{2}g^2(t)\nabla_{\textbf{x}(t)}{\log p(\textbf{x}(t))}\right]dt,
\end{align*}
Although models trained to mimic the probability flow ODE do not match the perofrmance of the forward SDE's result in the image domain, the authors of \citet{lim2023regular} observe that the probability flow ODE
outperforms the forward SDE in the time-series domain. Therefore, the probability flow ODE is used with the default hyperparameters of \citet{song2020score}. 

\end{document}